\theoremstyle{plain}
\newtheorem{theorem}{Theorem}
\newtheorem{lemma}{Lemma}
\newtheorem{corollary}{Corollary}
\theoremstyle{definition}
\newtheorem{definition}{Definition}
\newtheorem{remark}{Remark}
\newtheorem{assumption}{Assumption}
\newcommand{\Perp}{\perp}
\newcommand{\argmax}{\mathop{\arg\max}}
\newcommand{\diverge}{\to\infty}
\newcommand{\iiddistr}{{\stackrel{\text{\iid}}{\sim}}}
\newcommand{\ones}{\mathbf 1}
\newcommand{\zeros}{\mathbf 0}
\newcommand{\reals}{{\mathbb{R}}}
\newcommand{\supp}{{\rm supp}}
\newcommand{\eexp}{{\rm e}}
\newcommand{\identity}{\mathbf I}
\newcommand{\allones}{\mathbf J}
\newcommand{\diff}{{\rm d}}
\newcommand{\Expect}{\mathbb{E}}
\newcommand{\expect}[1]{\mathbb{E}\left[ #1 \right]}
\newcommand{\eexpect}[1]{\mathbb{E}[ #1 ]}
\newcommand{\pprob}[1]{ \mathbb{P}\{ #1 \} }
\newcommand{\prob}[1]{ \mathbb{P}\left\{ #1 \right\} }
\newcommand{\var}{\mathsf{var}}
\newcommand{\Bern}{{\rm Bern}}
\newcommand{\Binom}{{\rm Binom}}
\newcommand{\eg}{e.g.\xspace}
\newcommand{\ie}{i.e.\xspace}
\newcommand{\iid}{i.i.d.\xspace}
\newcommand{\pth}[1]{\left( #1 \right)}
\newcommand{\qth}[1]{\left[ #1 \right]}
\newcommand{\sth}[1]{\left\{ #1 \right\}}
\newcommand{\norm}[1]{\left\|{#1} \right\|}
\newcommand{\iprod}[2]{\left \langle #1, #2 \right\rangle}
\newcommand{\Iprod}[2]{\langle #1, #2 \rangle}
\newcommand{\indc}[1]{{\mathbf{1}_{\left\{{#1}\right\}}}}
\newcommand{\diag}[1]{\mathsf{diag} \left\{ {#1} \right\} }
\newcommand{\tA}{{\widetilde{A}}}
\newcommand{\tZ}{{\widetilde{Z}}}
\newcommand{\calB}{{\mathcal{B}}}
\newcommand{\calE}{{\mathcal{E}}}
\newcommand{\calG}{{\mathcal{G}}}
\newcommand{\calN}{{\mathcal{N}}}
\newcommand{\calS}{{\mathcal{S}}}
\newcommand{\calX}{{\mathcal{X}}}
\newcommand{\bfd}{\mathbf{d}}
\newcommand{\SDP}{{\rm SDP}\xspace}
\newcommand{\ER}{Erd\H{o}s-R\'enyi\xspace}
\newcommand{\Tr}{\mathsf{Tr}}
\renewcommand{\hat}{\widehat}
\renewcommand{\tilde}{\widetilde}
\begin{document}

\title{Semidefinite Programs for Exact Recovery of a Hidden Community}
\date{\today}
\author{ 
Bruce Hajek \and Yihong Wu \and Jiaming Xu\thanks{
%This research was supported by the National Science Foundation under
%Grant ECCS 10-28464, IIS-1447879, and CCF-1423088, and
%Strategic Research
%Initiative on Big-Data Analytics of the College of Engineering
%at the University of Illinois, and DOD ONR Grant N00014-14-1-0823, and Grant 328025 from the Simons Foundation. 
B.~Hajek and Y.~Wu are with
the Department of ECE and Coordinated Science Lab, University of Illinois at Urbana-Champaign, Urbana, IL, \texttt{\{b-hajek,yihongwu\}@illinois.edu}.
J.~Xu is with the Simons Institute for the Theory of Computing, University of California, Berkeley, Berkeley, CA, 
%Department of Statistics, The Wharton School, University of Pennsylvania, Philadelphia, PA, 
\texttt{jiamingxu@berkeley.edu}.}
}

\maketitle

\begin{abstract}
We study a semidefinite programming (SDP) relaxation of the maximum likelihood estimation
for exactly recovering a hidden community of cardinality $K$ from an $n \times n$ symmetric
data matrix $A$, where for distinct indices $i,j$, $A_{ij} \sim P$ if $i, j$ are both in the community and 
$A_{ij} \sim Q$ otherwise, for two known probability distributions $P$ and $Q$.
We identify a  sufficient
condition  and a  necessary condition for the success of SDP  for the general model.
For both the Bernoulli case ($P={{\rm Bern}}(p)$ and $Q={{\rm Bern}}(q)$ with $p>q$) and the Gaussian case
($P=\mathcal{N}(\mu,1)$ and $Q=\mathcal{N}(0,1)$ with $\mu>0$), 
which correspond to the problem of planted dense subgraph recovery and submatrix
localization respectively,
the general results lead to the following
findings: (1) If $K=\omega( n /\log n)$, SDP attains the information-theoretic recovery limits
with sharp constants; (2) If $K=\Theta(n/\log n)$, SDP is order-wise optimal, but strictly
suboptimal by a constant factor; (3) If $K=o(n/\log n)$ and $K \to \infty$, SDP is order-wise suboptimal.
The same critical scaling for $K$ is found to hold, up to constant factors, for the performance of SDP on the stochastic block model of $n$ vertices partitioned into multiple communities of equal size $K$. A key ingredient in the proof of the necessary condition is a construction of a primal feasible solution based on 
 random perturbation of the true cluster matrix.
\end{abstract}

\section{Introduction}

\subsection{Motivation and problem setup}
Consider the stochastic block model (SBM)~\cite{Holland83} with a single community, where out of $n$ vertices a community consisting
of $K$ vertices are chosen uniformly at random; two vertices are connected by an edge with probability $p$
if they both belong to the community and with probability $q$ if either one of them is not in the community. 
The goal is to recover the community based on observation of the graph, which, when $p>q$, 
is also known as the {\em planted dense subgraph} recovery problem~\cite{McSherry01, ChenXu14, HajekWuXu14,Montanari:15OneComm}. 

In the special case of $p=1$ and $q=1/2$, planted dense subgraph recovery reduces to the widely-studied
planted clique problem, \ie, finding a hidden clique of size $K$
in the \ER random graph $\calG(n,1/2)$.  
It is well-known that the maximum likelihood estimator (MLE), which is computationally intractable, finds any clique of size $K \ge 2(1+\epsilon) \log_2 n$ for any constant $\epsilon>0$; however, existing polynomial-time algorithms, including spectral methods  \cite{Alon98}, message passing \cite{Deshpande12}, and semi-definite programming (SDP) relaxation of MLE \cite{FK00}, are only known to find  a clique of size $K \ge \epsilon \sqrt{n}$. 
In fact, impossibility results for the more powerful $s$-round Lov\'asz-Schrijver relaxations and, more recently, degree-$2r$ sums-of-squares (SOS)  relaxation (with $s=1$ and $r=1$ corresponding to SDP) have been recently obtained in \cite{FK03} and \cite{DeshpandeMontanari15,Meka15,HKP15,RS15}, showing that 
relaxations of constant rounds or degrees lead to order-wise suboptimality even for detecting the clique.
In other words, for the planted clique problem there is a significant gap between the state of the art of polynomial-time  algorithms 
and what is information-theoretically possible. 

In sharp contrast, for sparser graphs and larger community size, SDP relaxations have been shown to achieve the information-theoretic recovery limit up to sharp constants. For 
$p=a \log n/n, q=b \log n/n$ and $K=\rho n$ for fixed constants $a,b>0$ and $0< \rho<1$, the
recent work \cite{HajekWuXuSDP14} identified a sharp threshold $\rho^\ast=\rho^\ast(a,b)$ such that if $ \rho > \rho^\ast$, an SDP relaxation of MLE recovers the hidden community with high probability; if $\rho <\rho^\ast$, exact recovery is information theoretically impossible.
This optimality result of SDP has been extended to 
%SBM with 
multiple communities as long as their sizes scale linearly with the graph size $n$ \cite{Bandeira15,HajekWuXuSDP15,ABBK,perry2015semidefinite,MontanariSen15}.

The dichotomy between the optimality of SDP up to sharp constants in the relatively sparse regime and  the order-wise suboptimality 
of SDP in the dense regime 
prompts us to investigate the following question:
\begin{quote}
\emph{When do SDP relaxations cease to be optimal for planted dense subgraph recovery}?
\end{quote}

In this paper, we address this question under the more general hidden community model considered in  \cite{Deshpande12}. 

%\subsection{Problem setup}
\begin{definition}[Hidden Community Model]  \label{def:model}
 Let $C^*$ be drawn uniformly at random from all subsets of $[n]$ of cardinality $K$.
 Given probability measures $P$ and $Q$ on a common measurable space $\calX$, let $A$ be an $n \times n$ symmetric matrix with zero diagonal
 where for all $1 \le i<j \le n$, $A_{ij}$ are mutually independent, and $A_{ij} \sim P$ if $i,j\in C^*$ and $A_{ij} \sim  Q$ otherwise.
\end{definition}
The distributions $P$ and $Q$ as well as the community size $K$ vary with the matrix size $n$ in general.
In this paper we assume that these model parameters
are known to the  estimator, and focus on exact recovery of the hidden community based on the data matrix $A$, namely, constructing an estimator 
$\hat{C}=\hat{C}(A)$, such that as $n \to \infty$,
$
\pprob{\hat C\neq C^*} \to 0
$
uniformly in the choice of the true cluster $C^*$.

We are particularly interested in the following choices of $P$ and $Q$:
\begin{itemize}
\item {Bernoulli case}:  $P=\Bern(p)$ and $Q=\Bern(q)$ with $0 \le q < p \le 1$. 
In this case, the data matrix $A$ corresponds to the adjacency matrix of a graph, and
the problem reduces to planted dense subgraph
recovery. 

\item {Gaussian case}: $P=\calN(\mu,1)$ and $Q=\calN(0,1)$ with $\mu>0$. 
In this case, the submatrix of $A$ with row and column indices in $C^\ast$ has a positive mean $\mu$ except on the diagonal, while the rest of $A$ has zero mean, and the problem corresponds to a symmetric version of the {\em submatrix localization} problem studied in \cite{shabalin2009submatrix,kolar2011submatrix,Butucea2013sharp,ma2013submatrix,ChenXu14,CLR15}. 
\end{itemize}

\subsection{Main results}
\label{sec:main}
We show that for both planted dense subgraph recovery and submatrix localization, SDP relaxations of MLE achieve the information-theoretic optimal threshold if and only if  the hidden community size satisfies $K=\omega(\frac{n}{\log n})$.   More specifically, 
 \begin{itemize}
	\item $K=\omega(\frac{n}{\log n})$, SDP attains the information-theoretic recovery limits
with sharp constants. This extends the previous result in \cite{HajekWuXuSDP14} obtained for $K=\Theta(n)$ and the Bernoulli case. 
\item $K=\Theta(\frac{n}{\log n})$, SDP is order-wise optimal, but strictly
suboptimal by a constant factor; 
\item $K=o(\frac{n}{\log n})$ and $K \to \infty$, SDP is order-wise suboptimal.
\end{itemize}

To establish our main results, we derive a sufficient condition and a necessary condition under which 
 the optimal solution to SDP is unique and coincides with the true cluster matrix. 
 In particular, for planted dense subgraph recovery, whenever SDP does not achieve the information-theoretic threshold, our sufficient condition and necessary condition are within constant factors of each other; for submatrix localization, we characterize the minimal signal-to-noise ratio required by SDP within a factor of four when $K=\omega(\sqrt{n})$.
The sufficiency proof is similar to those in  \cite{HajekWuXuSDP14} based on the dual certificate argument; we extend the construction and validation of dual certificates for the success of SDP to the general distributions $P, Q$. 
The necessity proof is via constructing a high-probability feasible solution to the SDP by means of random perturbation of the ground truth that leads to a higher objective value. One could instead adapt the existing constructions in the SOS literature for planted clique~\cite{DeshpandeMontanari15,Meka15,HKP15,RS15} to our setting, but it falls short of establishing the impossibility of SDP to attain the optimal recovery threshold in the critical regime of $K=\Theta(n/\log n)$; see \prettyref{rmk:primalpf} for details.

An alternative approach to establish impossibility results for SDP, thanks to strong duality that holds for the specific program, is to prove the non-existence of dual certificates, which turns out to yield the same condition given by the aforementioned explicit construction of primal solutions.
%primal-based approach.
%We provide an alternative proof of our necessary condition based on this dual-based approach.
The dual-based method has been previously used for proving necessary conditions for related nuclear-norm constrained optimization problems,
see e.g., \cite{kolar2011submatrix,vinayak2013sharp,ChenXu14}; however, the constants in the derived conditions  
are often loose or unspecified.
 In comparison, we aim to obtain necessary conditions for
 SDP relaxations with explicit constants. Another difference 
% to those programs involving nuclear norms 
 is that the specific SDP considered here is more complicated involving the stringent positive semi-definite constraint and a set of equality and non-negativity constraints. 
% our proof has to contend with the stringent positive semi-definite constraint and a set of equality and non-negativity constraints arising from SDP
% relaxations. 

Using similar techniques, we obtain analogous results for SDP relaxation for SBM with logarithmically many communities.
Specifically, consider the network of   $n=rK$  vertices partitioned into
$r$ communities of  cardinality $K$ each, with edge probability $p$ for pairs of vertices within communities
and $q$ for other pairs of vertices. Then SDP relaxation, in contrast to the MLE, 
is constantwise suboptimal if $r \geq C \log n$ for sufficiently large $C$,
and orderwise suboptimal if $r =\omega \left(\log n \right)$.   That is, it is
constantwise suboptimal if $K \leq \frac {c n}{\log n}$ for sufficiently small $c$,
and orderwise suboptimal if $K =o  \left( \frac{ n}{ \log n} \right)$.  
This result complements the sharp optimality for SDP previously established in \cite{HajekWuXuSDP15} for $r=O(1)$
and extended to $r=o(\log n)$ in \cite{ABBK}.

 In closing, we comment on the barrier which prevents SDP from being optimal. 
 It is known that, see \eg, \cite{ChenXu14,MontanariSen15},  spectral methods which estimate the communities based on the 
leading eigenvector of the data matrix $A$ suffer from a spectral barrier: 
the spectrum of the ``signal part'' $\expect{A}$ must escape that of the ``noise part'' $A-\expect{A}$, 
\ie, the smallest nonzero singular value of $\expect{A}$ needs to be much larger than the spectral norm $\|A-\expect{A}\|$. 
%$A-\expect{A}$ has to be much smaller than $\expect{A}$ in the spectral norm. 
Closely related to the spectral barrier,  the SDP barrier  
originates from a  key random quantity (see \prettyref{eq:Vm}),
which is at most and, in fact, possibly much smaller than, the largest eigenvalue of $A-\expect{A}$. 
Thus we expect the SDP barrier to be weaker than the spectral one. Indeed, for the submatrix localization problem, if the submatrix size is sufficiently small, \ie, $K=o(\sqrt{n/\log n})$, SDP recovers the community with high probability if $\mu=\Omega(\sqrt{\log n})$, while
the spectral barrier requires a much stronger signal: $\mu=\Omega( \sqrt{n}/K)$; see \prettyref{sec:gauss-info} for details.

\subsection{Notation}
%For any positive integer $n$, let $[n]=\{1, \ldots, n\}$.
%For any set $T \subset [n]$, let $|T|$ denote its cardinality and $T^c$ denote its complement.
Let $\identity $ and $\allones$ denote the identity matrix and all-one matrix, respectively.
For a matrix $X$ we write  $X \succeq 0$ if $X$ is positive semidefinite, and $X \ge 0$ if $X$ is non-negative entrywise.
Let $\calS^n$ denote the set of all $n \times n$ symmetric matrices. For $X \in \calS^n$, let $\lambda_2(X)$ denote its second smallest eigenvalue.
For an $m\times n$ matrix $M$, let $\|M\|$ and $\|M\|_{\rm F}$  denote its spectral and Frobenius norm, respectively.
For any $S\subset [m], T \subset [n]$, let $M_{ST}\in \reals^{S \times T}$ denote $(M_{ij})_{i\in S,j \in T}$ and for $m=n$ abbreviate $M_{S}=M_{SS}$.
For a vector $x$, let $\|x\|$ denote its Euclidean norm.
We use standard big $O$ notations as well as their counterparts in probability,
e.g., for any sequences $\{a_n\}$ and $\{b_n\}$, $a_n=\Theta(b_n)$ or $a_n  \asymp b_n$
if there is an absolute constant $c>0$ such that $1/c\le a_n/ b_n \le c$.
All logarithms are natural and we adopt the convention $0 \log 0=0$. 
%Let $\Phi$ and $Q$
%denote the cumulative distribution function (CDF) and complementary CDF  of the standard normal distribution,
%respectively.
%For $\epsilon\in[0,1]$, define the binary entropy function $h(\epsilon)\triangleq \epsilon\log \frac{1}{\epsilon} + (1-\epsilon)\log \frac{1}{1-\epsilon}$.
Let $\Bern(p)$ denote the Bernoulli distribution with mean $p$ and
$\Binom(n,p)$ denote the binomial distribution with $n$ trials and success probability $p$.
Let $d(p\|q) = p \log \frac{p}{q} + (1-p)\log \frac{1-p}{1-q}$ denote the Kullback-Leibler (KL) divergence between
$\Bern(p)$ and $\Bern(q)$.
We say a sequence of events $\calE_n$ holds with high probability, if $\prob{\calE_n} \to 1$ as $n \to \infty$.

\section{Semidefinite programming relaxations}
Recall that $\xi^\ast \in \{0,1\}^n$  denotes the indicator of 
the underlying cluster $C^\ast$, 
such that $\xi^\ast_i=1$ if $ i \in C^*$ and $\xi^\ast_i=0$ otherwise.
Let $L$ denote an $n \times n$ symmetric matrix such that 
$L_{ij}=f(A_{ij})$ for $i \neq j$ and $L_{ii}=0$, where $f: \calX \to \reals$ is any 
function possibly depending on the model parameters. 
Consider the following combinatorial optimization problem:
\begin{align}
\hat{\xi} = \argmax_{\xi}  & \; \sum_{i,j} L_{ij} \xi_i \xi_j    \nonumber  \\
\text{s.t.	}  & \; \xi \in \{ 0, 1 \}^n    \label{eq:PDSML1_SL}    \\
 & \; \xi^\top \mathbf{1} =K,  \nonumber
\end{align}
which maximizes the sum of entries among all $K \times K$ principal submatrices of $L$.

If $L$ is the log likelihood ratio (LLR) matrix with $f( A_{ij} )=\log \frac{dP}{dQ}(A_{ij} )$ for $i \neq j$ and $L_{ii}=0$, 
then $\hat{\xi} $ is precisely the MLE of $\xi^\ast$. 
In general, evaluating the MLE requires knowledge of $K$ and the distributions $P, Q$. 
Computing the MLE is NP hard in the worst case for general values of $n$ and $K$ since certifying the existence of a clique of a specified size in an
undirected graph, which is known to be NP complete \cite{Karp72},
can be reduced to computation of the MLE.  
%Thus, 
%computation of the MLE in the worst case is considered to be computationally intractable,
%which 
This intractability of the MLE prompts us to consider its semidefinite programming relaxation as studied in \cite{HajekWuXuSDP14}.
Note that \prettyref{eq:PDSML1_SL} can be equivalently\footnote{Here \prettyref{eq:PDSML1_SL} and \prettyref{eq:PDSML2_SL} are equivalent in the following sense: For any feasible $\xi$ for \prettyref{eq:PDSML1_SL},
$Z=\xi\xi^\top$ is feasible for \prettyref{eq:PDSML2_SL}; Any feasible $Z$ for \prettyref{eq:PDSML2_SL} can be written as $Z=\xi\xi^\top$ such that either $\xi$ or $-\xi$
is feasible for \prettyref{eq:PDSML1_SL}. } formulated  as
\begin{align}
\max_{Z}  & \; \Iprod{L}{Z} \nonumber  \\
\text{s.t.	} & \; \text{rank}(Z)=1, \nonumber  \\
%Z=\xi \xi^\top  \nonumber  \\
& \;  Z_{ii} \le 1, \quad \forall i \in [n]\nonumber \\
 & \; Z_{ij} \ge 0,  \quad \forall i, j \in [n]  \nonumber \\
 & \;  \Iprod{\identity}{Z} = K \nonumber  \\
 & \; \Iprod{\allones}{Z} = K^2.
\label{eq:PDSML2_SL}
\end{align}

Replacing the rank-one constraint by the positive semidefinite constraint leads to the following convex relaxation of \prettyref{eq:PDSML2_SL}, which can be cast as a semidefinite program:\footnote{$\widehat{Z}_{\SDP}$ as $\argmax$ denotes the set of maximizers of the optimization problem \prettyref{eq:PDSCVX_SL}. If $Z^*$ is the unique maximizer, we write $\widehat{Z}_{\SDP} = Z^*$.}
\begin{align}
\widehat{Z}_{\SDP} = \argmax_{Z}  &\; \langle L, Z \rangle     \nonumber     \\
\text{s.t.	} & \; Z \succeq 0   \label{eq:PDSCVX_SL}  \\
& \;  Z_{ii} \le 1, \quad \forall i \in [n]\nonumber \\
& \; Z  \ge 0  \nonumber \\
& \;  \Iprod{\identity}{Z} = K  \nonumber  \\
 & \; \Iprod{\allones}{Z} = K^2.  \nonumber
\end{align}
Let $\xi^\ast \in  \{0,1\}^n$ denote the indicator of the community such that $\supp(\xi^\ast)=C^*$.
Let $Z^*=\xi^*(\xi^*)^\top$ denote the cluster matrix corresponding to $C^\ast$.
It is straightforward to retrieve the underlying cluster $C^\ast$ from $Z^*$. 
%Denote the cluster matrix corresponding to $C^\ast$ by $Z^*=\xi^*(\xi^*)^\top$, from which it is straightforward to retrieve the underlying cluster $C^\ast$.
Thus, if $\pprob{ \widehat{Z}_{\SDP} = Z^* } \to 1$ as $n \to \infty$,
then exact recovery of $C^*$ is attained. Note that by the symmetry of the SDP formulation and the distribution of $L$, the probability of success $\pprob{\widehat{Z}_{\SDP}=Z^\ast}$ is the same conditioned on any realization of $\xi^\ast$ and hence the worst-case probability of error coincides with the average-case one.
%To fix notations, we assume henceforth that $\xi^\ast$ is drawn uniformly at random from $\{\xi \in \{ 0, 1 \}^n: \xi^\top \mathbf{1} =K\}$.

Recall that if $L$ is the LLR matrix, then the solution $\hat{\xi} $
to  \prettyref{eq:PDSML1_SL}  is precisely the MLE of $\xi^\ast$. 
In the Gaussian case, $\log \frac{dP}{dQ}(A_{ij} )= \mu ( A_{ij} - \mu /2)$ with $\mu>0$ for $i \neq j$; in the Bernoulli case, $\log \frac{dP}{dQ}(A_{ij} ) = \log \frac{p(1-q)}{q(1-p)} A_{ij} + \log \frac{1-p}{1-q}$ with $p>q$ for $i \neq j$.
Thus, in both cases,   \prettyref{eq:PDSCVX_SL}  with $L=A$ corresponds to a semidefinite programming relaxation of the MLE, and the only model parameter needed for evaluating \prettyref{eq:PDSCVX_SL} is
 the cluster size $K$.

\section{Analysis of SDP in the general model}
In this section,  we give a sufficient condition and a necessary
condition, both deterministic,  for the success of SDP \prettyref{eq:PDSCVX_SL} for exact recovery.   
Define
\begin{equation}
e(i,C^*)=\sum_{j\in C^*} L_{ij}, \quad i \in [n]
	\label{eq:edge}
\end{equation}
and 
\begin{align*}
\alpha &=\mathbb{E}_P[L_{12} ], \\
\beta & = \mathbb{E}_Q[L_{12} ].
\end{align*}
We assume that $ \alpha \ge \beta$, \ie, $L$ has an elevated mean in the submatrix supported on $C^\ast \times C^\ast $ (excluding the 
diagonal).  This assumption guarantees that  $Z^\ast$ is the optimal solution to \prettyref{eq:PDSCVX_SL} when $L$ is replaced by its mean $\expect{L}$,
and is clearly satisfied when $L$ is the LLR matrix, in which case $\alpha=D(P\|Q) \geq 0  \geq -D(Q\|P) = \beta$,  or $L=A$ in the Gaussian and Bernoulli cases.

\begin{theorem}[Sufficient condition for SDP: general case]\label{thm:PlantedGeneralSharp}
If 
\begin{align}
    \min_{i \in C^\ast} e(i, C^*)  -  \max \left\{  \max_{i \notin C^\ast} e(i, C^*), K \beta \right \}   > \| L-\expect{L} \|-  \beta,  \label{eq:SDP_suff1} 
    \end{align}
then $\widehat{Z}_{\SDP}=Z^\ast$.
\end{theorem}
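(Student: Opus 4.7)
My plan is to prove this via a dual certificate for the SDP \eqref{eq:PDSCVX_SL}. I would form the Lagrangian with dual variables $S\succeq 0$ (for $Z\succeq 0$), a diagonal $D=\diag{d_i}$ with $d_i\ge 0$ (for $Z_{ii}\le 1$), $B\ge 0$ entrywise (for $Z\ge 0$), and scalars $\lambda,\eta\in\reals$ (for the two equality constraints), so that KKT stationarity reads
\[
S = -L + D - B + \lambda\identity + \eta\allones,
\]
while complementary slackness at $Z^*=\xi^*(\xi^*)^\top$ forces $d_i=0$ on $[n]\setminus C^*$, $B_{ij}=0$ on $C^*\times C^*$, and $S\xi^*=0$. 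Exhibiting such a tuple with $S\succeq 0$ certifies that $Z^*$ is an optimum; upgrading to $\ker S=\reals\,\xi^*$ yields uniqueness, because any other optimizer $\widetilde Z\succeq 0$ then satisfies $\iprod{S}{\widetilde Z}=0$, hence $\mathrm{range}(\widetilde Z)\subseteq \reals\,\xi^*$, and the trace constraint $\iprod{\identity}{\widetilde Z}=K$ fixes $\widetilde Z=Z^*$.

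The explicit certificate I would build sets
\[
\eta = \frac{1}{K}\max\!\left\{\max_{i\notin C^*} e(i,C^*),\; K\beta\right\}, \qquad \lambda = \min_{i\in C^*} e(i,C^*) - K\eta,
\]
so that $K\eta$ matches the subtracted quantity in \eqref{eq:SDP_suff1} and $\lambda+\beta$ equals its left-hand side plus $\beta$; puts $d_i = e(i,C^*)-\lambda-K\eta\ge 0$ on $C^*$ (and $d_i=0$ otherwise); sets $B=0$ on $C^*\times C^*$ and $B_{ij}=\eta-\beta\ge 0$ on $\bar C^*\times\bar C^*$; and on the mixed block chooses $B$ symmetric and nonnegative to satisfy the row-sum constraint $\sum_{j\in C^*}B_{ij}=K\eta-e(i,C^*)\ge 0$ for $i\notin C^*$, which is exactly what $(S\xi^*)_i=0$ requires. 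Complementary slackness and $S\xi^*=0$ then hold by design, and both $d_i\ge 0$ and the row-sum nonnegativity follow from the hypothesis together with the definition of $\eta$.

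The main obstacle is verifying $S\succeq 0$ with strict positivity on $\{\xi^*\}^\perp$. Writing $L=\expect{L}+N$ with $N:=L-\expect{L}$ and using the block structure $\expect{L}_{ij}=\alpha$ on $C^*\times C^*$ off-diagonal and $\beta$ elsewhere off-diagonal, I would split any $v\perp\xi^*$ as $v=v_{C^*}+v_{\bar C^*}$ with $v_{C^*}\perp\ones_{C^*}$, and evaluate $v^\top S v$ block-by-block. A direct calculation shows the interior block contributes at least $(\min_{i\in C^*}e(i,C^*)-K\eta+\alpha)\|v_{C^*}\|^2 - \|N\|\|v_{C^*}\|^2$ after using $v_{C^*}^\top \expect{L}|_{C^*\times C^*} v_{C^*}=-\alpha\|v_{C^*}\|^2$; the exterior block contributes $(\lambda+\beta)\|v_{\bar C^*}\|^2-\|N\|\|v_{\bar C^*}\|^2$; and the mixed block, with the above choice of $B$, reduces to a projected submatrix of $N$ whose operator norm is bounded by $\|L-\expect{L}\|$. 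Combining and using $\alpha\ge\beta$, both block coefficients dominate $\lambda+\beta = \min_{i\in C^*}e(i,C^*)-K\eta+\beta$, which under hypothesis \eqref{eq:SDP_suff1} strictly exceeds $\|L-\expect{L}\|$, yielding $v^\top S v>0$ on $\{\xi^*\}^\perp$. The hardest part will be the mixed-block analysis: engineering $B$ so that the cross-term penalty is absorbed by a single $\|L-\expect{L}\|$ rather than multiple copies arising from a crude triangle inequality, which is essential for the hypothesis to be tight and not strengthened by a spurious constant factor.
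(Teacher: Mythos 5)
Your approach is the same dual-certificate argument the paper uses, and your choices of $\lambda$, $\eta$, and $d_i$ coincide with the paper's (up to a harmless relabeling of the multipliers); the minor deviation of putting $B_{ij}=\eta-\beta$ on $\bar C^*\times\bar C^*$ instead of zero is immaterial (it actually cancels exactly against the $\eta\allones$ and $\beta\allones$ contributions and leaves a nonnegative residue). The issue is the step you yourself flag: you leave the mixed block of $B$ unspecified, asking only that its row sums equal $K\eta-e(i,C^*)$, and you propose to control the resulting cross term by "a projected submatrix of $N$." That plan, carried out with a block-by-block bound on $N:=L-\expect L$, costs you up to three applications of $\|N\|$ (one per block), which strictly weakens the hypothesis by a constant factor and would not establish the theorem as stated.

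The missing idea is to pick the mixed block of $B$ to be constant along each row, namely $B_{ij}=b_i:=\eta-\tfrac{1}{K}e(i,C^*)$ for all $j\in C^*$, $i\notin C^*$ (this is the paper's choice). Then for any $x\perp\xi^*$ the mixed contribution is
\[
2\sum_{i\notin C^*}\sum_{j\in C^*}x_ix_jB_{ij}
\;=\;2\sum_{i\notin C^*}x_ib_i\Big(\sum_{j\in C^*}x_j\Big)\;=\;0,
\]
because $x\perp\xi^*$ forces $\sum_{j\in C^*}x_j=0$. With this, $B$ drops out of $x^\top Sx$ entirely (beyond the harmless $\bar C^*\times\bar C^*$ contribution), and one should \emph{not} block-decompose the noise quadratic form at all: simply write $x^\top Lx = x^\top\expect L\,x + x^\top Nx$, evaluate the structured pieces exactly as you do, and bound $|x^\top Nx|\le\|N\|$ once. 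That yields $x^\top Sx\ge \beta+\lambda-\|N\|+(\text{nonnegative terms})>0$ under \eqref{eq:SDP_suff1}, with a single copy of $\|L-\expect L\|$ and the sharp constant. Once you make this choice of $B$ and drop the per-block bounding of $N$, your argument is exactly the paper's.
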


The sufficient condition of \prettyref{thm:PlantedGeneralSharp} is derived via the dual certificate argument.
That is, we give an explicit construction of dual variables which together with
$Z^*$ are shown to satisfy  the  KKT conditions under the condition \prettyref{eq:SDP_suff1}.

The necessary condition relies on the following key quantity which is the value of an auxiliary SDP program.
%value of another semidefinite program which we refer to as the auxiliary SDP. 
Let $m=n-K$ and $M=L_{(C^\ast)^c \times (C^\ast)^c}$ denote the submatrix of $L$ outside the community.
Then $M$ is an $m \times m$ symmetric matrix with zero diagonal, where 
$\{M_{ij}:1 \le i<j \le m\}$ are i.i.d.
For $a\in \reals$, consider the value (random variable) of the following SDP:
\begin{align}
V_m(a) \triangleq  \max_{Z}  &\; \Iprod{ M }{ Z}      \label{eq:Vm} \\
\text{s.t.	} & \; Z \succeq 0   \nonumber  \\
 & \; Z \ge 0  \nonumber \\
& \;  \Tr(Z) = 1  \nonumber  \\
 & \; \Iprod{\allones}{Z} = a.  \nonumber
\end{align}
There is no feasible solution to \prettyref{eq:Vm} unless $1 \leq a \leq m$, so by convention, let
$V_m(a)=-\infty$ if $a<1$ or $a>m$. Dropping the second and the last constraints in \prettyref{eq:Vm}, yields
$V_m(a) \leq \lambda_{\max}(M)$. 
Also, $V_m(1)=0,$   $V_m(m)=\Iprod{ M }{ \allones}/m,$  and $a \mapsto V_m(a)$ is concave on $[1,m]$.
Clearly, the distributions of $M$ as well as $V_m(a)$ depend on the distribution $Q$ but not $P$.

Fix $K, n, C^*,$ the matrix $L$, and $a\in [1,K]$.     Also, let $r=\frac{a}{K}$.
For ease of  notation suppose the indices are permuted
so that $C^* = [K],$  index $K$ minimizes
$e(i,C^*)$ over all $i\in C^*,$  and index $K+1$
maximizes $e(j,C^*)$ over all  $j\not\in C^*$.
Let $U$ be an $n\times n$ matrix corresponding to the solution of the SDP defining
$V_{m}(a)$ with $M=L_{(C^*)^c\times (C^*)^c}$  in \prettyref{eq:Vm}.
That is, $U$ is a symmetric $n\times n$ matrix  with $U_{ij}=0$ if $(i,j) \not\in (C^*)^c\times (C^*)^c,$
$V_{m}(a)=\Iprod{L}{U}$,   $U\succeq0$, $U\geq 0$, $\Tr(U)=1,$  and $\Iprod{\allones}{U}=a=Kr$.  

Next we give intuition about the construction of primal feasible solutions via random perturbation that lead to a necessary condition for SDP.
Three positive semidefinite perturbations of $Z^*$, namely $Z^* + \delta_i$ for $1\leq i \leq 3,$ can be defined
for $0 < \epsilon < 1/2$ by letting (dashed lines delineate the $K\times K$ submatrices and only nonzero
entries are shown):
%\footnotesize
\begin{align}
\delta_1   =&   \left[
    \begin{array}{cccc;{2pt/2pt}c}
            &&    & -\epsilon  &   \hspace{0.8in} \\
         &&      & \vdots &  \\
         &&      &  -\epsilon  &  \\
-\epsilon & \cdots &        -\epsilon        & -2\epsilon + \epsilon^2   \\ \hdashline[2pt/2pt]
&&&\\
&&&\\
&&&\\
    \end{array}
\right]  
\label{eq:delta1}
\\  %%%%%%%%%%%%%%%%
\delta_2   =  (1-r) & \left[
    \begin{array}{cccc;{2pt/2pt}cc}
            &&  &  & \epsilon  &   \hspace{0.7in} \\
         &&    &  & \vdots &  \\
         &&    &  &  \epsilon  &  \\  \hdashline[2pt/2pt]
\epsilon & \hspace{.52in} \cdots \hspace{.3in}  &        \epsilon       & &  \epsilon^2   \\ 
&&&\\
&&&\\
&&&\\
    \end{array}
\right] \label{eq:delta2}
\\  %%%%%%%%%%%%%%%%
\delta_3  =   2\epsilon & U \label{eq:delta3}
\end{align}

It turns out that $Z^* + \delta_1 + \delta_2+ \delta_3$ is close to being positive semidefinite for small $\epsilon$.   Also,
\begin{align*}
\begin{array}{lll}
\Iprod{\identity}{\delta_1}=-2\epsilon + o(\epsilon)   &   \Iprod{\allones }{\delta_1}=-2K\epsilon + o(\epsilon)    
    &  \Iprod{L}{\delta_1}= -2\epsilon \min_{i\in C^*} e(i,C^*)  \\
\Iprod{\identity}{\delta_2}=   o(\epsilon)   &   \Iprod{\allones }{\delta_2}= 2(K-a) \epsilon + o(\epsilon)  & 
     \Iprod{L}{\delta_2} = 2(1-r) \epsilon \max_{j\notin C^*} e(j,C^*)  \\
\Iprod{\identity}{\delta_3}=  2\epsilon   &   \Iprod{\allones }{\delta_3}= 2a\epsilon   &  \Iprod{L}{\delta_3} = 2\epsilon V_{n-K}(a)   
\end{array}
\end{align*}
%The first two column sums are $o(\epsilon)$;  so,  
Therefore, up to $o(\epsilon)$ terms, $Z^* + \delta_1 + \delta_2+ \delta_3$ satisfies
the two equality constraints of the SDP \eqref{eq:PDSCVX_SL} and is near a feasible solution of the SDP \eqref{eq:PDSCVX_SL}, suggesting that a necessary condition for the optimality of
$Z^*$ is  $\Iprod{L}{\delta_1+\delta_2+\delta_3}\leq 0$.   Note that
$$
\Iprod{L}{ \delta_1+\delta_2+\delta_3} 
= 2\epsilon \left( (1-r)  \max_{j\not\in C^*} e(j,C^*) + V_{n-K}(a)  - \min_{i \in C^*} e(i,C^*)  \right) + o(\epsilon).
$$
%Therfore if $Z^*$ maximizes \eqref{eq:PDSCVX_SL}, then the term inside the parenthesis must be non-positive.
Hence the term inside the parenthesis must be non-positive.
This leads the following deterministic necessary condition for SDP.
% in the general $P$/$Q$ model.
The proof, given in  \prettyref{sec:proof_nec}, is a minor variation of the heuristic argument just presented.

\begin{theorem}[Necessary condition for SDP: general case]\label{thm:PlantedGeneralSharp_nec}
%If $ Z^\ast $ is an maximizer to  SDP \prettyref{eq:PDSCVX_SL}, then 
If $Z^* \in \widehat{Z}_{\SDP} $, then
\begin{equation}   \label{eq:general_nec_cond}
\min_{i \in C^\ast} e(i, C^*) - 	 \max_{j \notin C^\ast} e(j, C^*)    \ge  \sup_{1 \le a \le K } \left\{   V_{n-K}(a)   -  \frac{a}{K} \max_{j \notin C^\ast} e(j, C^*)   \right\}.
\end{equation}
\end{theorem}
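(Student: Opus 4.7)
The plan is to make the heuristic preceding the theorem rigorous by constructing, for each fixed $a\in[1,K]$, a genuinely feasible one-parameter family $Z_\epsilon\to Z^\ast$ whose leading-order objective change equals $\tfrac12\Iprod{L}{D}$ for the direction $D$ sketched in the excerpt, and then invoking optimality of $Z^\ast$ to force that change to be non-positive.

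Fix $a\in[1,K]$ and let $U$ achieve $V_{n-K}(a)$ in \prettyref{eq:Vm} with $M=L_{(C^\ast)^c\times(C^\ast)^c}$, viewed as a zero-padded $n\times n$ matrix. Set $r=a/K$ and, without loss of generality, take $C^\ast=[K]$ with $K$ attaining $\min_{i\in C^\ast}e(i,C^\ast)$ and $K+1$ attaining $\max_{j\notin C^\ast}e(j,C^\ast)$. Define the first-order direction $D=D_1+D_2+D_3$ via $D_1=-(e_K\xi^{\ast\top}+\xi^\ast e_K^\top)$, $D_2=(1-r)(e_{K+1}\xi^{\ast\top}+\xi^\ast e_{K+1}^\top)$, and $D_3=2U$. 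A direct computation yields $\Tr(D)=\Iprod{\allones}{D}=0$, $D_{ii}\le 0$ for $i\in C^\ast$, $D_{ij}\ge 0$ whenever $Z^\ast_{ij}=0$, $v^\top Dv=2v^\top Uv\ge 0$ for every $v\in\ker Z^\ast=\{\xi^\ast\}^\perp$, and
\[\tfrac12\Iprod{L}{D}=V_{n-K}(a)+(1-r)\max_{j\notin C^\ast}e(j,C^\ast)-\min_{i\in C^\ast}e(i,C^\ast).\]

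For $\epsilon\in(0,1/2]$, the matrix $Z^\ast+\epsilon D$ satisfies the two equality constraints exactly, the diagonal bound $\le 1$ (using $U_{ii}\le\Tr(U)=1$), and entry-wise non-negativity (from the sign pattern of $D$ verified above). The only constraint that may fail is positive semidefiniteness; completing the square in $v^\top(Z^\ast+\epsilon D)v=(\xi^{\ast\top}v)^2+2\epsilon(\xi^{\ast\top}v)[(1-r)v_{K+1}-v_K]+2\epsilon v^\top Uv$ yields the uniform lower bound $v^\top(Z^\ast+\epsilon D)v\ge -2\epsilon^2\|v\|^2$. To cure this $O(\epsilon^2)$ shortfall, introduce the strictly feasible reference matrix $Z^{(0)}=\tfrac{K(n-K)}{n(n-1)}\identity+\tfrac{K(K-1)}{n(n-1)}\allones$, which satisfies every constraint of \prettyref{eq:PDSCVX_SL} and has $\lambda_{\min}(Z^{(0)})=c_0:=K(n-K)/(n(n-1))>0$ for $1\le K<n$. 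Set $Z_\epsilon=(1-\mu)(Z^\ast+\epsilon D)+\mu Z^{(0)}$ with $\mu=2\epsilon^2/c_0=\Theta(\epsilon^2)$; by convexity all linear inequality and equality constraints are preserved, and $\lambda_{\min}(Z_\epsilon)\ge\mu c_0-(1-\mu)\cdot 2\epsilon^2\ge 0$, so $Z_\epsilon$ is feasible for every sufficiently small $\epsilon>0$.

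Optimality of $Z^\ast$ forces $\Iprod{L}{Z_\epsilon-Z^\ast}\le 0$; writing this out as $\epsilon(1-\mu)\Iprod{L}{D}+\mu\Iprod{L}{Z^{(0)}-Z^\ast}\le 0$, using $\mu=O(\epsilon^2)$ together with the crude bound $|\Iprod{L}{Z^{(0)}-Z^\ast}|\le n^2\max_{i,j}|L_{ij}|$, dividing by $\epsilon$, and letting $\epsilon\downarrow 0$ yields $\Iprod{L}{D}\le 0$. Substituting the formula for $\Iprod{L}{D}$ above and rearranging gives precisely $\min_{i\in C^\ast}e(i,C^\ast)-\max_{j\notin C^\ast}e(j,C^\ast)\ge V_{n-K}(a)-\tfrac{a}{K}\max_{j\notin C^\ast}e(j,C^\ast)$; taking the supremum over $a\in[1,K]$ delivers \prettyref{eq:general_nec_cond}. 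I expect the main obstacle to be the second-order PSD analysis in the previous paragraph; the key observation that makes it tractable is that the PSD shortfall of $Z^\ast+\epsilon D$ is purely quadratic in $\epsilon$, so an interior-point correction of matching order $\epsilon^2$ absorbs it while leaving the linear-in-$\epsilon$ objective analysis unaffected.
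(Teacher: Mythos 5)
Your proof is correct: all the key verifications check out ($\Tr(D)=\Iprod{\allones}{D}=0$ so the equality constraints of \prettyref{eq:PDSCVX_SL} hold exactly along $Z^\ast+\epsilon D$; the sign pattern of $D$ off the support of $Z^\ast$ gives entrywise non-negativity; the completing-the-square bound $v^\top(Z^\ast+\epsilon D)v\ge -2\epsilon^2\|v\|^2$ is valid since $((1-r)v_{K+1}-v_K)^2\le 2\|v\|^2$; and $\Iprod{L}{D}/2$ equals $V_{n-K}(a)+(1-r)\max_{j\notin C^\ast}e(j,C^\ast)-\min_{i\in C^\ast}e(i,C^\ast)$). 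The strategy is the same primal-perturbation argument as the paper's (you use exactly the directions $\delta_1,\delta_2,\delta_3$ from the heuristic), but the mechanism for restoring exact feasibility is genuinely different. The paper's proof keeps positive semidefiniteness automatic by writing the perturbed point as $\alpha\,\xi_\epsilon\xi_\epsilon^\top+2\epsilon U$ with $\xi_\epsilon=(1,\ldots,1,1-\epsilon,\beta\epsilon,0,\ldots,0)^\top$, and then pays for this by having to tune the two scalars $\alpha,\beta$ so that $\Iprod{\identity}{Z}=K$ and $\Iprod{\allones}{Z}=K^2$ hold exactly, which it does via the implicit function theorem. You instead keep the equality (and linear inequality) constraints exact by construction and pay on the PSD side, curing the purely quadratic $-2\epsilon^2$ deficit by mixing with the Slater point $Z^{(0)}$ at weight $\Theta(\epsilon^2)$, which leaves the first-order objective analysis untouched. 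Your route avoids the implicit function theorem entirely and yields explicit constants, at the cost of the extra interior-point mixing step; the paper's route produces a perturbed matrix exactly of the heuristic's rank-one-plus-$U$ form. One cosmetic remark: for $a>n-K$ (possible when $K>n/2$) the program \prettyref{eq:Vm} is infeasible and $V_{n-K}(a)=-\infty$ by convention, so the claimed inequality is vacuous there and your maximizer $U$ need only be invoked for $a\le n-K$ — the same implicit restriction is present in the paper's proof, so this is not a gap.
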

\begin{remark}
Note that \eqref{eq:general_nec_cond} is equivalent to
$$
\min_{i \in C^\ast} e(i, C^*)    \ge  \sup_{1 \le a \le K } \left\{ V_{n-K}(a)  +  \left(1 - \frac{a}{K}\right)
\max_{i \notin C^\ast} e(i, C^*)   \right\},
$$
Setting $a=K$ in \eqref{eq:general_nec_cond}
yields the weaker necessary condition:
$\min_{i \in C^\ast} e(i, C^*)  \ge  V_{n-K}(K)$.
\end{remark}

\begin{remark} \label{rmk:primalpf}
%\nbr{JX: I am not sure if it is good to use dual certificate approach here. It might cause some
%unnecessary confusion to readers. Their approach is essentially constructing a primal feasible
%solution to the SoS program.}
The problem formulation as well as proof technique of \prettyref{thm:PlantedGeneralSharp_nec} differ from existing results on the planted clique problem for the sum of squares (SoS) hierarchy \cite{DeshpandeMontanari15,Meka15,HKP15,RS15} in an essential way. Aside from the
fact that those papers consider more powerful convex relaxations, they address the clique detection problem (which do have implications for clique estimation), 
which can be viewed as testing the null hypothesis  $H_0:$ clique absent versus the alternative $H_1:$ clique present, using the value of the SOS program
as the test statistic.   
The approach of these papers involves only the null hypothesis,
showing that 
%even if no clique is planted, as long as the size $K$ is sufficiently small, 
a feasible solution to SOS program can be constructed based on the $\calG(n,1/2)$ graph 
whose objective value is much larger than the size of the largest clique in $\calG(n,1/2)$, leading to 
a large integrality gap. This further induces a high false-positive error probability 
if the size of the planted clique $K$ is small. 
%under the alternative \nbr{double chk}, 
%leading to 
%an unacceptable \nbr{large?} false-positive error probability.    
In comparison, since we are dealing with recovery as opposed to detection using SDP, the impossibility result in \prettyref{thm:PlantedGeneralSharp_nec} follows from the fact that, if the true cluster matrix $Z^*$ is an optimal solution, 
%\nbr{del this ``if''? we do not need to assume MLE works yes?} 
%\nb{JX: I think this is OK. It does not assume MLE works}
then certain random perturbations of $Z^*$ must not  lead to a strictly larger objective value. 
%\nbr{The special case $a=K$ here, which emphasizes the role of the matrix $U$
%with support not overlapping the support of $Z^*$, is similar to the constructions
%used in the SoS literature. JX: the above statement is inaccurate given our dicussions.}  
%Tracing through our proofs indicates that this difference is significant \nbr{difference between what and what? i think what we should say is %that we can use a similar construction purely on null as opposed to perturbing the alternative} when the parameter $a$ chosen in the %bound \prettyref{eq:general_nec_cond} is $o(K)$, which happens when $K=\omega(\sqrt{n}),$  at least in the Gaussian and Bernoulli cases.
More precisely, the perturbation argument involves three directions \prettyref{eq:delta1}-\prettyref{eq:delta3}. Note that 
the matrix $U$ in \prettyref{eq:delta3} is the maximizer of \prettyref{eq:Vm} and can be constructed using similar techniques in the SoS literature.
%, since \prettyref{eq:Vm} only depends on the entries of the data matrix $L$ outside the true
%and the distribution $Q$. 
However, this perturbation alone is not enough to separate the performance of SDP from MLE 
%show a constant gap between what SDP can achieve and what is information theoretically possible 
in the critical regime $K=\Theta(n/\log n)$, and it is necessary to exploit the other perturbations terms \prettyref{eq:delta1}-\prettyref{eq:delta2} that depend on the true cluster matrix.
% is needed the matrix $U$
%has support not overlapping the support of $Z^*$ and is solely determined by the noise part, \ie, $M$,
%which is similar to the constructions used in the SoS literature. However, 
%the construction based on $U$ turns out to be not enough to show
%the suboptimality of SDP in constant factors. 
\end{remark}

\begin{remark}
%Another way to prove \prettyref{thm:PlantedGeneralSharp_nec}
%is to show the nonexistence of a dual certificate.  
Since Slater's condition and hence strong duality holds for the SDP \prettyref{eq:PDSCVX_SL}, the fulfillment of the KKT conditions is  
necessary for $Z^*$ to be a maximizer. We provide an alternative proof of \prettyref{thm:PlantedGeneralSharp_nec}
in \prettyref{sec:proof_nec},  showing that \prettyref{eq:general_nec_cond} is necessary for the existence of dual variables to 
satisfy the KKT conditions together with $Z^*$. 
%This dual-based approach
%has been used before for proving the necessary conditions for nuclear norm constrained problems,
%see e.g., \cite{kolar2011submatrix,vinayak2013sharp,ChenXu14}. In comparison, a key new challenge in  
%our proof is to deal with the coupling between the positive semi-definite constraint  and the set of equality and negative
%constraints in \prettyref{eq:PDSCVX_SL}.
\end{remark}

By comparing \prettyref{thm:PlantedGeneralSharp} and  \prettyref{thm:PlantedGeneralSharp_nec}, 
we find that both the sufficient and necessary conditions are in terms of the 
separation between $\min_{ i\in C^*} e(i, C^*)$ and $\max_{ j \notin C^*} e(j, C^*)$. 
In comparison, for the  optimal estimator, MLE,  to succeed in exact recovery, it is necessary that 
$\min_{i \in C^*} e(i, C^*) \ge \max_{j \notin C^*} e(j, C^*)$; otherwise, one can
form a candidate community $C$ by swapping the node $i$ in $C^*$ achieving the minimum $e(i, C^*)$ with the node $j$
not in $C^*$ achieving the maximum $e(j, C^*)$, so that 
the new community $C$ has a likelihood at least as large as that of $C^*$. 
 
Capitalizing on Theorems \ref{thm:PlantedGeneralSharp} and \ref{thm:PlantedGeneralSharp_nec}, we will derive explicit sufficient  and necessary  results for 
the success of SDP in the Gaussian and Bernoulli cases. Interestingly, in both cases, if  $K=\omega(n/ \log n)$,
the sufficient condition of SDP coincides in the leading terms with the information-theoretic necessary condition for $\min_{i \in C^*} e(i, C^*) \ge \max_{j \notin C^*} e(j, C^*)$,
thus resulting in the optimality of SDP with the sharp constants.

\section{Submatrix localization}

%The following theorem gives sufficient conditions for the success of 
%the semidefinite programming relaxation of MLE, \ie, \prettyref{eq:PDSCVX_SL}
%with $L=A$,  in the Gaussian case. 

In this section we consider the submatrix localization problem corresponding to the Gaussian case of \prettyref{def:model}. 
The SDP relaxation of MLE is given by \prettyref{eq:PDSCVX_SL} with $L=A$.

\begin{theorem}[Sufficient conditions for SDP: Gaussian case] \label{thm:PlantedSharp_SL}
Assume that 
%$L=A$ in the SDP \prettyref{eq:PDSCVX_SL}, 
$K \ge 2$ and  $n-K \asymp n$. 
Let $\epsilon>0$ be an arbitrary constant. 
If either $K \to \infty$ and 
\begin{align}
 \mu  (1-\epsilon)
\ge &~ \frac{1}{\sqrt{K}} \left(  \sqrt{2 \log K} + \sqrt{2 \log (n-K) }  \right) + \frac{2 \sqrt{n} }{K},  \label{eq:sdp-suff}
 \end{align}
 or
 \begin{equation}
\mu (1-\epsilon) \ge 2 \sqrt{ \log K} + 2 \sqrt{ \log n }, 
	\label{eq:SDP-trivial}
\end{equation}
then  $\pprob{\widehat{Z}_{\SDP}=Z^\ast} \to 1$ as $n \to \infty$.
\end{theorem}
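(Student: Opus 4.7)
The plan is to invoke Theorem~\ref{thm:PlantedGeneralSharp} with $L=A$ when hypothesis \eqref{eq:sdp-suff} is in force, and to run a direct thresholding argument when hypothesis \eqref{eq:SDP-trivial} is in force. In the Gaussian case one has $\alpha=\mu$ and $\beta=0$, so the deterministic sufficient criterion \eqref{eq:SDP_suff1} collapses to
\[
\min_{i\in C^*}e(i,C^*)\;-\;\max\!\left\{\max_{i\notin C^*}e(i,C^*),\,0\right\}\;>\;\|A-\mathbb{E}[A]\|.
\]

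\textbf{Case 1: hypothesis \eqref{eq:sdp-suff} with $K\to\infty$.} Each term on the left is an extreme of independent Gaussians: $e(i,C^*)\sim\mathcal{N}((K-1)\mu,K-1)$ for $i\in C^*$ and $\mathcal{N}(0,K)$ for $i\notin C^*$. Standard Gaussian tails plus a union bound give, w.h.p.,
\[
\min_{i\in C^*}e(i,C^*)\ge (K-1)\mu-(1+o(1))\sqrt{2(K-1)\log K},\qquad\max_{i\notin C^*}e(i,C^*)\le (1+o(1))\sqrt{2K\log(n-K)},
\]
where $K\to\infty$ is used to absorb lower-order corrections. The matrix $A-\mathbb{E}[A]$ is a symmetric Wigner matrix with i.i.d.\ $\mathcal{N}(0,1)$ off-diagonal entries and zero diagonal, so the nonasymptotic Bai--Yin/Davidson--Szarek estimate yields $\|A-\mathbb{E}[A]\|\le (2+o(1))\sqrt{n}$ w.h.p. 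Multiplying \eqref{eq:sdp-suff} through by $K$, using $K-1\sim K$ and $n-K\asymp n$, and absorbing the $o(1)$ slack into $\epsilon/2$ verifies the displayed criterion w.h.p., and Theorem~\ref{thm:PlantedGeneralSharp} concludes.

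\textbf{Case 2: hypothesis \eqref{eq:SDP-trivial}.} Here $K$ need not diverge and the term $2\sqrt{n}$ in the generic sufficient criterion dominates the $O(\sqrt{\log n})$ available from \eqref{eq:SDP-trivial}, so Theorem~\ref{thm:PlantedGeneralSharp} is too lossy; instead I would argue directly. By union bounds over $\binom{K}{2}$ within-community pairs and at most $n^2$ pairs meeting $(C^*)^c$,
\[
\min_{i\ne j,\,i,j\in C^*}A_{ij}\ge\mu-(1+o(1))\,2\sqrt{\log K},\qquad\max_{i\ne j,\,\{i,j\}\not\subset C^*}A_{ij}\le(1+o(1))\,2\sqrt{\log n},
\]
w.h.p. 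Under \eqref{eq:SDP-trivial} these two ranges strictly separate, so one may fix a threshold $\tau$ between them and set $B=A-\tau(\allones-\identity)$. Because $\Iprod{\allones-\identity}{Z}=K(K-1)$ is constant on the SDP feasible set, $\Iprod{A}{Z}$ and $\Iprod{B}{Z}$ differ by a constant and share the same maximizers. By construction $B_{ij}>0$ iff $i\ne j$ and $i,j\in C^*$, while $B_{ij}<0$ on all remaining off-diagonal pairs (and $B_{ii}=0$). For any feasible $Z$, positive semidefiniteness together with $Z_{ii}\le 1$ yields $Z_{ij}^2\le Z_{ii}Z_{jj}\le 1$, hence $0\le Z_{ij}\le 1$, and therefore
\[
\Iprod{B}{Z}\;\le\;\sum_{i\ne j,\,i,j\in C^*}B_{ij}Z_{ij}\;\le\;\sum_{i\ne j,\,i,j\in C^*}B_{ij}\;=\;\Iprod{B}{Z^*},
\]
with equality forcing $Z_{ij}=0$ outside $C^*\times C^*$ and $Z_{ij}=1$ inside. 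PSDness combined with $Z_{ii}\le 1$ and $\Tr Z=K$ then pins $Z_{ii}=1$ on $C^*$ and $Z_{ii}=0$ elsewhere (the latter killing the remaining rows/columns of $Z$ by the PSD Cauchy--Schwarz bound), so $Z=Z^*$, yielding uniqueness.

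The main obstacle is Case~2: the generic sufficient condition of Theorem~\ref{thm:PlantedGeneralSharp} is not tight enough in the small-$K$ regime, so a bespoke thresholding argument exploiting the sign pattern of $B$ and the SDP's normalization constraints is needed. Case~1 is more routine, with the only delicate bookkeeping being the leading constants $(2+o(1))\sqrt{n}$ in the Wigner bound and $(1+o(1))\sqrt{2\log(\cdot)}$ in the Gaussian maxima, which must match the constants in \eqref{eq:sdp-suff} exactly after dividing by $K$.
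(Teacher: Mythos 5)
Your proposal is correct and matches the paper's proof in essence: Case~1 applies \prettyref{thm:PlantedGeneralSharp} with $\alpha=\mu$, $\beta=0$, the union-bound extreme-value estimates for $\min_{i\in C^*}e(i,C^*)$ and $\max_{i\notin C^*}e(i,C^*)$, and the Davidson--Szarek bound $\|A-\expect{A}\|\le(2+o(1))\sqrt n$, exactly as in the paper. Your Case~2 thresholding via $B=A-\tau(\allones-\identity)$ is only a cosmetic repackaging of the paper's argument that $\Iprod{A}{Z}$ is a weighted sum with weights $Z_{ij}\in[0,1]$ of total mass $K(K-1)$, maximized uniquely by putting all weight on the within-community entries; your handling of the diagonal via $Z_{ij}^2\le Z_{ii}Z_{jj}$ and $\Tr(Z)=K$ is a fine (slightly more explicit) completion of the same uniqueness step.
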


\begin{remark}
To deduce \prettyref{eq:sdp-suff} from the general sufficient condition \prettyref{eq:SDP_suff1}, we first show  that
 \begin{align}
 \min_{i \in C^\ast} e(i, C^*)  &  \ge (K-1) \mu -  \sqrt{2(K-1) \log K} +o_P(\sqrt{K})  \label{eq:mine} \\
 \max_{j \notin C^\ast} e(j, C^*) & \le \sqrt{2 K \log (n-K)} +o_P(\sqrt{K}). \label{eq:maxe}
 \end{align}
% Then we prove that $\|A-\expect{A}\| \le (2+o(1)) \sqrt{n}$ with high probability. \nbr{YW: nothing to prove right? How about: Then \prettyref{eq:sdp-suff} follows since $A-\expect{A}$ is an $n\times n$ Wigner matrix whose spectral norm is $(2+o_P(1)) \sqrt{n}$. }
Then \prettyref{eq:sdp-suff} follows since $A-\expect{A}$ is an $n\times n$ Wigner matrix whose spectral norm is $(2+o_P(1)) \sqrt{n}$. 
	Under the condition \prettyref{eq:SDP-trivial}, 
	with high probability, $\min_{(i,j) \in C^\ast \times C^\ast: i \neq j} A_{ij} >\max_{(i,j) \notin C^\ast \times C^\ast} A_{ij} $, and thus 
	$\widehat{Z}_{\SDP}=Z^\ast$. In this case, the community can be also trivially recovered with probability tending to one using entrywise hard thresholding, and not surprisingly, but SDP as well.
\end{remark}

Next we present a converse result for the exact recovery performance of SDP in a strong sense:
\begin{theorem}[Necessary condition  for SDP in Gaussian case]
Assume that $L=A$ in the SDP \prettyref{eq:PDSCVX_SL}, $K \to \infty$, and $K=o(n)$.  
Suppose that $\liminf_{n \to \infty}  \pprob{Z^\ast \in \widehat{Z}_{\SDP}} > 0$.
Then for any fixed $\epsilon>0$,
\begin{itemize}
	\item if $K = \omega(\sqrt{n})$, then
\begin{align}
\mu (1+ \epsilon) \ge \frac{1}{\sqrt{K}} \left( \sqrt{2 \log K}  +  \sqrt{2  \log (n-K) }  \right) + \frac{\sqrt{n} }{2K} . 
\label{eq:sdp-nece1}
\end{align}
\item if $ K = \Theta(\sqrt{n})$, then
\begin{equation}
\mu (1+\epsilon) \geq \sqrt{ \log \pth{1+ \frac{n}{4 K^2} } } . 
	\label{eq:sdp-nece2}
\end{equation}
\item if $ K = o(\sqrt{n})$, then
\begin{equation}
\mu (1+\epsilon)  \geq 	\sqrt{ \frac{1}{3} \log \frac{ n}{K^2} }. 
	\label{eq:sdp-nece3}
\end{equation}

\end{itemize}
	\label{thm:sdp-nece}
\end{theorem}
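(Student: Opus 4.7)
The plan is to apply the general deterministic necessary condition of \prettyref{thm:PlantedGeneralSharp_nec} with $L=A$, rearranged as
\[
\min_{i \in C^*} e(i,C^*) \;\ge\; \sup_{1\le a\le K}\Bigl\{ V_{n-K}(a) + \bigl(1 - \tfrac{a}{K}\bigr)\max_{j \notin C^*} e(j,C^*) \Bigr\}.
\]
Each of the three claimed lower bounds on $\mu$ will then follow by combining (i) high-probability extreme-value control on the two Gaussian edge-sum extremes and (ii) an explicit primal lower bound on $V_{n-K}(a)$, with $a$ chosen to fit the regime.

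\emph{Gaussian extremes and primal feasibility.} Under the model, $\{e(i,C^*):i \in C^*\}$ are iid $\calN((K-1)\mu,K-1)$ and $\{e(j,C^*):j \notin C^*\}$ are iid $\calN(0,K)$, mutually independent. Standard asymptotics for Gaussian order statistics (valid since $K,n-K\to\infty$) give, with probability $1-o(1)$,
\[
\min_{i \in C^*} e(i,C^*) \le (K-1)\mu - (1-o(1))\sqrt{2(K-1)\log K}, \qquad
\max_{j \notin C^*} e(j,C^*) \ge (1-o(1))\sqrt{2K\log(n-K)}.
\]
For the auxiliary SDP, let $M = A_{(C^*)^c \times (C^*)^c}$, an $(n-K)\times(n-K)$ symmetric Wigner matrix of zero diagonal and iid $\calN(0,1)$ off-diagonal entries. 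Any $u\in\reals^{(C^*)^c}_{\ge 0}$ with $\|u\|^2=1$ and $(\mathbf 1^\top u)^2 = a$ gives a feasible rank-one $Z=uu^\top$, so $V_{n-K}(a) \ge u^\top M u$. I will use two families of such $u$: (a) the two-support choice $u = \sqrt{1-\epsilon}\,e_i + \sqrt{\epsilon}\,e_j$ with $a = 1 + 2\sqrt{\epsilon(1-\epsilon)} \in [1,2]$, yielding $u^\top M u = (a-1)M_{ij}$ and hence $V_{n-K}(a) \ge (a-1)(1-\oprob(1))\cdot 2\sqrt{\log(n-K)}$ after maximizing $M_{ij}$ over pairs in $(C^*)^c$; and (b) the uniform-on-subset choice $u = s^{-1/2}\mathbf 1_S$ with $a=s$, which after maximizing over disjoint $s$-subsets of $(C^*)^c$ produces $V_{n-K}(s) \ge (1-\oprob(1))\cdot 2\sqrt{\log((n-K)/s)}$. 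For non-integer $a$ in the larger range, concavity of $a\mapsto V_{n-K}(a)$ allows interpolation between these two bounds.

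\emph{Regime-by-regime optimization.} Substituting the bounds above into the reformulated necessary condition, I optimize $a$ in each regime. In \emph{Regime~3} ($K=o(\sqrt n)$), the two-support bound with $a$ chosen to balance $(a-1)\cdot 2\sqrt{\log(n-K)}$ against the linear penalty $(1-a/K)\sqrt{2K\log(n-K)}$ produces, after rearrangement against the LHS bound on $\min_i e(i,C^*)$, the stated $\sqrt{(1/3)\log(n/K^2)}$ threshold. In \emph{Regime~2} ($K\asymp\sqrt n$), the same construction has an interior optimum $a^*\in(1,2)$ whose plug-in gives $\sqrt{\log(1+n/(4K^2))}$. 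In \emph{Regime~1} ($K=\omega(\sqrt n)$), the two-support contribution at $a$ close to $1$ recovers the info-theoretic $K^{-1/2}(\sqrt{2\log K}+\sqrt{2\log(n-K)})$ term; the uniform-on-subset bound applied at $s=\Theta(K)$ contributes an additional $\sqrt{n}$ on the RHS, which becomes $\sqrt{n}/(2K)$ after dividing by $K-1$ to solve for $\mu$. The additive combination of these two contributions is legitimate because the concavity of $V_{n-K}$ allows a single $a$ to pick up a constant fraction of each bound simultaneously.

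The main technical obstacle will be producing the primal lower bounds on $V_{n-K}(a)$ strong enough in each regime: the entrywise nonnegativity constraint $Z \ge 0$ prevents direct use of the top eigenvector of $M$ (which is not sign-definite), forcing reliance on the two-support and subset constructions and, for Regime~1, a careful spectral-type refinement that retains a constant fraction of $\lambda_{\max}(M) \asymp \sqrt{n}$ under the nonnegativity and affine constraints. The hardest single piece is Regime~1, where the sharp additive structure is achieved only by using \emph{both} families of primal solutions in tandem and exploiting concavity of $a\mapsto V_{n-K}(a)$; Regimes~2 and~3, in contrast, require only the elementary two-support construction followed by a straightforward calculus-level optimization in $a$.
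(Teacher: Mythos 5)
Your high-level skeleton matches the paper: apply the deterministic necessary condition of \prettyref{thm:PlantedGeneralSharp_nec}, control the two Gaussian extremes $\min_{i\in C^*}e(i,C^*)$ and $\max_{j\notin C^*}e(j,C^*)$ by order-statistic asymptotics, and lower-bound $V_{n-K}(a)$ by exhibiting explicit primal-feasible matrices. But the primal lower bounds you propose for $V_{n-K}(a)$ are too weak by orders of magnitude, and none of \prettyref{eq:sdp-nece1}--\prettyref{eq:sdp-nece3} can be deduced from them. Your two-support vector $u=\sqrt{1-\epsilon}\,e_i+\sqrt{\epsilon}\,e_j$ only covers $a\in[1,2]$ and yields $V_{n-K}(a)\ge(a-1)\cdot 2(1-o_P(1))\sqrt{\log(n-K)}=O(\sqrt{\log n})$, and your uniform-on-subset vector $u=s^{-1/2}\ones_S$ over disjoint $S$ yields only $V_{n-K}(s)=O(\sqrt{\log n})$ as well, since $u^\top Mu=\frac{2}{s}\sum_{i<j\in S}M_{ij}$ has $O(1)$ variance and there are only $(n-K)/s$ disjoint subsets. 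Interpolating two $O(\sqrt{\log n})$ anchors by concavity still gives $O(\sqrt{\log n})$. Yet the proof requires, for $K=o(\sqrt n)$, $V_{n-K}(K)=\Omega\bigl(K\sqrt{\log(n/K^2)}\bigr)$ (off by a factor of $K$ from your bound), and for $K=\omega(\sqrt n)$ it requires $V_{n-K}(a)\ge(1-o(1))\sqrt{n-K}/2$ for some $a=o(K)$. With your bounds, the supremum over $a\in[1,K]$ on the right-hand side of \prettyref{eq:general_nec_cond} is dominated by the $(1-a/K)\max_j e(j,C^*)$ term at $a=1$, and the resulting constraint on $\mu$ degenerates to the information-theoretic threshold $\frac{1}{\sqrt{K}}(\sqrt{2\log K}+\sqrt{2\log(n-K)})$ plus $o(1)$; in particular you cannot get anything of order $\sqrt{\log(n/K^2)}$ when $K=o(\sqrt n)$. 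The ``calculus-level optimization in $a$'' you cite for Regimes 2 and 3 therefore fails, not just the acknowledged gap in Regime 1.

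The missing idea is the construction of \prettyref{lmm:max}: take $Z$ with uniform diagonal $Z_{ii}=1/m$ and off-diagonal entries $Z_{ij}\propto g(M_{ij})$ where $g(x)=e^{\tau x-\tau^2/2}$ is a positive, data-dependent \emph{exponential tilt}, with $\tau$ tuned per regime. This correlates every entry of $Z$ with the corresponding entry of $M$, so $\Iprod{M}{Z}\approx(a-1)\,\Expect[M_{12}g(M_{12})]/\Expect[g(M_{12})]=(a-1)\tau$, and the PSD and normalization constraints dictate the largest admissible $\tau$: roughly $\tau\asymp\sqrt m/(2(a-1))$ when $a=\omega(\sqrt m)$ (the linearized regime $g\approx 1+\tau x$, giving $\approx\sqrt m/2$), $\tau\asymp\sqrt{(1/3)\log(m/a^2)}$ when $a=o(\sqrt m)$, and a constant $\tau$ when $a\asymp\sqrt m$. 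Controlling $\|g(M)-\Expect[g(M)]\|$ (which is \prettyref{lmm:concentration_primal}) is what makes the PSD step work, and is the hard technical input. Your rank-one constructions $Z=uu^\top$ with $u\ge 0$ are intrinsically incapable of this, because forcing a nonnegative eigenvector discards essentially all spectral correlation with $M$; there is no ``spectral-type refinement'' that rescues them while respecting the constraints $Z\ge 0$, $\Tr Z=1$, $\Iprod{\allones}{Z}=a$.
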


\begin{remark}
To deduce \prettyref{thm:sdp-nece} from the general necessary condition given in 
\prettyref{thm:PlantedGeneralSharp_nec}, we first show that the inequalities in \prettyref{eq:mine}
and \prettyref{eq:maxe} are in fact equalities. 
Then, we prove a high-probability lower bound to $V_{m}(a)$
 and choose $a=o(K)$
in \prettyref{eq:general_nec_cond} when $K=\omega(\sqrt{n})$, 
and $a=K$ when $K=O(\sqrt{n})$. 

By comparing sufficient condition \prettyref{eq:sdp-suff} and necessary condition \prettyref{eq:sdp-nece1},
we can see that the sufficient condition and necessary condition are within a factor of $4$ in the case of $K=\omega(\sqrt{n}).$
\end{remark}

\subsection{Comparison to the information-theoretic limits}
\label{sec:gauss-info}
It is instructive to compare the performance of the SDP to the information-theoretic fundamental limits.
We focus on the most interesting regime of $K \to \infty$ and $n-K \asymp n $. 
It has been shown (cf.~\cite[Theorem 4]{HajekWuXu_one_info_lim15}) that, for any $\epsilon > 0$, the MLE  (which minimizes the probability of error) achieves exact recovery if
\begin{equation}
\mu(1-\epsilon) \geq \frac{1}{\sqrt{K}} \max\sth{\sqrt{2 \log K}+\sqrt{2\log n}, 2 \sqrt{\log (n /K)}};  \label{eq:infor_exact}
\end{equation}
conversely, if
\begin{equation}
\mu(1+\epsilon) \leq \frac{1}{\sqrt{K}} \max\sth{\sqrt{2 \log K}+\sqrt{2\log n}, 2 \sqrt{\log (n /K)}},
  \label{eq:infor_exact_necc}
\end{equation}
no estimator can exactly recover the community with high probability.

%\prettyref{thm:PlantedSharp_SL} implies that a sufficient condition for SDP is
%\begin{equation}
%\mu(1-\epsilon)
%\geq \min\sth{\sqrt{\frac{2 \log K}{K} }+\sqrt{\frac{2 \log n}{K} } +  \frac{2 \sqrt{n}}{K}, 2 \sqrt{\log n} + 2 \sqrt{\log K}}  ,
%\label{eq:sdps-g}
%\end{equation}
%or equivalently,
%\begin{align}
%\mu(1-\epsilon)
%\geq
%\begin{cases}
%2 \sqrt{\frac{2 \log n}{K} } & K = \omega\pth{\frac{n}{\log n}} \\
%2 \sqrt{\frac{2 \log n}{K} } +  \frac{2 \sqrt{n}}{K} & K = \Theta\pth{\frac{n}{\log n}} \\
%\frac{2 \sqrt{n}}{K} &  \frac{\sqrt{2}}{1 + \sqrt{2}}  \sqrt{\frac{n}{\log n}} \leq  K \leq o\pth{\frac{n}{\log n}} \\
% 2 \sqrt{\log n} + 2 \sqrt{\log K}  &  K < \frac{\sqrt{2}}{1 + \sqrt{2}}  \sqrt{\frac{n}{\log n}}.
% \end{cases}
%\label{eq:sdps-g1}
%\end{align}
%%which is, of course, always stronger than \prettyref{eq:infor_exact}.
%%Here the extra term $2 \sqrt{n}$ originates from bounding the spectral norm $\|A-\expect{A}\|$
%%in the proof.
%Conversely, SDP fails for exact recovery if
%\begin{align}
%\mu(1+\epsilon)
%\leq
%\begin{cases}
%2 \sqrt{\frac{2 \log n}{K} } & K = \omega\pth{\frac{n}{\log n}} \\
%2 \sqrt{\frac{2 \log n}{K} } +  \frac{\sqrt{n}}{2 K} & K = \Theta\pth{\frac{n}{\log n}} \\
%\frac{\sqrt{n}}{2K} &   \omega(\sqrt{n}) \leq  K \leq o\pth{\frac{n}{\log n}} \\
%\sqrt{\log\pth{1 + \frac{n}{4 K^2}}}  &   K = \Theta(\sqrt{n}) \\
%\sqrt{ \frac{1}{3} \log \frac{ n}{K^2} } & K=o(\sqrt{n}). 
% \end{cases}
%\label{eq:sdpn-g1}
%\end{align}
%

Comparing \prettyref{eq:infor_exact} -- \prettyref{eq:infor_exact_necc} with \prettyref{eq:sdp-suff}, \prettyref{eq:SDP-trivial},
and  \prettyref{eq:sdp-nece1}-- \prettyref{eq:sdp-nece3}, we arrive at the following conclusion on the performance of the SDP relaxation:
\begin{itemize}
%\item $K = \omega( n \log\log n / \log n)$: It is shown in \cite{HajekWuXu_MP_submat15} that
%a simple, linear-time row-wise thresholding followed by a linear-time voting procedure achieves exact recovery with high probability.

	\item $K=\omega(n/\log n)$: Since $\sqrt{n} = o(\sqrt{ K \log n })$, in this regime SDP attains the information-theoretically optimal recovery threshold with sharp constant.

\item $K=\Theta(n/\log n)$: SDP is order-wise optimal but strictly suboptimal in terms of constants. More precisely, consider the critical regime of
\begin{equation}
	K= \frac{ \rho n }{\log n}, \quad \mu =\frac{\mu_0 \log n}{\sqrt{n}}
	\label{eq:regime}
\end{equation}
 for fixed constants $\rho,\mu_0>0$.
 Then MLE succeeds (resp.~fails) if $\rho \mu_0^2$ $>$ (resp.~$<$) $8$.
If $\rho \mu_0 > 2 \sqrt{2 \rho} + 2$, then SDP succeeds; conversely, if SDP succeeds, then $\rho \mu_0 \ge 2 \sqrt{2 \rho} + 1/2$. Moreover, it is shown in \cite{HajekWuXu_MP_submat15}
that a message passing algorithm plus clean-up
succeeds if $\rho \mu_0^2>8$ and $ \rho \mu_0 >1/\sqrt{\eexp}$, while a linear message passing
algorithm corresponds to a spectral method succeeds if $\rho \mu_0^2 >8$ and $\rho \mu_0 >1$.
Therefore, SDP is strictly suboptimal comparing to MLE,
 message passing, and linear message passing  for $\rho> 0$, $\rho > (1/\sqrt{\eexp} -1/2)^2/8$,
 and $\rho> 1/32$, respectively.
  See \prettyref{fig:phase-sdp} for an illustration.

\item $\omega(1) \leq K=o(n/\log n)$: Comparing to MLE, SDP is order-wise suboptimal. Moreover,
when $K \le n^{1/2 -\delta}$ for any  fixed constant $\delta>0$,
$\mu =\Omega(\sqrt{\log n})$ is necessary for
SDP to achieve exact recovery, while the entrywise hard-thresholding or simply picking the largest entries attains
exact recovery when $\mu (1-\epsilon) \geq 2 \sqrt{\log n}$. Thus in this regime, the more sophisticated SDP procedure
is only possible to outperform the trivial thresholding algorithm by a constant factor. Similar phenomena has been
observed in the bi-clustering problem \cite{kolar2011submatrix}, which is an asymmetric version of the submatrix localization
problem,  and the sparse PCA \cite{krauthgamer2015}.

\item
$K=\Theta(1)$: 
In this case the sufficient condition of SDP is within a constant factor of the information limit.
For the extreme case of $K=2$, SDP achieves the information limit with optimal constant, namely,
$\mu (1-\epsilon) \geq 2 \sqrt{\log n}$; however, in this case exact recovery can be trivially achieved by entrywise hard-thresholding or simply picking the largest entries.
\end{itemize}

\begin{figure}[hbt]
\centering
\includegraphics[width=3in]{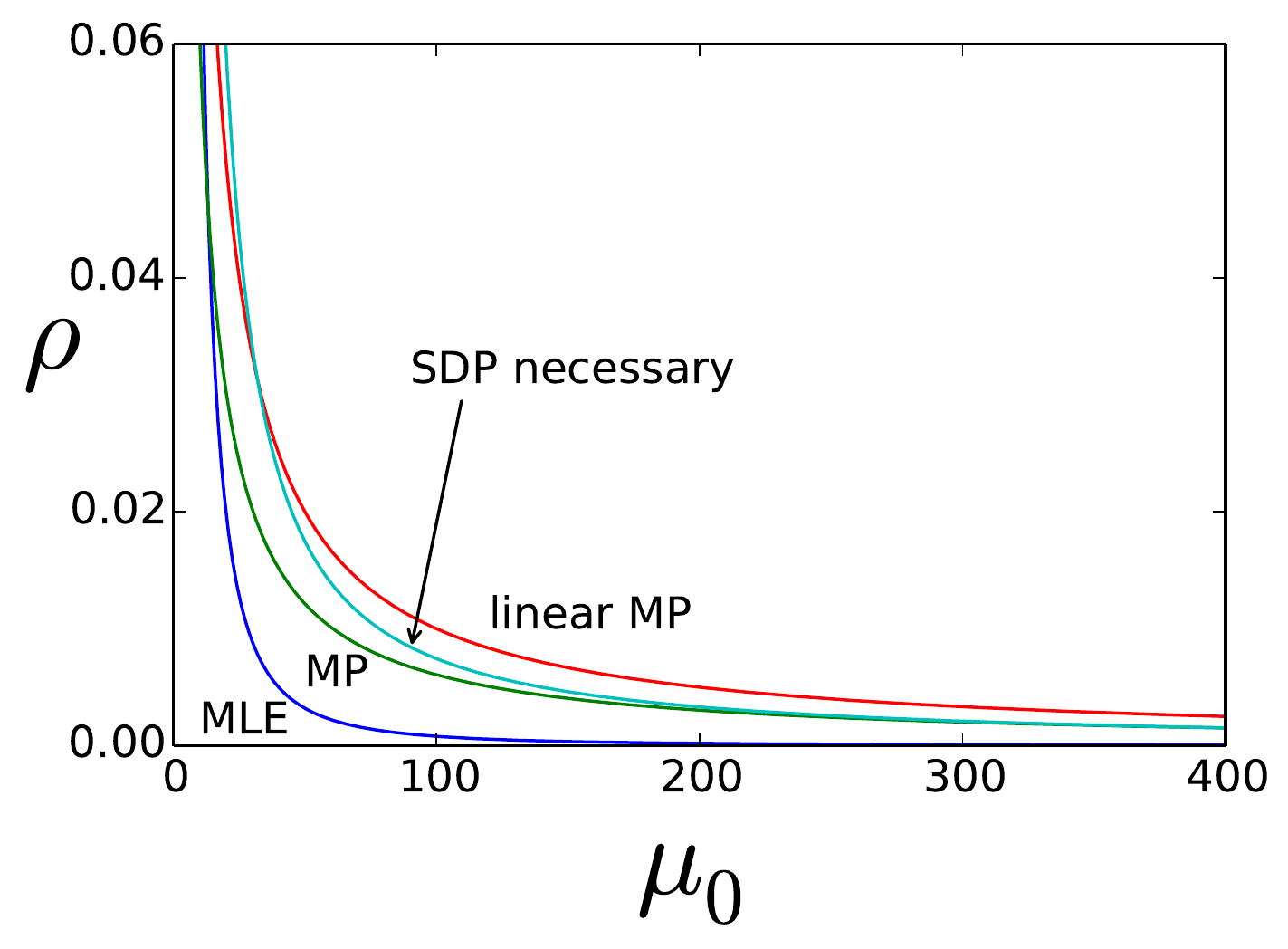}
\caption{Phase diagram for the Gaussian model with $K=\rho n /\log n$
and $\mu = \mu_0 \log n / \sqrt{n}$.   The curve MLE: $\rho \mu_0^2=1$ denotes the
information-theoretic threshold for exact recovery.
The threshold for optimized message passing (MP): $\rho^2\mu^2\eexp =1$, and
linear message passing: $\rho^2\mu^2=1$, parallel each other.   The curve
SDP necessary: $\rho\mu_0 \geq 2\sqrt{2\rho} + 1/2$ is a lower bound below
which SDP does not provide exact recovery.   The sufficient curve for SDP,
above which SDP provides exact recovery, is not shown, and lies above the
four curves shown.}
\label{fig:phase-sdp}
\end{figure}

%As in the planted clique problem, $K=\Theta(\sqrt{n})$ is a critical regime.
%We conjecture that $\pprob{\hat{Z}_{\SDP} = Z^*}$ has a sharp phase transition jumping from zero to one when $\mu$ grows past a critical value $\mu_{\sf crit}=\mu_{\sf crit}(\rho)$, where $K = \rho \sqrt{n}$.
%Our result shows that
%\[
%\sqrt{\log(1+\frac{1}{4\rho^2})} \leq \mu_{\sf crit} \leq 2/\rho.
%\]
%We expect that this range can be further narrowed down from both directions by tightening the possibility and impossibility proofs.
%Conjecture that $\mu_{\sf crit} = 1/\rho$, which coincides with the spectral limit as predicted by the rank-one deformation result. In comparison, it is shown in  \cite{Deshpande12} that the message-passing algorithm succeeds if $\lambda >1/e$ and
%further evidences are given to suggest that the message-passing fails if $\lambda<1/e$.

%\nb{JX: In closing, we remark that instead of letting $f(x)=x$ in the SDP  \prettyref{eq:PDSCVX_SL}, we 
%may choose $f(x)=\eexp^{\tau x -\tau^2/2}$ for a constant $\tau>0$ to be specified later. 
%In the case of $K=\rho \sqrt{n}$, \prettyref{thm:PlantedGeneralSharp} implies that the
%SDP succeeds provided that
%$$
%K\left(  \eexp^{\tau \mu} -1 \right)  > 2 (1+o(1)) \sqrt{n (\eexp^{\tau^2} -1 ) },
%$$
%By choosing $\tau=\mu$ which maximizes $\eexp^{\tau \mu- \tau^2/2}$ and plugging in $K=\rho \sqrt{n}$, we  get that
%SDP succeeds if
%$$
%\rho \times \sqrt{  \eexp^{\mu^2 } -1}  > 2 (1+o(1)),
%$$
%or equivalently, $ \mu> \sqrt{\log(1+\frac{4}{\rho^2})}$.
%}

\section{Planted densest subgraph}
In this section, we turn to the planted densest subgraph problem corresponding to the Bernoulli case of \prettyref{def:model},
where $P=\Bern(p)$ and $Q=\Bern(q)$ with $0 \le q < p \le 1$.
We prove both positive and negative results for the SDP relaxation of the MLE, \ie, \prettyref{eq:PDSCVX_SL} with $L=A$ 
%\nbr{YW: I moved this here and I think we do not have to repeat this in every statement.} 
being the adjacency matrix of the graph, to exactly recover the community $C^*$.

The following assumption on the  community size and graph sparsity will be imposed:
\begin{assumption}\label{ass:pds}
As $n \to \infty$, $K \to \infty$, $n-K \asymp n$,  $q$ is bounded away from $1$, and $nq =\Omega(\log n)$.
\end{assumption}

Our SDP results are in terms of the following quantities:\footnote{It can be shown that $\tau_1$ and $\tau_2$ are well-defined
%, and moreover $\tau_1, \tau_2 \in [q,p]$, 
whenever exact recovery is information-theoretically possible; see \prettyref{lmm:tau_1tau_2}.}
\begin{equation}
\begin{aligned}
\tau_1 = & ~ \text{solution of $K d(\tau \|p) = \log K$ in $\tau \in (0,p)$}	\\
\tau_2 = & ~ \text{solution of $K d(\tau \|q) = \log (n-K)$ in $\tau \in (q,1)$} 
\end{aligned}	
	\label{eq:tau12} 
\end{equation}
%Define $\tau_1 \le p$ such that $(K-1) d(\tau_1 \|p) = \log K$ and define $\tau_2 \ge q$ such that $K d(\tau_2 \|q) = \log (n-K)$. 
%Let $\tau_1 \in (0,p)$ such that $(K-1) d(\tau_1 \|p) = \log K$ and $\tau_2 \in (q,1)$ such that $K d(\tau_2 \|q) = \log (n-K)$. 

\begin{theorem} [Sufficient conditions for SDP: Bernoulli case]\label{thm:PlantedSubgraphSharp}
Suppose that \prettyref{ass:pds} holds.
% and $L=A$ in the SDP  \prettyref{eq:PDSCVX_SL}. 
If 
\begin{align}
K ( \tau_ 1 -  \tau_2) \ge \kappa \left( \sqrt{n q(1-q) } + \sqrt{ Kp(1-p) } \right),
%+ p - q + 2 ,
\label{eq:sdp-suff-Bern}
\end{align}
%with $\kappa$ defined in \prettyref{eq:kappa}, 
where
\begin{align}
\kappa= 
\begin{cases}
O(1)  & nq =\Omega(\log n) \\
4+o(1) & nq = \omega(\log n)  \\
2+o(1)  &  nq=\omega(\log^4 n)
 \end{cases},
 \label{eq:kappa}
\end{align}
then $ \pprob{\widehat{Z}_{\SDP}=Z^\ast} \to 1$ as $n \to \infty$. 
\end{theorem}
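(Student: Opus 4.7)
The plan is to invoke \prettyref{thm:PlantedGeneralSharp} with $L=A$, so that $\alpha=p$ and $\beta=q$, and verify its hypothesis
$$
\min_{i \in C^\ast} e(i, C^*)  -  \max \left\{  \max_{j \notin C^\ast} e(j, C^*),\, Kq \right \}   > \| A-\expect{A} \|-  q
$$
with high probability under \prettyref{eq:sdp-suff-Bern}. This reduces the argument to three ingredients: a lower tail bound on $\min_{i \in C^\ast} e(i,C^*)$, an upper tail bound on $\max_{j \notin C^\ast} e(j,C^*)$, and a spectral-norm bound on $A-\expect{A}$ whose constant matches the appropriate case of $\kappa$ in \prettyref{eq:kappa}.

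For the combinatorial extremes, observe that for $i\in C^*$, $e(i,C^*)\sim \Binom(K-1,p)$ (the $K$ such random variables are not jointly independent, but this does not affect a union bound), while for $j\notin C^*$ the $e(j,C^*)$ are i.i.d.\ $\Binom(K,q)$. The definition $K d(\tau_1\|p)=\log K$ and Chernoff's lower tail yield $\prob{\Binom(K-1,p)\le K\tau_1}\lesssim 1/K$, and a refinement (Bahadur--Rao style or via the local CLT) gives
$\min_{i\in C^*} e(i,C^*) \ge K\tau_1 - o(\sqrt{Kp(1-p)})$
with high probability.  An analogous union bound over $n-K$ vertices, together with $K d(\tau_2\|q)=\log(n-K)$, gives
$\max_{j\notin C^*} e(j,C^*) \le K\tau_2 + o(\sqrt{nq(1-q)})$.
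Since $\tau_2 > q$ and $n-K\to\infty$, the inner maximum in the general condition is realized by $\max_j e(j,C^*)$ rather than by $Kq$, so the left-hand side is at least $K(\tau_1-\tau_2)$ minus lower-order errors.

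The three regimes of $\kappa$ arise from different spectral bounds on $A-\expect{A}$, which naturally decomposes according to the partition $C^*\cup (C^*)^c$ into two independent centered symmetric blocks of entry variances $p(1-p)$ and $q(1-q)$. When $nq=\omega(\log^4 n)$, a sharp Wigner-type bound (\eg, Bandeira--van Handel) yields
$\|A-\expect{A}\|\le (2+o(1))\bigl(\sqrt{nq(1-q)}+\sqrt{Kp(1-p)}\bigr)$; when $nq=\omega(\log n)$, a less sharp concentration inequality for random matrices gives the same shape with constant $4+o(1)$; and when only $nq=\Omega(\log n)$, a Feige--Ofek style argument (splitting contributions into light and heavy pairs and using an $\epsilon$-net discretization) produces an $O(1)$ constant. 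Combining the three estimates and verifying that every error term is $o(K(\tau_1-\tau_2))$ under \prettyref{eq:sdp-suff-Bern} delivers the theorem. The main technical obstacle is the sparsest regime $nq=\Omega(\log n)$: the Feige--Ofek argument is delicate, and because the SDP is formulated in terms of the original $A$ (not a trimmed version), the spectral bound must be established for $A$ itself without removing vertices. A secondary difficulty is extracting the precise $o(1)$ in the constants $2+o(1)$ and $4+o(1)$ of the denser regimes, which requires carefully tracking lower-order terms simultaneously in the binomial tail expansions defining $\tau_1,\tau_2$ and in the spectral concentration estimates.
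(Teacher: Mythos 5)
Your proposal follows the same skeleton as the paper's proof: invoke \prettyref{thm:PlantedGeneralSharp} with $L=A$, bound $\min_{i\in C^*}e(i,C^*)$ and $\max_{j\notin C^*}e(j,C^*)$ via the defining equations for $\tau_1,\tau_2$ plus a union bound, and control $\|A-\expect{A}\|$ by splitting off the $C^*\times C^*$ block and applying regime-dependent spectral concentration to match \prettyref{eq:kappa}. The differences are at the level of tools, and they mostly make your route heavier than necessary. For the degree bounds, the paper does not need a Bahadur--Rao or local-CLT refinement: it uses the non-asymptotic Zubkov--Serov bounds $\prob{\Binom(m,p)\le m\tau-1}\le Q(\sqrt{2md(\tau\|p)})$ and $\prob{\Binom(m,q)\ge m\tau+1}\le Q(\sqrt{2md(\tau\|q)})$, whose polynomial prefactor in the Gaussian tail already turns the per-vertex probabilities into $o(1/K)$ and $o(1/(n-K))$, exactly the improvement over plain Chernoff that you correctly identify as needed. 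For the spectral norm, the paper's \prettyref{lmm:concen_bern} covers all three regimes of $\kappa$ without any Feige--Ofek light/heavy-pair or $\epsilon$-net argument and without trimming: symmetrization plus the Bandeira--van Handel moment bound (\prettyref{lmm:BvH}) with constant moment parameter gives the $O(1)$ case under $nq=\Omega(\log n)$ directly (Talagrand's inequality then upgrades the expectation bound), letting the moment parameter grow gives $4+o(1)$ for $nq=\omega(\log n)$ (the factor $2$ beyond the Wigner constant being the price of symmetrization), and Vu's theorem (\prettyref{lmm:Vu}) supplies the $2+o(1)$ constant when $nq=\omega(\log^4 n)$ — so the sparsest regime you flag as the main obstacle is in fact the easiest case of the lemma, and the sharp constant $2$ comes from Vu rather than Bandeira--van Handel. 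Your remaining bookkeeping (absorbing the $o(\sqrt{Kp(1-p)}+\sqrt{nq(1-q)})$ errors and noting $Kq\le K\tau_2$ so the inner maximum in \prettyref{eq:SDP_suff1} is at most roughly $K\tau_2$) matches what the paper does implicitly.
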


\begin{remark}
To deduce sufficient condition \prettyref{eq:sdp-suff-Bern} from
the general result \prettyref{eq:SDP_suff1}, we first show  that with high probability, 
 \begin{align}
 \min_{i \in C^\ast} e(i, C^*)  &  \ge (K-1) \tau_1   \label{eq:mine_b} \\
 \max_{j \notin C^\ast} e(j, C^*) & \le K \tau_2 . \label{eq:maxe_b}
 \end{align}
 Then, we prove that with high probability,
 $$
 \|A-\expect{A}\| \le \kappa \left( \sqrt{n q(1-q) } + \sqrt{ Kp(1-p) } \right).
 $$ 
Note that $\|A-\expect{A}\|$ behaves roughly the same as $\|A'-\expect{A'}\|$, where $A'$ is the adjacency matrix of $\calG(n,q)$. In light of the concentration results for Wigner matrices and the fact that $\|A'-\expect{A'}\|=\omega_P(\sqrt{nq})$ whenever $nq = o(\log n)$ (cf.~\cite[Appendix A]{HajekWuXuSDP14}), it is reasonable to expect that $\|A'-\expect{A'}\|=\sqrt{nq}(2+o_P(1))$ whenever the average degree satisfies $nq=\Omega(\log n)$; however, this still remains an open problem cf.~\cite{LeVershynin15} and the best known upper bounds depends on the scaling 
of $nq$. This explains the piecewise expression of $\kappa$ in \prettyref{eq:kappa}.
%The different values of $\kappa$ in \prettyref{eq:kappa} depending  is due to the bound on 
\end{remark}

%The following theorem gives  necessary conditions for the success of SDP in the Bernoulli case
%with $K=o(n)$. 
\begin{theorem} [Necessary conditions for SDP: Bernoulli case]\label{thm:PlantedSubgraphSharp_sdp_nec}
Suppose that \prettyref{ass:pds} holds, and $K=o(n)$. 
%\nbr {YW: why do we impose this? Logically we expect the necessary condition reduces to MLE conditoin when $K=\Theta(n)$.}
%\nb{JX: Correct. Our lower bound on $V_m(a)$ holds for $a=o(m)$, where $m=n-K$.
%In fact if $K=\Theta(n)$, we can simply take $a=1$. Then $V_m(1)=0$, and we get back the MLE
%necessary condition.}
%If $Z^\ast$ is the unique optimal solution to  \prettyref{eq:PDSCVX_SL} with $L=A$ with
%non-vanishing probability,
If $\liminf_{n \to \infty}  \pprob{Z^\ast = \widehat{Z}_{\SDP}} > 0$,
% (i.e., with probability $\Omega(1)$), 
then 
\begin{align}
K  \geq &  \frac{1}{\kappa} \sqrt{\frac{n q}{1-q} } +1, \label{eq:sdp-necc-BernXX}  \\
K ( \tau_ 1 -  \tau_2)   \geq & \frac{1}{\kappa} \sqrt{\frac{n q}{1-q} } (1 -\tau_2) -  6 \sqrt{ \frac{K p}{\log K} } -
\frac{K (p-q) ( 2 \log \log K +1) }{\log K}  ,
\label{eq:sdp-necc-Bern}
\end{align}
where $\kappa$ is defined in \prettyref{eq:kappa}.
\end{theorem}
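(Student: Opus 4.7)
The plan is to apply the general necessary condition of \prettyref{thm:PlantedGeneralSharp_nec} with $L=A$, and to translate each random quantity therein into explicit estimates for the Bernoulli model. With $L=A$ we have $e(i,C^*)\sim\Binom(K-1,p)$ when $i\in C^*$ and $e(j,C^*)\sim\Binom(K,q)$ when $j\notin C^*$, all mutually independent. Since $\liminf_n \pprob{Z^*=\widehat Z_{\SDP}}>0$ forces the deterministic inequality \prettyref{eq:general_nec_cond} to hold on an event of positive probability, it suffices to establish high-probability upper bounds on $\min_{i\in C^*} e(i,C^*)$ and matching lower bounds on $\max_{j\notin C^*} e(j,C^*)$ and on $V_{n-K}(a)$, and then combine them via \prettyref{eq:general_nec_cond}.

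The first ingredient is control of the binomial extrema. The calibration of $\tau_1,\tau_2$ in \prettyref{eq:tau12} is chosen so that the lower tail of $K$ i.i.d.\ $\Binom(K-1,p)$ variables at level $(K-1)\tau_1$ is of order $1/K$, and similarly for the upper tail of $n-K$ i.i.d.\ $\Binom(K,q)$ variables at level $K\tau_2$. A sharpened Chernoff bound combined with a Berry--Esseen/local-CLT refinement --- the latter needed to capture the sub-exponential prefactor of order $1/\sqrt{\log K}$ that an ordinary exponential bound misses --- yields with high probability
\[
\min_{i\in C^*}e(i,C^*)\le (K-1)\tau_1 + 3\sqrt{\tfrac{Kp}{\log K}} + \tfrac{K(p-q)(2\log\log K+1)}{\log K},\qquad
\max_{j\notin C^*}e(j,C^*)\ge K\tau_2 - 3\sqrt{\tfrac{Kp}{\log K}}.
\]
These error terms are precisely the residuals on the right of \prettyref{eq:sdp-necc-Bern}; the $\log\log K$ contribution absorbs both the local prefactor in the lower-tail bound and the off-by-one discrepancy of $K$ versus $K-1$ implicit in the definition of $\tau_1$.

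The heart of the argument is a lower bound on $V_{n-K}(a)$. Decomposing $M=L_{(C^*)^c\times(C^*)^c}=q(J-I)+N$, so that $\Iprod{M}{Z}=q(a-1)+\Iprod{N}{Z}$ for any feasible $Z$ in \prettyref{eq:Vm}, my plan is to exhibit a rank-one feasible $Z=vv^\top$ with $v=\sqrt{a/m}\,\ones/\sqrt m + \sqrt{1-a/m}\,\tilde w$, where $m=n-K$ and $\tilde w$ is built from the leading eigenvector of $N$ by a truncation/shift step that enforces $\tilde w\ge 0$ with $\ones^\top \tilde w=0$. A matching lower bound $\|N\|\ge \sqrt{mq(1-q)}/\kappa$, valid in each of the three regimes of \prettyref{eq:kappa}, then yields $V_{n-K}(a)\ge q(a-1)+(1-a/m)\|N\|(1-o(1))$. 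Taking $a=K$ and combining with the trivial $\min_{i\in C^*} e(i,C^*)\le K-1$ in \prettyref{eq:general_nec_cond} produces $(K-1)(1-q)\ge \sqrt{nq(1-q)}/\kappa$, which, using $K=o(n)$ and hence $m\asymp n$, rearranges into \prettyref{eq:sdp-necc-BernXX}.

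For the sharper inequality \prettyref{eq:sdp-necc-Bern}, I would plug all three estimates into \prettyref{eq:general_nec_cond} with the $a$ that balances the spectral gain against the $(1-a/K)\max_{j\notin C^*} e(j,C^*)$ penalty; the algebraic identity $\sqrt{q(1-q)}=(1-q)\sqrt{q/(1-q)}$ together with the substitution of the $\tau_2$-tail estimate into the $(a/K)\max_j e(j,C^*)$ term then produces the leading $(1-\tau_2)\sqrt{nq/(1-q)}/\kappa$ contribution, while the residuals from step two produce the two correction terms on the right of \prettyref{eq:sdp-necc-Bern}. The principal obstacle is the construction of the non-negative test vector $\tilde w$: unlike the signed leading eigenvector used freely in \prettyref{eq:delta3} for the abstract theorem, enforcing $Z\ge 0$ interacts nontrivially with the sparsity of $A$, and controlling the loss of spectral alignment uniformly over $a$ and across the three regimes of $\kappa$ is the technical crux. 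An alternative route via the KKT conditions, made available by strong duality for \prettyref{eq:PDSCVX_SL}, would require essentially the same tail estimates to rule out the existence of dual certificates.
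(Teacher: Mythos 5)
Your high-level plan (combine \prettyref{thm:PlantedGeneralSharp_nec} with tail estimates and a lower bound on $V_{n-K}(a)$) is the same as the paper's, but there are several genuine gaps in the execution, two of which are fatal.

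First, you assert that the variables $e(i,C^*)$ for $i\in C^*$ are ``all mutually independent''; they are not, because $e(i,C^*)$ and $e(i',C^*)$ share the edge variable $A_{ii'}$. The paper explicitly flags this (``the random variables $e(i,C^*)$ for $i\in C^*$ are not independent'') and handles it in \prettyref{app:minA} by fixing a small subset $T\subset C^*$ of size $K_o=\lceil K/\log K\rceil$, using the independence of $\{e(i,C^*\setminus T):i\in T\}$, and a second-moment/counting step to show at least half of the $e(i,T)$'s are not unusually large. This device, not a Berry--Esseen refinement, is what produces both the $6\sqrt{Kp/\log K}$ term (it is $6\sigma$ with $\sigma^2=(K_o-1)p$) and the threshold $\tau_1'=(1-\delta)\tau_1+\delta p$ with $\delta=\max\{2\log\log K/\log K,\log\log(n-K)/\log(n-K)\}$, whence the $\log\log K$ correction. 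Your claimed bound $\max_{j\notin C^*}e(j,C^*)\ge K\tau_2-3\sqrt{Kp/\log K}$ also does not appear and is not needed; the paper uses the cleaner $\max_{j\notin C^*}e(j,C^*)\ge K\tau_2'$ exploiting that these $n-K$ variables \emph{are} independent.

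Second, the proposed construction for $V_{n-K}(a)$ cannot work. A vector $\tilde w$ with $\tilde w\ge 0$ and $\ones^\top\tilde w=0$ is identically zero, so the rank-one test matrix $vv^\top$ collapses. Your resulting bound $V_{n-K}(a)\ge q(a-1)+(1-a/m)\|N\|(1-o(1))$ is also internally inconsistent: at $a=1$ it asserts $V_{n-K}(1)\gtrsim\|N\|>0$, whereas $V_m(1)=0$ always. The paper's Lemma~\ref{lmm:maxBernoullie} instead builds a \emph{full-rank} witness $Z$ with $Z_{ii}=1/m$ and $Z_{ij}=\alpha M_{ij}+\beta$ for $i\neq j$, choosing $\alpha,\beta\ge 0$ so that $\Tr Z=1$ and $\Iprod{\allones}{Z}=a$; entrywise non-negativity is then automatic because $M\ge 0$, and $Z\succeq 0$ is checked via $\|M-\expect{M}\|\le\kappa\sqrt{mq(1-q)}$. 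This sidesteps entirely the obstacle you identified.

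Third, your route to \eqref{eq:sdp-necc-BernXX} is circular. When $K-1<\frac{1}{\kappa}\sqrt{(n-K)q/(1-q)}$, Lemma~\ref{lmm:maxBernoullie} gives $V_{n-K}(K)=K-1$ exactly, so combining with $\min_{i\in C^*}e(i,C^*)\le K-1$ in \eqref{eq:general_nec_cond} yields only the vacuous $K-1\ge K-1$. The branch $V_{n-K}(K)\ge(K-1)q+\sqrt{mq(1-q)}/\kappa$ that you invoke is valid only once $K-1$ is already at least $\frac{1}{\kappa}\sqrt{mq/(1-q)}$, which is what you are trying to prove. The paper instead proves \eqref{eq:sdp-necc-BernXX} by contradiction: when it fails, it produces an explicit feasible $Z$ supported on $(C^*)^c\times(C^*)^c$ (built from the same witness as Lemma~\ref{lmm:maxBernoullie} at $a=K$) with $\Iprod{Z}{A}=K(K-1)\ge\Iprod{Z^*}{A}$, contradicting \emph{uniqueness} of $Z^*$; this is exactly where the hypothesis $\liminf\pprob{Z^*=\widehat{Z}_{\SDP}}>0$ (unique optimum), rather than the weaker $Z^*\in\widehat{Z}_{\SDP}$ of Theorem~\ref{thm:PlantedGeneralSharp_nec}, is used. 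Finally, the optimizing $a$ in \eqref{eq:sdp-necc-Bern} is not determined by balancing against a penalty term as you suggest; it is the boundary value $a=\frac{1}{\kappa}\sqrt{(n-K)q/(1-q)}+1$ where the two regimes of Lemma~\ref{lmm:maxBernoullie} meet and $V_{n-K}(a)=a-1$.
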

\begin{remark}
We prove \prettyref{eq:sdp-necc-BernXX} by contradiction: assuming \prettyref{eq:sdp-necc-BernXX} 
is violated, we construct explicitly a high-probability feasible solution $Z$ to  \prettyref{eq:PDSCVX_SL} 
based on the optimal solution of SDP defining $V_{n-K}(K)$ given in \prettyref{eq:Vm}, and show that $\Iprod{A}{Z}=\Iprod{A}{Z^*}$, contracting the unique optimality of $Z^*$. Notice that in the special case of $p=1$ (planted clique), $Z^*$ is always a maximizer of the SDP \prettyref{eq:PDSCVX_SL} therefore the failure of SDP amounts to multiple maximizers.

To deduce  the necessary condition \prettyref{eq:sdp-necc-Bern} from 
\prettyref{thm:PlantedGeneralSharp_nec}, we first establish some  inequalities similar to
 \prettyref{eq:mine_b}
and \prettyref{eq:maxe_b} but in the reverse direction.
%\begin{align}
 %\min_{i \in C^\ast} e(i, C^*)  &  \le K \tau_1 + 6 \sqrt{ \frac{K p}{\log K} }     \\
 %\max_{j \notin C^\ast} e(j, C^*) & \ge K \tau_2 . 
%\end{align}
Then, we prove a high-probability lower bound to $V_{m}(a)$
 and choose $a=\frac{1}{\kappa} \sqrt{\frac{n q}{1-q} } +1$. 
%in \prettyref{eq:general_nec_cond} when $K=\omega(\sqrt{n})$, 
%and $a=K$ when $K=O(\sqrt{n})$. 
\end{remark}

\begin{remark}
\label{rmk:PC}
Particularizing \prettyref{thm:PlantedSubgraphSharp} and \prettyref{thm:PlantedSubgraphSharp_sdp_nec} to the planted clique problem ($p=1$ and $q=1/2$), we conclude that: for any fixed $\epsilon>0$, if $K \geq 2(1+\epsilon) \sqrt{n}$, then SDP succeeds (namely, $Z^*$ is the unique optimal solution to \prettyref{eq:PDSCVX_SL}) with high probability; conversely, if $K \le (1-\epsilon) \sqrt{ n}/2$, SDP fails with high probability.
In comparison, a message passing algorithm plus clean-up is shown in \cite{Deshpande12} to succeed if
$K> (1+\epsilon)\sqrt{n/\eexp}$.

%\prettyref{eq:sdp-necc-BernXX} in \prettyref{thm:PlantedSubgraphSharp_sdp_nec}
%implies that $K \ge (1-\epsilon) \sqrt{ n}/2$ is necessary for $Z^*$ to be the unique optimal solution to SDP with non-vanishing probability, while \prettyref{thm:PlantedSubgraphSharp} implies that $K > 2(1+\epsilon) \sqrt{n}$
%is sufficient for $Z^*$  be the unique optimal solution to SDP  with high probability, for any fixed $\epsilon>0$.
%In comparison, a message passing algorithm plus clean-up is shown in \cite{Deshpande12} to succeed if
%$K> (1+\epsilon)\sqrt{n/\eexp}$.
%It is shown in \cite[Theorem 1.1]{FK03} that  $s$-round SDP relaxation in the Lov\'asz-Schrijver hierarchy succeeds (resp.~fails) if the clique size is above $4 (2^{-(s+1)/2}-\epsilon) \sqrt{n}$ (resp.~below  $ (2^{-(s+1)/2}+\epsilon) \sqrt{n}$), for any $\epsilon >0$, which coincides with our SDP result when $s=1$. 
\end{remark}

Assume that $\log \frac{p(1-q)}{q(1-p)}$ is bounded. If
$\frac{K(p-q)}{\sqrt{nq}} =O(1)$, 
then the sufficient condition of SDP given in \prettyref{thm:PlantedSubgraphSharp} 
reduces to $\frac{K(\tau_1-\tau_2)}{\sqrt{nq}} \ge \Omega(1)$, 
while 
the necessary condition of SDP given in  \prettyref{thm:PlantedSubgraphSharp_sdp_nec} 
reduces to $\frac{K(\tau_1-\tau_2)}{\sqrt{nq}} \ge \Omega(1)$.
Thus, the sufficient and necessary conditions are within constant factors of each other.
If instead $\frac{K(p-q)}{\sqrt{nq}} =\omega(1)$, then  
SDP attains the information-theoretic recovery threshold with sharp constants, as shown
in the next subsection.

\subsection{Comparison to the information-theoretic limits}
\label{sec:PDS-info}
In this section, we compare the performance limits of SDP with the information-theoretic limits of exact recovery obtained in \cite{HajekWuXu_one_info_lim15} under the assumption that $\log \frac{p(1-q)}{q(1-p)}$ is bounded
%$p/q$ is bounded, $p$ is bounded away from $1$, 
and $K/n$ is bounded away from $1$. 
Let
\begin{align}
\tau^\ast \triangleq \frac{ \log \frac{1-q}{1-p} + \frac{1}{K} \log \frac{n}{K} }{\log \frac{p(1-q)}{q(1-p) } }. \label{eq:deftau}
\end{align}
It is shown in  \cite[Theorem 3]{HajekWuXu_one_info_lim15} that, the optimal estimator, MLE, achieves exact recovery if
\begin{equation}
\liminf_{n \to \infty} \frac{K d(\tau^\ast \| q) }{ \log n} > 1,  \text{ and } \liminf_{n \to \infty} \frac{K d(p \| q) }{ \log (n/K) } > 2
 \label{eq:infor_exact_Bernoulli}.
\end{equation}
Conversely, if
%if MLE achieves exact recovery, then 
\begin{equation}
%\liminf_{n \to \infty} \frac{K d(\tau^\ast \| q) }{ \log n} \ge 1 ,  \text { or } \liminf_{n \to \infty} \frac{K d(p \| q) }{ \log (n/K) } \ge 2.
\limsup_{n \to \infty} \frac{K d(\tau^\ast \| q) }{ \log n} < 1 ,  \text { or } \limsup_{n \to \infty} \frac{K d(p \| q) }{ \log (n/K) } < 2,
 \label{eq:infor_exact_Bernoulli_necess}
\end{equation}
no estimator can exactly recover the community with high probability.

%Before comparing the SDP conditions (Theorems \ref{thm:PlantedSubgraphSharp} and \ref{thm:PlantedSubgraphSharp_sdp_nec}) to the %information limit \prettyref{eq:infor_exact_Bernoulli}--\prettyref{eq:infor_exact_Bernoulli_necess}, we pause to show that $\tau_1$ and $%\tau_2$ in \prettyref{eq:tau12} are well-defined whenever exact recovery is possible, \ie, MLE succeeds.%monotonicity gives roughly: $d(1\|q) \geq d(\tau^*\|q) \geq \frac{\log n}{K}(1-o(1))$ and $d(0\|p) \geq d(\tau^*\|p) \geq \frac{\log K}{K} (1-o(1))$ which is almost but not quite enough.

Next we compare the SDP conditions (Theorems \ref{thm:PlantedSubgraphSharp} and \ref{thm:PlantedSubgraphSharp_sdp_nec}) to the information limit \prettyref{eq:infor_exact_Bernoulli}--\prettyref{eq:infor_exact_Bernoulli_necess}. Without loss of generality, we can
assume the MLE necessary conditions holds. Our results on the performance limits of SDP lead to  the following observations:
\begin{itemize}
\item $K= \omega( n / \log n)$. In this case, \prettyref{eq:infor_exact_Bernoulli} implies \prettyref{eq:sdp-suff-Bern}
and thus SDP attains the  information-theoretic recovery threshold with sharp constants.
To see this, note that 
%\prettyref{lmm:divergencecontinuity} together with \prettyref{eq:infor_exact_Bernoulli} 
\prettyref{lmm:tau_1tau_2} shows that
$\tau_1 \ge (1-\epsilon)\tau^\ast + \epsilon p$
and $\tau_2 \le (1-\epsilon) \tau^\ast + \epsilon q$ for some  small constant $\epsilon>0$. 
Moreover, 
\prettyref{lmm:divBound} and \prettyref{lmm:taupq} imply that 
\begin{align}
\frac{K d(\tau^\ast \|q) }{ \log n} \asymp \frac{K(p-q)^2 } {q \log n} = \left[ \frac{n} {K \log n} \right] \frac{K^2(p-q)^2}{ nq }. \label{eq:divergencebound}
\end{align}
Therefore, if $K=\omega(n/\log n)$, \prettyref{eq:infor_exact_Bernoulli} implies that
$K q =\Omega(\log n)$ and $K(p-q)/\sqrt{nq} \to \infty$, and consequently
\begin{align*}
K (\tau_1 - \tau_2) \ge  \epsilon K  (p-q) =\omega( \sqrt{ n q} ),
\end{align*}
which in turn implies condition \prettyref{eq:sdp-suff-Bern}.
This result recovers the previous result in the special case of $K= \rho n$, $p=a \log n/n$, and $q=b \log n/n$
with fixed constants $\rho, a, b$, where  SDP has been shown to attain   the information-theoretic   recovery threshold with sharp constants \cite{HajekWuXuSDP14}.

\item $K=o(n/\log n)$. In this case, condition  \prettyref{eq:sdp-necc-Bern} together with $q \le \tau_2 \le p $ and
$ \tau_1 \le p$
implies that $K(p-q)/\sqrt{nq}=\Omega(1)$. In comparison, in view of \prettyref{eq:divergencebound}, 
$K(p-q)/\sqrt{nq} = \omega(\sqrt{K \log n/ n} )$ is sufficient for
the information-theoretic sufficient condition \prettyref{eq:infor_exact_Bernoulli} to hold.
Hence, in this regime, SDP is order-wise suboptimal. 
%falls short of attaining the sharp recovery threshold.
\end{itemize}

The above observations imply that a gap between the performance limit of SDP and information-theoretic
 limit emerges at $K=\Theta(n/\log n)$. 
To elaborate on this, consider the following regime:
\begin{align}
K = \frac{\rho n}{\log n}, \quad p= \frac{a \log^2 n}{n},  \quad q= \frac{b \log^2 n}{n},  \label{eq:lognregime}
\end{align}
where $\rho>0$ and $a>b>0$ are fixed constants.
Let $I( x, y) \triangleq x- y \log (\eexp x/y)$ for $x,y>0$.
Let $\gamma_1$ satisfy $\gamma_1 < a$ and $ \rho I(a, \gamma_1)=1$
 and $\gamma_2$ satisfy  $ \gamma_2 > b$ and $ \rho I(b, \gamma_2) =1$.
The following corollary follows from the performance limit of MLE given by \prettyref{eq:infor_exact_Bernoulli}-\prettyref{eq:infor_exact_Bernoulli_necess} and that of SDP given by \prettyref{eq:sdp-suff-Bern}-\prettyref{eq:sdp-necc-Bern}. 
\begin{corollary}\label{cor:lognregime}
%Consider the regime \prettyref{eq:lognregime}. 
Assume the scaling \prettyref{eq:lognregime}.
\begin{itemize}
\item If $\gamma_1>\gamma_2$, then MLE attains exact recovery; conversely, 
if MLE attains exact recovery, then $\gamma_1 \ge \gamma_2$.
\item If $\rho( \gamma_1- \gamma_2)> 4 \sqrt{b} $, then SDP attains exact recovery; 
conversely, if SDP attains exact recovery, 
then $\rho ( \gamma_1-\gamma_2)  \ge \sqrt{b}/4$.
\end{itemize}
\end{corollary}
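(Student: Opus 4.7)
The strategy is to substitute the scaling \prettyref{eq:lognregime} into the conditions established in \prettyref{thm:PlantedSubgraphSharp}, \prettyref{thm:PlantedSubgraphSharp_sdp_nec} and the information-theoretic limits \prettyref{eq:infor_exact_Bernoulli}--\prettyref{eq:infor_exact_Bernoulli_necess}, keeping only the leading asymptotics. The first step, which feeds into all four claims, is to locate $\tau_1$ and $\tau_2$. Writing $\tau_i = x_i \log^2 n / n$ and using the small-argument expansion $d(x\|y) = x\log(x/y) + y - x + O(x^2 + y^2)$ as $x,y \to 0$, the defining equations $K d(\tau_1\|p) = \log K$ and $K d(\tau_2\|q) = \log(n-K)$ reduce at leading order to $\rho \, I(a, x_1) = 1$ and $\rho \, I(b, x_2) = 1$. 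Together with the monotonicity of $I(a,\cdot)$ on $(0,a)$ and of $I(b,\cdot)$ on $(b,\infty)$, this forces $x_1 \to \gamma_1$ and $x_2 \to \gamma_2$, so that $K(\tau_1 - \tau_2) = (\rho(\gamma_1-\gamma_2) + o(1))\log n$.

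For the MLE claim, in the scaling \prettyref{eq:lognregime} the second condition of \prettyref{eq:infor_exact_Bernoulli}, $K d(p\|q)/\log(n/K) > 2$, is automatic since $K d(p\|q) = \Theta(\log n)$ while $\log(n/K) = \log\log n + O(1)$. For the first, I would show $\tau^\ast = (c + o(1))\log^2 n /n$ with $c = (a-b)/\log(a/b)$, giving $K d(\tau^\ast\|q)/\log n \to \rho \, I(b, c)$. The key algebraic identity $I(a, c) = I(b, c)$, which follows directly from $c\log(a/b) = a - b$, shows that this limit equals $\rho \, I(a, c)$ as well. The main obstacle is to convert $\rho\, I(b, c) > 1$ into the stated condition $\gamma_1 > \gamma_2$. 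To handle this I would compare the two curves $\gamma \mapsto \rho\, I(a,\gamma)$ (decreasing on $(0,a)$, vanishing at $\gamma_1$) and $\gamma \mapsto \rho\, I(b,\gamma)$ (increasing on $(b,\infty)$, vanishing at $\gamma_2$); they cross at $\gamma = c$ by the identity above, and the opposing monotonicities imply that their common value at $c$ exceeds $1$ if and only if $\gamma_2 < c < \gamma_1$, which is equivalent to $\gamma_1 > \gamma_2$. The converse direction (necessity of $\gamma_1 \ge \gamma_2$ for MLE success) follows from the same comparison applied to \prettyref{eq:infor_exact_Bernoulli_necess}.

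For the SDP claim, under \prettyref{eq:lognregime} one has $\sqrt{nq(1-q)} = (\sqrt{b}+o(1))\log n$, $\sqrt{Kp(1-p)} = O(\sqrt{\log n}) = o(\log n)$, $1-\tau_2 = 1-o(1)$, and $nq = b\log^2 n = \omega(\log n)$, so \prettyref{eq:kappa} gives $\kappa = 4 + o(1)$. Substituting these together with the asymptotics of $K(\tau_1-\tau_2)$ into \prettyref{eq:sdp-suff-Bern} immediately produces the sufficient condition $\rho(\gamma_1-\gamma_2) > 4\sqrt{b}$. For the converse, I would verify that the two correction terms in \prettyref{eq:sdp-necc-Bern}, namely $6\sqrt{Kp/\log K} = O(1)$ and $K(p-q)(2\log\log K + 1)/\log K = O(\log\log n)$, are both $o(\log n)$ and hence negligible next to the leading $(\sqrt{b}/4 + o(1))\log n$ term on the right-hand side, yielding the necessary condition $\rho(\gamma_1-\gamma_2) \ge \sqrt{b}/4$.
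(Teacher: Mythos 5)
Your proof is correct and follows essentially the same route as the paper: the paper also introduces $\tau_0 = (a-b)/\log(a/b)$ (your $c$), notes $I(a,\tau_0)=I(b,\tau_0)$, uses the opposite monotonicities of $I(a,\cdot)$ and $I(b,\cdot)$ to pin $\tau_0$ between $\gamma_2$ and $\gamma_1$, computes $Kd(\tau^\ast\|q)/\log n \to \rho I(b,\tau_0)$ via a Taylor expansion of $d(\cdot\|q)$, and handles the SDP part by substituting the asymptotics of $\tau_1,\tau_2$ into \prettyref{eq:sdp-suff-Bern} and \prettyref{eq:sdp-necc-Bern}. You additionally spell out the derivation of $\tau_i = (\gamma_i + o(1))\log^2 n / n$ from the defining equations \prettyref{eq:tau12} via the small-argument expansion of the binary divergence, a step the paper asserts with "by definition" and leaves implicit.
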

The proof is deferred to \prettyref{app:bern}. The above corollary implies that in the regime of \prettyref{eq:lognregime}, SDP is order-wise optimal, but strictly suboptimal by a constant factor.
In comparison, as shown in \cite{HajekWuXu_one_beyond_spectral15}, 
belief propagation plus clean-up succeeds if $\gamma_1 > \gamma_2$
and $\rho (a-b) > \sqrt{b/\eexp}$, while a linear message-passing algorithm 
corresponding to spectral method succeeds if $\gamma_1 > \gamma_2$
and $\rho (a-b) > \sqrt{b}$.

%\section{Multiple-community stochastic block model}
\section{Stochastic block model with $\Omega(\log n)$ communities}
\label{sec:sbm}
In this section, we consider the stochastic block model 
with $r\ge 2$ communities of size $K$ in a network of $n=r K$ nodes. 
Derived in \cite{HajekWuXuSDP15,ABBK,perry2015semidefinite}, the
following SDP is a natural convex relaxation of MLE:\footnote{There are slightly different
but equivalent ways to impose the constraints.  Under the
condition $Y \succeq 0,$   the constraint    $\Iprod{Y}{\allones} =0$
is equivalent to $Y \ones =0$, which is the formulation used in \cite{HajekWuXuSDP15}.}
% which in turn is equivalent to the
%condition: $ \Iprod{Y}{ e_i \mathbf{1}^\top + \mathbf{1} e_i^\top} =0, \quad i \in [n] ,$
%where $e_i$ is  unit vector parallel to the $i^{th}$ axis.}
\begin{align}
\hat{Y}_{\SDP} = \argmax_{Y \in \reals^{n\times n} }  & \; \Iprod{A}{Y} \nonumber  \\
\text{s.t.	} & \; Y \succeq 0, \nonumber  \\
& \;  Y_{ii} =1, \quad i \in [n] \nonumber \\
& \;  Y_{ij} \ge -\frac{1}{r-1}, \quad i, j \in [n]  \nonumber \\
& \;  \Iprod{Y}{\allones} =0. \label{eq:SBM_multiple}
\end{align}
Define the $n\times n$ symmetric matrix $Y^\ast$ corresponding to the true clusters 
by $Y^\ast_{ij} =1$ if vertices $i$ and $j$ are in the same cluster, including the case $i=j$,
and $Y^\ast_{ij}=- \frac{1}{r-1}$ otherwise. 

Consider $p= \frac{\alpha \log n}{K}$  and $q= \frac{\beta \log n}{K}$ for
fixed constants $\alpha > \beta > 0$.
For constant number of communities, namely $r=O(1)$,
the sharp optimality of SDP has been established in \cite{HajekWuXuSDP15}:
if $\sqrt{\alpha}-\sqrt{\beta}>1$,
%\nbr{BH: Should be $r$.   Makes sense because it is harder to
%detect smaller communities.}
%\nbr{JX: I think it should be $1$. Because the parameterizations of $p$ and $q$ are in terms of $K$. As $r$ get larger, $K$ gets smaller, so $p$ and $q$ also gets larger. }
$\hat{Y}_{\SDP} = Y^\ast$ with high probability; conversely,
if $\sqrt{\alpha}-\sqrt{\beta}<1$  and 
 the clusters are uniformly chosen at random
among all $r$-equal-sized partitions of $[n]$,
then for any sequence of estimators $\widehat{Y}_n$,
$\pprob{\widehat{Y}_n=Y^\ast} \to 0$ as $n \to \infty$.
The optimality of SDP has been extended to $r = o(\log n)$ communities
in \cite{ABBK}.   Determining whether SDP continues
to be optimal for $r = \Omega(\log n),$ or equivalently, for communities
of  size $K=O( \frac{n}{\log n})$, is left as an open question in \cite{ABBK}. 
Next, we settle this question by proving that in contrast to the MLE, SDP 
is constantwise suboptimal when $r \geq C \log n$ for sufficiently large $C$,
and orderwise suboptimal when $r \gg \log n$. 
What remains open is to assert the suboptimality of SDP for \emph{all} $r = \Theta(\log n)$ similar to the single-community case.

%The following theorem establishes a necessary condition
%for SDP under the stochastic block model. 

\begin{theorem}
\label{thm:SBM_sdp_nec}
Suppose  $p = o(1)$, $q=\Theta(p),$ and $r\to \infty.$ 
If  $\liminf_{n \to \infty}  \pprob{\widehat{Y}_{\SDP}=Y^\ast} > 0$,
then
\begin{equation}
	K(p-q)^2\geq  \frac{rq^2(1+o(1))}{p\kappa^2},
	\label{eq:sdp-necc-Bern_SBM}
\end{equation}
%\begin{align}
%\frac{K (p-q) \sqrt{n p}}{Kp + (n-K) q} \geq  \frac{1+o(1)}{\kappa},
%\label{eq:sdp-necc-Bern_SBM}
%\end{align}
where $\kappa$ is the constant defined in \prettyref{eq:kappa}.
\end{theorem}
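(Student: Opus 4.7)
The plan is to adapt the primal perturbation argument behind \prettyref{thm:PlantedGeneralSharp_nec} and \prettyref{thm:PlantedSubgraphSharp_sdp_nec} to the multi-community SDP \eqref{eq:SBM_multiple}, proceeding by contradiction. Suppose that $\liminf_{n\to\infty}\pprob{\widehat{Y}_{\SDP}=Y^*}>0$ but that, for some fixed $\eta>0$, $K(p-q)^2< rq^2(1-\eta)/(p\kappa^2)$. The aim is to produce, on a high-probability event, a matrix $Y$ feasible for \eqref{eq:SBM_multiple} satisfying $\Iprod{A}{Y}\geq \Iprod{A}{Y^*}$, contradicting the hypothesis that $Y^*$ is the unique maximizer with positive probability.

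Fix a pair of distinct clusters $C_a,C_b$ and build a perturbation $\delta$ supported on the $2K\times 2K$ block indexed by $C_a\cup C_b$, directly modeled on $\delta_1+\delta_2+\delta_3$ from \eqref{eq:delta1}--\eqref{eq:delta3}. The analogue of $\delta_3=2\epsilon U$ comes from an auxiliary SDP like \eqref{eq:Vm} restricted to the two-cluster block: let $U^*$ be an approximate optimizer of
\[
V:=\max\bigl\{\Iprod{A|_{(C_a\cup C_b)\times(C_a\cup C_b)}}{Z}:Z\succeq0,\;Z\geq0,\;\Tr(Z)=1,\;\Iprod{\allones}{Z}=a\bigr\},
\]
with $a$ of order $\sqrt{Kq}/\kappa$ (mirroring the choice $a=\frac{1}{\kappa}\sqrt{nq/(1-q)}+1$ in \prettyref{thm:PlantedSubgraphSharp_sdp_nec}). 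The analogues of $\delta_1$ and $\delta_2$ are within-cluster and diagonal corrections chosen so that, to leading order in $\epsilon$, the perturbed matrix $Y^*+\delta$ has zero diagonal shift, satisfies $(Y^*+\delta)_{ij}\geq-1/(r-1)$ (the off-diagonal block $C_a\times C_b$ starts at $-1/(r-1)$ so only nonnegative increments are allowed there, while the within-cluster $C_a\times C_a,\,C_b\times C_b$ blocks, where $Y^*=1$, have ample slack to absorb small negative adjustments), and respects $\Iprod{\allones}{\delta}=0$. The PSD constraint is preserved because the cancellations force the net perturbation to lie, to leading order, in the kernel of $Y^*$, which has codimension $r-1$.

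Assembling the pieces, the objective change is $\Iprod{A}{\delta}=2\epsilon V-2\epsilon\,(\text{within-cluster loss})+o(\epsilon)$. The two quantitative inputs are: (i) a high-probability lower bound $V\geq(1-o(1))\,\kappa^{-1}\sqrt{Kq(1-q)}\cdot h(a)$ obtained from a nonnegative PSD ansatz exploiting that $\|A_{C_a\times C_b}-q\allones\|$ concentrates at $\kappa\sqrt{Kq(1-q)}$; and (ii) binomial concentration of the within- and cross-cluster degrees $d_i(C_t)=\sum_{j\in C_t}A_{ij}$, which controls the loss. Combining these, the hypothesized violation $K(p-q)^2<rq^2(1-\eta)/(p\kappa^2)$ makes $\Iprod{A}{\delta}>0$ with positive probability, producing the desired contradiction.

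The principal obstacle is verifying $Y^*+\delta\succeq0$ simultaneously with the box and sum constraints: $Y^*$ has an $(r-1)$-dimensional strictly positive eigenspace spanned by zero-sum combinations of $\{\mathbf{1}_{C_s}\}$, and the lifted $U^*$ has nontrivial overlap with $\mathbf{1}_{C_a}$ and $\mathbf{1}_{C_b}$ that must be cancelled by the $\delta_1,\delta_2$-analogues; this requires the same delicate bookkeeping as in \eqref{eq:delta1}--\eqref{eq:delta3} but now spread across two clusters rather than inside one. A secondary technical point is the $q^2/p$ (rather than $q$) factor in the stated bound: under $q=\Theta(p)$ this is only a constant adjustment, but it arises naturally from the ratio between the within-cluster perturbation cost (scaling with $p$) and the cross-cluster gain (scaling with $q$).
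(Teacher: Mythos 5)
Your proposal attempts a local primal perturbation of $Y^*$, supported on a $2K\times 2K$ block indexed by two clusters, closely mimicking $\delta_1,\delta_2,\delta_3$ from the single-community argument. The paper's proof is structurally quite different and considerably simpler: it constructs a \emph{global} candidate
\[
Y = s A + t(\allones - \identity) + w(\bfd \ones^\top + \ones \bfd^\top - 2D) + \identity,
\]
built entirely from $A$ and the degree vector $\bfd = A\ones$, with \emph{no reference to the true partition at all}. The scalars $s,t,w$ are pinned down by $Y\ones=0$ and by requiring $\Iprod{A}{Y}=(1+\epsilon)\Iprod{A}{Y^*}$; feasibility of the box constraint follows from degree concentration, and the PSD constraint reduces to $s\|A-\Expect A\|\le 1-t-sp$, which holds precisely when \eqref{eq:sdp-necc-Bern_SBM} is violated. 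This is a variant of the ansatz used to prove \prettyref{lmm:maxBernoullie}, not an analogue of the $\delta_1,\delta_2,\delta_3$ perturbation.

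Beyond the stylistic difference, there is a genuine gap in your approach that you do not close: a perturbation supported on a single $2K\times 2K$ block cannot produce the factor $r$ in \eqref{eq:sdp-necc-Bern_SBM}. The paper's $r$-dependence enters through the global spectral norm bound $\|A-\Expect A\|\le\kappa\sqrt{np}=\kappa\sqrt{rKp}$: with $s\asymp(p-q)/(rq)$, the PSD condition $s\kappa\sqrt{np}\lesssim 1$ becomes $K(p-q)^2\lesssim rq^2/(p\kappa^2)$. If instead both the gain (your restricted $V$, which you bound below by $\kappa^{-1}\sqrt{Kq(1-q)}\cdot h(a)$) and the within-cluster cost are computed on a $2K\times 2K$ block, then all quantities scale with $K$ rather than $n=rK$, and the resulting contradiction condition would carry no factor of $r$. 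This yields at best $K(p-q)^2\gtrsim q^2/(p\kappa^2)$, which is weaker by a factor of $r$ and would not establish suboptimality of SDP for $r\to\infty$. To recover the stated bound one would have to perturb simultaneously across all $\binom{r}{2}$ cluster pairs (or equivalently, spread the perturbation across the whole $n\times n$ matrix), at which point the careful per-pair bookkeeping you flag as the ``principal obstacle'' becomes much harder, and the construction effectively collapses to something like the paper's global $Y$. You also leave the lower bound on the mixed-structure auxiliary SDP $V$, the exact diagonal cancellation needed to preserve the hard constraint $Y_{ii}=1$, and the kernel-alignment argument for $Y^*+\delta\succeq 0$ all unverified, so even the $r$-free bound is not actually established.
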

\begin{proof}
	\prettyref{sec:pf-sbm}.
\end{proof}

%\begin{remark}
%If $r\to\infty$, or equivalently, if $K=o(n),$  then in view of the assumption $q=\Theta(p),$   the
%necessary condition \eqref{eq:sdp-necc-Bern_SBM} is equivalent to 
%\begin{equation}
%	K(p-q)^2\geq  \frac{rq^2(1+o(1))}{p\kappa^2}.
%	\label{eq:sdp-necc-Bern_SBM-r}
%\end{equation}
%\end{remark}

\begin{remark}  \label{rmk:ChenXu}
% [Suboptimality of SDP for many communities]
Under the assumption of $q =\Theta(p)$, the information-theoretic condition has been established in \cite{ChenXu14}: MLE succeeds with high probability if and only if 
\begin{equation}
K (p-q)^2 \asymp q \log n.
	\label{eq:chenxu}
\end{equation}
%for some constant $c$,
%$\hat{Y}_{\ML}$
%Comparing the sufficient condition \prettyref{eq:chenxu} for MLE 
Comparing \prettyref{eq:chenxu} to the necessary condition \prettyref{eq:sdp-necc-Bern_SBM} for SDP, 
we immediately conclude that SDP 
is orderwise suboptimal if $r=\omega(\log n)$, or equivalently, $K=o( \frac{n}{\log n})$.
Furthermore, if $r \geq C \log n$ for a sufficiently large constant $C$, SDP is suboptimal in terms of constants, which  is consistent with the single-community result in \prettyref{sec:main}.
\end{remark}

\section{Discussions}

%
%6. Stress that the optimality and suboptimality of SDP originates from the comparison of two terms in
%the success conditions of SDP:
%one term is associated with the local perturbation by swapping a node in cluster with another node outside
%of the cluster; the other term is related to the ``spectral perturbation``. If the first term dominates
%the second one, SDP is optimal with sharp constants. If the first term is comparable to the second term,
%SDP is suboptimal with a constant factor. If the first term is dominated by the second term, SDP is suboptimal
%order-wise. \\
%}

In this paper, we derive a sufficient condition and a necessary condition for the
success of an SDP relaxation \prettyref{eq:PDSCVX_SL} for exact recovery under the general $P/Q$ model. For both the Gaussian
and Bernoulli cases, the general results imply that the SDP
attains the information-theoretic recovery limits with sharp constants if and only
if $K=\omega(n/\log n)$. 
Loosely speaking, there are two types of perturbation which can lead to a higher objective value and 
prevent the true cluster matrix $Z^*$ being the unique maximizer of the SDP. One is the  \emph{local} perturbation of 
the ground truth corresponding to swapping a node in the community with one
outside. In order for exact recovery to be informationally possible, 
the optimal estimator, MLE, must also remain insensitive to this local perturbation.
The other is the \emph{global} perturbation induced by the solution of the
auxiliary SDP \prettyref{eq:Vm}. This global perturbation is closely related to the 
spectral perturbation, \ie, $\|A-\expect{A}\|$, which is responsible for the suboptimality of 
the spectral algorithms. It turns out that when $K=\omega(n/\log n)$, the local perturbation
dominates the global one, leading to the attainability of the optimal threshold by SDP;
however, when $K=O(n/\log n)$, the local perturbation is dominated by the global one, 
resulting in the suboptimality of SDP.

An interesting future direction is to establish upper and lower bounds of SOS relaxations
for the problem of finding a hidden community in relatively sparse SBM. 
%In addition, it is conjectured  that
%SDP may \nbr{YW: this is already a conjecture so delete may?} be suboptimal for recovering $\Omega(\log n)$ equal-sized communities 
%\cite{ChenXu14,ABBK}. It would be interesting to prove this conjecture by extending the proof technique
%of this paper. 

\section{Proofs }
In this section, we prove our main theorems. 
In particular, \prettyref{sec:proof_suff} contains the proofs of SDP sufficient conditions
given in \prettyref{thm:PlantedGeneralSharp}, \prettyref{thm:PlantedSharp_SL}, 
and \prettyref{thm:PlantedSubgraphSharp}. 
The proofs of SDP necessary conditions given in \prettyref{thm:PlantedGeneralSharp_nec},
\prettyref{thm:sdp-nece},  and \prettyref{thm:PlantedSubgraphSharp_sdp_nec}
are presented in \prettyref{sec:proof_nec}.

\subsection{Sufficient Conditions}\label{sec:proof_suff}
In this subsection, we provide the proof of \prettyref{thm:PlantedGeneralSharp}, as well as
the proofs of its further consequence
in the Gaussian and Bernoulli cases.

Before the main proofs, we need a dual certificate lemma,  providing a set of deterministic conditions which is both sufficient and necessary
for the success of SDP  \prettyref{eq:PDSCVX_SL}.
\begin{lemma}\label{lmm:PlanteddenseKKT}
$Z^\ast$ is an optimal solution to \prettyref{eq:PDSCVX_SL} if and only if the following KKT conditions hold:
there exist $D=\diag{d_i} \ge 0$, $B \in \calS^n$ with $B \ge 0$, $\lambda, \eta \in \reals$ such that $ S \triangleq D-B- L +  \eta \identity + \lambda \allones$ satisfies $S \succeq 0$, and
\begin{align}
S \xi^\ast &=0, \label{eq:KKT1}\\
d_i(Z^\ast_{ii}-1) & =0, \quad \forall i, \label{eq:KKT2}\\
B_{ij} Z^\ast_{ij} &=0, \quad \forall i,j. \label{eq:KKT3}
\end{align}
If further
 \begin{align}
 \lambda_2(S)&>0,  \label{eq:KKT11}
 \end{align}
 or
 \begin{align}
\min_{i \in C^*} d_i >0,  \text{ and } \min_{ (i,j) \notin C^* \times C^*: i \neq j} B_{ij} >0, \label{eq:KKT22}
 \end{align}
 then $Z^\ast$ is the unique  optimal solution to \prettyref{eq:PDSCVX_SL}.
\end{lemma}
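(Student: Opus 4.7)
The plan is to recognize \prettyref{lmm:PlanteddenseKKT} as the standard KKT characterization of optimality for the SDP \prettyref{eq:PDSCVX_SL}, and to verify strong duality so that KKT is both necessary and sufficient. First I would check Slater's condition by exhibiting an interior feasible point. For instance, $Z_0 = (\alpha - \beta)\identity + \beta \allones$ with $\alpha = K/n$ and $\beta = K(K-1)/(n(n-1))$ satisfies $\Tr(Z_0) = K$, $\Iprod{\allones}{Z_0} = K^2$, $0 < \beta$, $\alpha < 1$, and has eigenvalues $\alpha - \beta = K(n-K)/(n(n-1)) > 0$ (with multiplicity $n-1$) and $\alpha + (n-1)\beta > 0$, so $Z_0 \succ 0$ with all inequality constraints strict. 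This yields strong duality, making the KKT conditions necessary and sufficient for $Z^\ast$ to be a maximizer.

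Next I would form the Lagrangian
\[
\calL(Z) = \Iprod{L}{Z} + \Iprod{S}{Z} - \sum_i d_i (Z_{ii} - 1) + \Iprod{B}{Z} - \eta(\Tr(Z) - K) - \lambda(\Iprod{\allones}{Z} - K^2),
\]
with duals $S \succeq 0$, $D = \diag{d_i} \ge 0$, symmetric $B \ge 0$, and free $\eta, \lambda \in \reals$. Stationarity in $Z$ yields $S = D - B - L + \eta \identity + \lambda \allones$. Complementary slackness on $Z_{ii} \le 1$ and $Z_{ij} \ge 0$ gives \prettyref{eq:KKT2} and \prettyref{eq:KKT3} verbatim, while complementary slackness on the PSD constraint is $\Iprod{S}{Z^\ast} = 0$; together with $S, Z^\ast \succeq 0$ this forces $S Z^\ast = 0$, and since $Z^\ast = \xi^\ast(\xi^\ast)^\top$ it is equivalent to $S \xi^\ast = 0$, which is \prettyref{eq:KKT1}. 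For the sufficient direction, given dual variables satisfying KKT, for any feasible $Z$
\[
\Iprod{L}{Z} = \sum_i d_i Z_{ii} - \Iprod{B}{Z} + \eta \Tr(Z) + \lambda \Iprod{\allones}{Z} - \Iprod{S}{Z} \le \sum_i d_i + \eta K + \lambda K^2,
\]
using $d_i Z_{ii} \le d_i$, $\Iprod{B}{Z} \ge 0$ (both $B, Z \ge 0$), $\Iprod{S}{Z} \ge 0$ (both $S, Z \succeq 0$), and the equality constraints; at $Z = Z^\ast$, the KKT identities together with $d_i = 0$ for $i \notin C^\ast$ (forced by \prettyref{eq:KKT2}) saturate this upper bound, certifying optimality.

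It remains to prove uniqueness under either \prettyref{eq:KKT11} or \prettyref{eq:KKT22}. Let $\tilde Z$ be any other optimizer; tightness of the inequality chain yields $\Iprod{S}{\tilde Z} = 0$, $d_i(\tilde Z_{ii} - 1) = 0$, and $B_{ij} \tilde Z_{ij} = 0$ for all $i, j$. Under \prettyref{eq:KKT11}, $S\xi^\ast = 0$ combined with $\lambda_2(S) > 0$ makes the kernel of $S$ one-dimensional, spanned by $\xi^\ast$; the relation $S\tilde Z = 0$ (which follows from $\Iprod{S}{\tilde Z} = 0$ with $S, \tilde Z \succeq 0$) then forces the range of $\tilde Z$ into $\mathrm{span}(\xi^\ast)$, so $\tilde Z = t \xi^\ast(\xi^\ast)^\top$ for some $t \ge 0$, and the trace constraint pins $t = 1$. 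Under \prettyref{eq:KKT22}, complementary slackness pins $\tilde Z_{ii} = 1$ for $i \in C^\ast$ and $\tilde Z_{ij} = 0$ for $(i,j) \notin C^\ast \times C^\ast$ with $i \neq j$; the trace constraint then forces $\tilde Z_{ii} = 0$ for $i \notin C^\ast$, so $\tilde Z$ is supported on the $C^\ast \times C^\ast$ block, call it $W$, a $K \times K$ PSD matrix with unit diagonal satisfying $\ones^\top W \ones = K^2$. The PSD Cauchy--Schwarz bound $\ones^\top W \ones \le K \Tr(W) = K^2$ is tight only when $W = \ones \ones^\top$, giving $\tilde Z = Z^\ast$. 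The main obstacle is notational rather than conceptual; the only subtle step is realizing that the two distinct uniqueness hypotheses collapse $\tilde Z$ onto $Z^\ast$ through genuinely different mechanisms, namely the spectrum of $S$ in the first case versus the supports of $D$ and $B$ in the second.
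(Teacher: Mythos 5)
Your proof is correct and follows essentially the same route as the paper's: same strictly feasible point for Slater's condition (yours equals $\tfrac{K(n-K)}{n(n-1)}\identity + \tfrac{K(K-1)}{n(n-1)}\allones$ after simplification), same use of strong duality to get KKT in both directions, and the same two uniqueness mechanisms via $\lambda_2(S)>0$ and via the support conditions on $D,B$. The only cosmetic difference is the last step of the second uniqueness case, where you close with a Cauchy--Schwarz/rigidity argument on the PSD block $W$ while the paper simply notes that $0\le \tZ_{ij}\le 1$ on $C^*\times C^*$ and counts entries in $\Iprod{\allones}{\tZ}=K^2$; both are equally elementary.
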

\begin{proof}
Notice that
$Z=  \frac{K(n-K)}{n(n-1)}\identity+ \frac{K(K-1)}{n(n-1)} \allones $ is strictly feasible to \prettyref{eq:PDSCVX_SL}, \ie,
the Slater's condition holds, which implies, via Slater's theorem for SDP, that strong duality holds (see, \eg, \cite[Section 5.9.1]{Boyd2004}). Thus the KKT conditions given in \prettyref{eq:KKT1}--\prettyref{eq:KKT3} are both sufficient and necessary
for the optimality of $Z^\ast$.

To show the uniqueness of $Z^\ast$ under condition \prettyref{eq:KKT11} or condition \prettyref{eq:KKT22},
consider  another optimal solution $\tZ$. Then,
\begin{align*}
\Iprod{S}{\tZ} & =\Iprod{D-B-L+ \eta \identity + \lambda \allones }{\tZ} \overset{(a)}{=}\Iprod{D-B-L}{\tZ} + \eta K + \lambda K^2 \\
 & \overset{(b)}{\le} \Iprod{D-L}{Z^\ast} +\eta K + \lambda K^2  {=}\Iprod{S}{Z^\ast} =0.
\end{align*}
where $(a)$ holds because $\Iprod{\identity}{\tZ}=K$ and $\Iprod{\allones}{\tZ}=K^2$;
$(b)$ holds because $\Iprod{L}{\tZ}=\Iprod{L}{Z^\ast}$, $B,\tZ \ge 0$, and $\Iprod{D}{\tZ} \leq \sum_{i\in C^*} d_{i} = \Iprod{D}{Z^*}$ in view of $d_i \geq 0$ and $\tZ_{ii} \leq 1$ for all $i \in [n]$. It follows that the inequality $(b)$ holds with equality, and thus $\Iprod{D}{ \tilde{Z} -Z^\ast}=0$ and $\Iprod{B}{\tilde{Z}} =0$.

Suppose \prettyref{eq:KKT11} holds. Since $\tZ \succeq 0$,  $S \succeq 0$, and $\Iprod{S}{\tZ}=0$,
$\tZ$ needs to be a multiple of $Z^\ast=\xi^\ast (\xi^\ast)^\top$.
Then $\tZ=Z^\ast$ since $\Tr(\tZ)=\Tr(Z^\ast)=K$.

Suppose instead  \prettyref{eq:KKT22} holds.  Since $\Iprod{B}{\tilde{Z}} =0$ and $B,\tZ \ge 0$,
it follows that $\tilde{Z}_{ij}=0$ for all $i \neq j$ such that $(i,j) \notin C^* \times C^\ast$. Also, in view of
$\Iprod{D}{ \tilde{Z} -Z^\ast}=0$ and $\tZ_{ii} \leq 1$, we have that $\tZ_{ii}=1$ for all $i \in C^*$.
Hence, $\tZ_{ii}=0$ for all $i \notin C^\ast$ due to $\Iprod{\identity}{\tZ}=K$. Finally, it follows from $\Iprod{\allones}{\tZ}=K^2$ that $\tZ_{ij}=1$ for all $(i,j) \in C^\ast \times C^\ast$. Hence, we conclude that
$\tZ=Z^\ast$.
\end{proof}

\begin{proof}[Proof of \prettyref{thm:PlantedGeneralSharp}]
We construct $(\lambda, \eta, S, D, B)$ which
satisfy the conditions in \prettyref{lmm:PlanteddenseKKT}. 
Observe that to satisfy \prettyref{eq:KKT1}, \prettyref{eq:KKT2}, and \prettyref{eq:KKT3},
we need that $D=\diag{d_i}$ with
\begin{align}
d_i=\left\{
 \begin{array}{rl}
   \sum_{j \in C^\ast} L_{ij} - \eta - \lambda K   & \text{if } i \in C^\ast\\
   0 & \text{otherwise}
    \end{array} \right., \label{eq:positivityD}
\end{align}
and $B_{ij} = 0$ for $i, j \in C^\ast$, and
\begin{equation}
	\sum_{j \in C^\ast} B_{ij} = \lambda K - \sum_{j \in C^\ast } L_{ij},  \quad \forall i \notin C^\ast,\label{eq:positivityB}
\end{equation}
where, given $\lambda$,  $\eta$  can be chosen without loss of generality to be:
\begin{align*}
\eta  & = \min_{i \in C^\ast} e(i, C^*) - \lambda K.
\end{align*}
There remains flexibility in the choice of $\lambda$ and the completion
of the specification of $B$.   
Recall that $\alpha= \mathbb{E}_P[ L_{12} ]$ and $\beta=\mathbb{E}_Q[ L_{12} ]$.
We let
\begin{align*}
\lambda &=  \max \left\{ \max_{i \notin C^\ast } e(i, C^*) /K,  \beta \right\}  \\
B_{ij}  & = b_i \indc{i \notin C^*, j \in C^\ast} + b_j \indc{i \in C^*, j \notin C^\ast},
\end{align*}
where
$b_i \triangleq \lambda - \frac{1}{K}\sum_{j \in C^\ast} L_{ij}$ for $i \notin C^\ast$.
By definition, we have $d_i(Z^\ast_{ii}-1) =0$
and $B_{ij} Z^\ast_{ij} =0$ for all $i,j \in [n]$. Moreover, for all $i \in C^\ast$,
\begin{align*}
d_i \xi^\ast_i = d_i =\sum_{j} L_{ij} \xi^\ast_j  - \eta - \lambda K
= \sum_{j} L_{ij} \xi^\ast_j + \sum_j B_{ij} \xi^\ast_j - \eta - \lambda K,
\end{align*}
where the last equality holds because $B_{ij}=0$ if $(i,j) \in C^\ast \times C^\ast$;
for all $i \notin C^\ast$,
\begin{align*}
\sum_j  L_{ij}  \xi^\ast_j + \sum_j   B_{ij}\xi^\ast_j   - \lambda K = \sum_{j \in C^\ast}  L_{ij} + K b_i -\lambda K =0,
\end{align*}
where the last equality follows from our choice of $b_i$. Hence, $D \xi^\ast=L \xi^\ast + B \xi^\ast - \eta \xi^\ast - \lambda K \mathbf{1}$ and consequently $S \xi^\ast =0$.  Also,
by definition, $\min_{i \in C^\ast} d_i \ge 0$ and $\min_{i \notin C^\ast} b_i \ge 0$,
and thus  $D \ge 0$, $B \ge 0$.

It remains to verify $S \succeq 0$ with $\lambda_2(S)>0$,  \ie,
\begin{equation}
\inf_{x \Perp \xi^\ast, \|x\|_2=1} x^\top S x  > 0.	
	\label{eq:lambda2PDS}
\end{equation}
Since 
\begin{align*}
\expect{L}=  \left( \alpha+\beta \right) Z^\ast  + \beta \allones- \alpha
\begin{bmatrix}
 \identity_{K \times K} & \zeros \\
\zeros & \zeros
\end{bmatrix}
 - \beta \begin{bmatrix}
  \zeros & \zeros \\
\zeros & \identity_{(n-K) \times (n-K)}
\end{bmatrix}  , 
\end{align*}
it follows that for any $x \Perp \xi^\ast$ and $\|x\|_2=1$,
\begin{align}
x^\top S x & ~ = x^\top D x - x^\top B x +  (\lambda - \beta) x^\top \allones x + \alpha \sum_{i \in C^\ast} x_i^2  + \beta \sum_{i \notin C^\ast} x_i^2
+ \eta - x^\top \left(L -\expect{L} \right) x
\nonumber  \\
& ~ \overset{(a)}{=} \sum_{ i \in C^\ast}  d_i   x_i^2  + (\lambda - \beta) x^\top \allones x 
+ \alpha \sum_{i \in C^\ast} x_i^2
 + \beta  \sum_{i \notin C^\ast} x_i^2 + \eta -x^\top \left(L -\expect{L} \right) x  \nonumber \\
& ~ \overset{(b)}{\ge}   \min_{ i \in C^\ast} \sum_{i \in C^\ast}   d_i  x_i^2 + (\lambda - \beta) x^\top \allones x
   + \beta  + \eta
 - \| L -\expect{L}\| \nonumber \\
 &~  \overset{(c)}{>}  \left( \min_{ i \in C^\ast} d_i  \right)  \sum_{i \in C^\ast} x_i^2 \ge 0.
  \label{eq:semidefinitebound}
 \end{align}
where $(a)$ holds because $B_{ij}=0$ for all $(i, j) \in C^\ast \times C^\ast$ and
\begin{align*}
x^\top B x  = 2 \sum_{i \notin C^\ast} \sum_{j \in C^\ast} x_i x_j B_{ij}=
2 \sum_{i \notin C^\ast} x_i b_{i} \sum_{j \in C^\ast} x_j =0 ;
\end{align*}
$(b)$ follows due to the assumption that $\alpha \ge \beta$ and the fact that $ x^\top (L-\expect{L} ) x  \le \|L -\expect{L} \|$;
$(c)$ holds because by assumption  $\eta > \| L -\expect{L}\| - \beta$ and $\lambda   \ge \beta $;
the last inequality follows due to $ \min_{i \in C^\ast} d_i \ge 0$.
Therefore, the desired  \prettyref{eq:lambda2PDS} holds in view of \prettyref{eq:semidefinitebound}, completing the proof.
\end{proof}

%%%%% Gaussian case Sufficient $$$$$$$$$$$$$$$$$$
\subsubsection{Gaussian case}

We need the following standard result in extreme value theory (\eg,  see \cite[Example 10.5.3]{DN70} and use union bound).
\begin{lemma}
Let $\{Z_i\}$ be a sequence of standard normal random variables. Then
\[
\max_{i \in [m]} Z_i \leq  \sqrt{2 \log m} +o_P(1),
\quad m \diverge,
\]
with equality if the random variables are independent.
	\label{lmm:extreme}
\end{lemma}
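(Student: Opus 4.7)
The plan is to prove the two halves of the inequality separately, using the classical Mills-ratio bounds on the standard Gaussian tail. Write $\overline\Phi(t) = \Prob\{Z_1 > t\}$ and recall the standard two-sided estimate
\[
\frac{t}{\sqrt{2\pi}(t^2+1)}\, e^{-t^2/2} \;\le\; \overline\Phi(t) \;\le\; \frac{1}{\sqrt{2\pi}\, t}\, e^{-t^2/2}, \qquad t>0.
\]

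First I would handle the distribution-free upper bound via a union bound. For arbitrary $\epsilon>0$, set $t_m = \sqrt{2\log m} + \epsilon$. Then, without any dependence assumption,
\[
\Prob\!\left\{\max_{i\in[m]} Z_i > t_m \right\} \le m\, \overline\Phi(t_m) \le \frac{m}{\sqrt{2\pi}\, t_m} \exp\!\left(-\frac{(\sqrt{2\log m}+\epsilon)^2}{2}\right),
\]
and expanding the square shows this is of order $(\log m)^{-1/2} m^{-\epsilon\sqrt{2\log m}/\sqrt{\log m}} \to 0$ as $m\diverge$. Since $\epsilon$ is arbitrary, this yields $\max_i Z_i \le \sqrt{2\log m} + o_P(1)$.

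For the matching lower bound under independence, I would use the complementary event and the lower Mills bound. Fix $\epsilon>0$ and set $s_m = \sqrt{2\log m} - \epsilon$. By independence,
\[
\Prob\!\left\{\max_{i\in[m]} Z_i \le s_m\right\} = \bigl(1-\overline\Phi(s_m)\bigr)^m \le \exp\!\bigl(-m\,\overline\Phi(s_m)\bigr).
\]
The lower Mills bound gives $m\,\overline\Phi(s_m) \ge \frac{s_m}{\sqrt{2\pi}(s_m^2+1)}\, m\, e^{-s_m^2/2}$, and plugging in $s_m^2/2 = \log m - \epsilon\sqrt{2\log m} + \epsilon^2/2$ shows $m\,\overline\Phi(s_m)$ grows at least like $(\log m)^{-1/2} \exp(\epsilon\sqrt{2\log m})$, which tends to $+\infty$. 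Hence $\Prob\{\max_i Z_i \le s_m\} \to 0$, so $\max_i Z_i \ge \sqrt{2\log m} - \epsilon + o_P(1)$ in the independent case.

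There is no real obstacle here; the only care needed is to keep track of the lower-order terms in the expansion of $(\sqrt{2\log m} \pm \epsilon)^2/2$ to confirm that the polynomial prefactors in the Mills estimates are absorbed by the exponential gap. Combining the two halves yields the stated $o_P(1)$ approximation, with equality in the independent case, exactly as claimed.
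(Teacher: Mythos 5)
Your proof is correct and matches the approach the paper points to: the paper simply cites a standard reference and remarks ``use union bound,'' which is exactly your upper-bound argument, while the independent lower bound via $(1-\overline\Phi(s_m))^m \le e^{-m\overline\Phi(s_m)}$ and the lower Mills estimate is the standard companion step. (One cosmetic slip: $m e^{-t_m^2/2} = e^{-\epsilon\sqrt{2\log m}-\epsilon^2/2}$, not $m^{-\epsilon\sqrt{2\log m}/\sqrt{\log m}}$, but both vanish, so the conclusion is unaffected.)
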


\begin{proof}[Proof of \prettyref{thm:PlantedSharp_SL}]
In the Gaussian case, $\mathbb{E}_P[A_{12}] =\mu$  and $\mathbb{E}_Q[A_{12}] =0$. Hence, 
in view of \prettyref{thm:PlantedGeneralSharp}, it suffices to show that
with probability tending to one,
\begin{align}
    \min_{i \in C^\ast}  \sum_{j \in C^*} A_{ij}  - 
    \max \left\{  \max_{i \notin C^\ast} \sum_{j \in C^*} A_{ij}  , 0 \right \}  & > \| A-\expect{A} \|    \label{eq:SDP_suff1_g}. 
\end{align}
By \prettyref{lmm:extreme},
\begin{align*}
  \max_{i \notin C^\ast}  \sum_{j \in C^*} A_{ij}  \le \sqrt{2 K \log(n- K)}  + o_P(\sqrt{K}).
   \end{align*}
Note that  $ \left \{ \sum_{j \in C^*} A_{ij} : i \in C^\ast \right \}$ are not mutually independent.
By \prettyref{lmm:extreme} applied to $-A_{ij},$
\begin{align*}
 \min_{i \in C^\ast}  \sum_{j \in C^*} A_{ij}  \ge (K-1) \mu -  \sqrt{2 (K-1) \log K} + o_p(\sqrt{K}) .
\end{align*}
By \prettyref{lmm:GDS},
%A modification of the proof of Davidson and Szarek \cite[Theorem II.11]{Davidson01} (based on Slepian's lemma and
%concentration of Gaussian measures) yields that if $Y$ is a symmetric random matrix such that the random variables
%$(Y_{ij}: 1\leq i \leq j \leq n)$ are independent, Gaussian, with $\expect{Y_{ij}}=0$ and $\var (Y_{ij}) \leq 1$
%then   $\prob{ \norm{Y}\geq 2\sqrt{n} + t } \leq 2\exp(-t^2/4)$ for any $t>0$.   Thus, 
for any sequence $t_n \to \infty,$
$$
\norm{A-\expect{A}} \leq 2\sqrt{n}+t_n
$$ with probability converging to one.
Hence,  in view of the assumption \prettyref{eq:sdp-suff}, we have that  \prettyref{eq:SDP_suff1_g} holds with high probability. 

In the remainder, we prove \prettyref{eq:SDP-trivial} for any $K \geq 2$ implies that $Z^*$ is the unique
optimal solution of the SDP.
We write $T= \{ (i,j) \in C^\ast \times C^\ast : i \neq j  \}$ and $T^c = \{ (i,j) \in [n] \times [n]: i \neq j  \} \backslash T$.
Recall that for distinct $i,j$, $A_{ij} \sim \calN(\mu,1)$ if $i,j\in C^*$ and $\calN(0,1)$ otherwise.
Using \prettyref{lmm:extreme} and the assumption \prettyref{eq:SDP-trivial},  we have
\begin{equation}
\min_{ (i,j) \in T } A_{ij} > \max_{ (i,j) \in T^c } A_{ij}	
	\label{eq:minmax}
\end{equation}
 with probability converging to $1$. Hence, without loss of generality,  we can and do assume that \prettyref{eq:minmax} holds  in the following.  Let $Z$ be any feasible solution of SDP \prettyref{eq:PDSCVX_SL}.
Since  $Z_{ii} \leq 1$ for all $i$ and $Z \succeq 0,$ it follows that $|Z_{ij}| \leq 1$ for all $i,j$.
Hence $0\le Z \le \allones$.    Also, $\Iprod{\allones-\identity}{Z} = K(K-1)$.   So
$\Iprod{Z}{A}$ is a weighted sum of the terms $(A_{ij}: i\neq j),$  where the weights
$Z_{ij}$ are nonnegative, with values in [0,1], and total weight equal to $K(K-1)$.   The sum is thus
maximized if and only if all the weight is placed on the $K(K-1)$ largest terms, namely $A_{ij}$ with $(i,j)\in T$,
which are each strictly larger than the other terms.  Thus, $Z^*$ is the unique maximizer.

\end{proof}

%%%%%% Bernoulli case Sufficient %%%%%%
\subsubsection{Bernoulli case}

\begin{proof}[ Proof of \prettyref{thm:PlantedSubgraphSharp}]
In  the Bernoulli case, $\mathbb{E}_P[A_{12}] =p$ and $\mathbb{E}_Q[A_{12}] =q$.  Hence, in view of \prettyref{thm:PlantedGeneralSharp},  it reduces to
show that with probability tending to one,
\begin{align}
    \min_{i \in C^\ast}  \sum_{j \in C^*} A_{ij}  - 
    \max \left\{ \max_{i \notin C^\ast} \sum_{j \in C^*} A_{ij} , Kq \right \}   & > \| A-\expect{A} \|  - q  \label{eq:SDP_suff1_b}. 
\end{align}
We will use the following upper bounds for the binomial distribution tails \cite[Theorem 1]{ZubkovSerov13}:
\begin{align}
\prob{ \Binom(m ,p) \le m \tau -1 } &\le Q\left( \sqrt{2 m d ( \tau \| p ) } \right), \quad 2/m \le \tau  \le p, \\
\prob{ \Binom(m, q) \ge m \tau + 1 } & \le Q\left( \sqrt{ 2m d( \tau \| q) } \right), \quad  q \le \tau \le 1-1/m,
\end{align}
where $Q(\cdot)$ denotes the standard normal tail probability.
By the definition of $\tau_1$ and $\tau_2$, it follows that
\begin{align*}
\prob{ \sum_{j \in C^*} A_{ij}    \leq   (K-1) \tau_1 -1 }  &\le Q ( \sqrt{ 2 (K-1) \log K / K} ) = o(1/K), \quad  \forall i \in C^*\\
\prob{  \sum_{j \in C^*} A_{ij }     \geq   K \tau_2  + 1 }  &\le Q ( \sqrt{ 2 \log (n-K) } ) = o(1/(n-K)), \quad \forall i \notin C^*.
\end{align*}
By the union bound,  with high probability, 
\begin{align*}
  \min_{i \in C^\ast}  \sum_{j \in C^*} A_{ij}   & >  (K-1) \tau_1 -1 \\
 \max_{i \notin C^\ast} \sum_{j \in C^*} A_{ij}  & < K \tau_2  + 1.
\end{align*}

We decompose $A=A_1+A_2$, where $A_1$ is obtained from $A$ by setting all entries not in $C^\ast \times C^\ast$ to be zero; similar, $A_2$ is obtained from $A$
by setting all entries in $C^\ast \times C^\ast$ to be zero. 
Applying \prettyref{lmm:concen_bern} yields that 
with high probability, 
\begin{align*}
\| A - \expect{A} \| & \le \| A_1 - \expect{A_1} \| + \| A_2 - \expect{A_2} \| \\
& \le \kappa  \left( \sqrt{ Kp(1-p) } + \sqrt{ n q(1-q)} \right),
\end{align*}
where $\kappa$ is defined in \prettyref{eq:kappa}. 
Hence, in view of the assumption \prettyref{eq:sdp-suff-Bern}, we have that \prettyref{eq:SDP_suff1_b} holds with high probability.
\end{proof}

%%%% Necessary Conditions %%%%%

\subsection{Necessary conditions} \label{sec:proof_necc}
\label{sec:proof_nec}

\begin{proof}[Proof of \prettyref{thm:PlantedGeneralSharp_nec}]
The proof is a slight variation of the heuristic derivation
given before the statement of \prettyref{thm:PlantedGeneralSharp_nec}.
Fix $K, n, C^*,$ the matrix $L$, and a constant
$a$ with $1\leq a \leq K$ and let $r=\frac{a}{K}$.
Suppose the indices are ordered and the matrix $U$
is defined as in the heuristic derivation.

Let $Z$ be defined as a function of $\epsilon \geq 0$ as follows.   We shall specify
 $\alpha$ and $\beta$ depending on $\epsilon$ for sufficiently small $\epsilon$
in such a way that
 \begin{align}   \label{eq:alpha_beta}
 \alpha \leq 1,~  \alpha = 1+O(\epsilon^2),~  \beta \geq 1-r, ~ \beta=1-r + O(\epsilon) .
 \end{align}
 Let $\xi_{\epsilon}$ be the column vector with $K+1$ nonzero entries, defined by
$\xi_\epsilon = (1, \dots  , 1, 1-\epsilon, \beta\epsilon, 0 , \ldots , 0)^\top$.
Finally,  let $Z=\alpha \xi_{\epsilon}  \xi_{\epsilon}^T + 2\epsilon U$.   In expanded form:
\begin{align*}
Z & = \alpha \left[
    \begin{array}{cccc;{2pt/2pt}cc}
        1  & \cdots &    1      &  1-\epsilon  &  \beta\epsilon & \hspace{1in} \\
  \vdots & \vdots & \vdots & \vdots & \vdots   \\
        1  & \cdots &    1      &  1-\epsilon  &  \beta\epsilon  \\  
        1-\epsilon  & \cdots &    1-\epsilon      &  (1-\epsilon)^2  &  \beta\epsilon(1-\epsilon)  \\ \hdashline[2pt/2pt]
 \beta\epsilon & \cdots &   \beta\epsilon   & \beta\epsilon(1-\epsilon)  & \beta^2\epsilon^2  \\
&&&&\\
&&&&\\
&&&&\\
    \end{array}
\right]   + 2\epsilon U
\end{align*}
Up to $o(\epsilon)$ terms, $Z$ is equal to the matrix $Z^* + \delta_1 + \delta_2 + \delta_3$ described in the
heuristic derivation.
Clearly for  $\epsilon$ sufficiently small,  $Z\geq 0,$  $Z\succeq 0,$ and $Z_{ii} \leq 1$.
It is also straightforward to see that
$$
\Iprod{L}{Z-Z^*} = 2\epsilon \left( (1-r)  \max_{i\not\in C^*} e(i,C^*) + V_{n-K}(a)  - \min_{i\not\in C^*} e(i,C^*)  \right) + o(\epsilon),
$$
so that once we establish the feasibility of $Z,$ the proof will be complete.   That is, it remains to
show that $\alpha$ and $\beta$ can be selected for sufficiently small $\epsilon$ so that
\eqref{eq:alpha_beta}, $\Iprod{\identity}{Z}= K$, and  $\Iprod{\allones}{Z}= K^2$ hold true.
The later two equations can be written as
\begin{align}
\alpha\left\{   K -2\epsilon  + (1+\beta^2)\epsilon^2  \right\}  & = K-2\epsilon   \label{eq:alphabetaI}  \\
\alpha \left\{ K - (1-\beta)\epsilon \right\}^2 & = K^2 - 2\epsilon K r   \ \label{eq:alphabetaII}.
\end{align}
Combining \eqref{eq:alphabetaI} and \eqref{eq:alphabetaII} to eliminate $\alpha$ and simplifying
yields the following equation for $\beta:$
$$
K^2(1-\beta -r) + \epsilon K (\beta -2(1-\beta -r)) + \epsilon^2\left( (1-\beta^2) -Kr(1+\beta^2) \right)=0
$$
This equation has the form $F(\epsilon,\beta)=0$ ($K$ and $r$ are fixed)
with a solution at $(\epsilon,\beta)=(0,1-r)$.   
Also, $\frac{\partial F}{\partial \epsilon}(0,1-r)= K(1-r)$
and $\frac{\partial F}{\partial \beta}(0,1-r)= -K^2\neq 0$.
Therefore, by the implicit function
theorem, the equation determines $\beta$ as a continuously differentiable function
of $\epsilon$ for small enough epsilon, and
$$
\beta= (1-r)\left(1+\frac{\epsilon}{K} \right)  + O(\epsilon^2).
$$
This expression for $\beta$ together with \eqref{eq:alphabetaI} yields that for sufficiently small $\epsilon,$   $\alpha < 1$ and
\[
\alpha =  1 - \epsilon^2\left( \frac{1+(1-r)^2}{K}\right) + O(\epsilon^3).
\]
\end{proof}

%%% alternative primal proof %%%%
\begin{proof}[Alternative proof of \prettyref{thm:PlantedGeneralSharp_nec}]
Here is an alternative proof of \prettyref{thm:PlantedGeneralSharp_nec} via a dual-based approach. 
If $Z^*=\xi^\ast(\xi^\ast)^\top$ maximizes \prettyref{eq:PDSCVX_SL}, then by \prettyref{lmm:PlanteddenseKKT} there exist
dual variables $(S,D,B,\lambda,\eta)$ with $S=D- B- L + \eta \identity + \lambda \allones  \succeq 0$,
$B \geq 0$, $D = \diag{d_i} \geq 0$, such that \prettyref{eq:KKT1},  \prettyref{eq:KKT2} and \prettyref{eq:KKT3} are satisfied.
As a consequence, the choice of $D$ is fixed, namely,
\begin{align}
d_i=\left\{
 \begin{array}{rl}
   \sum_{j \in C^\ast } L_{ij} - \eta - \lambda K   & \text{if } i \in C^\ast \\
   0 & \text{otherwise}
    \end{array} \right..
    \label{eq:di2}
\end{align}
Therefore, the condition $\min_{i \in C^\ast } d_i \geq 0$ implies that
\begin{align}
\min_{i \in C^*} e(i, C^\ast) \ge \lambda K + \eta.  \label{eq:etaupperbound}
\end{align}
Moreover, the dual variable $B$ satisfies $B_{C^\ast C^\ast } = 0$ and the off-diagonal block $B_{(C^\ast)^c C^\ast }$ satisfies
\begin{equation}
	\sum_{j \in C^*} B_{ij} = \lambda K - \sum_{j \in C^*} L_{ij},  \quad \forall i \notin C^*.
	\label{eq:Bavg1}
\end{equation}
Denote all possible choices of $B$ by the following convex set:
\[
\calB = \{B: B \in \calS^{n}, B \geq 0, B_{C^\ast C^\ast } = 0, B_{(C^\ast)^c C^\ast } \text{ satisfies }\prettyref{eq:Bavg1} \}.
\]
In particular, we have $\sum_{j \in C^*} B_{ij} \geq 0$ for all $i \notin C^*$, which  implies that
\begin{align}
\lambda K \geq \max_{i \notin C^\ast }  e(i, C^*).  \label{eq:lambdalowerbound}
\end{align}

Finally, $S = D+\lambda \allones - B - L + \eta \identity \succeq 0$ and $S \xi^\ast =0$
imply that there exists $B \in \calB$ and $\eta$ such that $\eta \geq \sup_{ \|x\| = 1}  x^\top(B+ L-D-\lambda \allones) x$ and
$\eqref{eq:etaupperbound}$ holds.
Hence,
\begin{align}
\eta \ge & ~ \inf_{B \in \calB} \sup_{ \|x\| = 1}  x^\top(B+ L-D-\lambda \allones) x 	\nonumber   \\
= & ~ \inf_{B \in \calB} \lambda_{\max} \left( B+ L-D-\lambda \allones \right) \nonumber  	\\
\ge & ~ \inf_{B \ge 0} \lambda_{\max} \left( B+ L -D-\lambda \allones \right)  \nonumber    \\
=  & ~ \inf_{B \ge 0}~~~~ \sup_{U\succeq 0, \Iprod{U}{\identity}=1}  \Iprod{L-D-\lambda \allones +B}{U}  \nonumber   \\
\overset{(a)}{=} & ~ \sup_{U\succeq 0, \Iprod{U}{\identity}=1}      \inf_{B \ge 0} \Iprod{L-D-\lambda\allones +B}{U} \label{eq:dual}  \\
= & ~ \sup_{U \ge 0, U \succeq 0, \Iprod{U}{\identity} =1 } \Iprod{L-D- \lambda \allones}{U}, \label{eq:etalowerboundsubmatrix}
\end{align}
where (a) follows because $U=(1/n) \identity + \allones$ is strictly feasible for the supremum in \prettyref{eq:dual} (i.e. it satisfies
 Slater's condition) so the strong duality holds.

Restricting $U$ in \prettyref{eq:etalowerboundsubmatrix} to satisfy $U_{ij}=0$
except for those $i, j \notin C^\ast$, and $\Iprod{U}{\allones}= a \in [1,K]$,
we get that  $\eta  \ge  \sup_{1 \le a \le K } \left\{  V_{ n-K}(a) - a \lambda \right\}$.
It follows from \prettyref{eq:etaupperbound} that
\begin{align*}
\min_{i \in C^*} e(i, C^\ast) & \ge  \sup_{1 \le a \le K } \left\{  V_{ n-K}(a) - a \lambda \right\} + \lambda K  \\
& \ge   \sup_{1 \le a \le K   } \left\{  V_{ n-K}(a) - \frac{a}{K}   \max_{i \notin C^*}  e(i, C^*)  \right\} +  \max_{i \notin C^*}  e(i, C^*),
\end{align*}
where the last inequality follows from $a \le K$ and \prettyref{eq:lambdalowerbound}.
\end{proof}

%%%%% Gaussian case Necessary $$$$$$$$$$$$$$$$$$
\subsubsection{Gaussian case}
Consider the Gaussian case $P = \calN(\mu,1)$ and $Q = \calN(0,1)$.
Before the proof of \prettyref{thm:sdp-nece}, we need to introduce a key lemma to lower bound
the value of $V_m(a)$ given in \prettyref{eq:Vm}. Recall that $m=n-K$. 
By the assumption, $L=A$ and hence $M$ has the same distribution as an $m \times m$ symmetric random matrix $W$ with zero-diagonal and 
$W_{ij} \iiddistr \calN(0,1)$ for $1 \le i<j \le m$.
The following lemma provides a high-probability  lower bound on $V_m(a)$ defined in \prettyref{eq:Vm}; its proof is deferred to \prettyref{app:proofVmGaussian}.
\begin{lemma}
Assume that $a > 1$ and  $a = o(m)$ as $m\diverge$.
Let $M=W$ be an $m \times m$ symmetric random matrix with zero-diagonal and
independent standard normal entries in the definition of $V_m(a)$ in \prettyref{eq:Vm}.
Then with probability tending to one,
\begin{equation}
	V_m(a) \geq \begin{cases}
	\frac{\sqrt{m}}{2} -  r & a= \omega(\sqrt{m}) \\
	a \sqrt{\log\pth{1 + \frac{m}{4a^2}}} - o(a) & a= \Theta(\sqrt{m}) \\	
	(a-1) \sqrt{ \frac{1}{3} \log \frac{ m}{a^2} } - O(a \log \log \frac{ m}{a^2} ) & a= o(\sqrt{m})
	\end{cases}
	\label{eq:max}
\end{equation}
where  $r \triangleq  \frac{ m^{3/4}}{\sqrt{8(a-1)} }  
+ \frac{2a}{\sqrt{m}}  = o(\sqrt{m})$ if $a= \omega(\sqrt{m})$.
	\label{lmm:max}
\end{lemma}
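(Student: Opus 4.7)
The strategy is to exhibit, in each of the three regimes, a random primal feasible $Z$ for the SDP \prettyref{eq:Vm} whose objective $\langle W, Z\rangle$ attains the claimed lower bound with probability $1-o(1)$. Throughout, I enforce feasibility by taking $Z = \sum_k \lambda_k y_k y_k^T$ with $y_k \geq 0$ (so that $Z \succeq 0$ and $Z \geq 0$ automatically) and then calibrate $\sum_k \lambda_k \|y_k\|_2^2 = 1$ and $\sum_k \lambda_k \|y_k\|_1^2 = a$; the $y_k$'s will be chosen to exploit the random structure of the Wigner matrix $W$.

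In the small-$a$ regime ($a = o(\sqrt m)$) the target rate $(a-1)\sqrt{\log(m/a^2)/3}$ suggests thresholding $W$ at $T \asymp \sqrt{\log(m/a^2)}$ and working with the super-level graph $H = \{(i,j): W_{ij} > T\}$. A first-moment computation shows that $H$ has $\asymp a^2$ edges on $m$ vertices with high probability, so one can extract a substructure on $\asymp a$ vertices whose induced sum of entries of $W$ scales as $(a-1)T$. Assembling $Z$ from this substructure and normalizing yields the stated bound, with the $O(a\log\log(m/a^2))$ correction in \prettyref{eq:max} absorbing the slack in the Gaussian tail $Q(T) \asymp e^{-T^2/2}/T$ and in the union bound over candidate subsets.

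In the intermediate and large regimes ($a = \Omega(\sqrt m)$) the target is spectral: since $\lambda_{\max}(W) \approx 2\sqrt m$ by the Wigner bound (e.g.\ \prettyref{lmm:GDS}), the value $\sqrt m/2$ corresponds to capturing a fixed fraction of the top eigenvalue through a nonnegatively supported proxy respecting the $\ell_1$ constraint. I would take $Z$ to be a convex combination of a rank-one factor built from the top eigenvector of $W$, suitably truncated or symmetrized to be nonneg (e.g.\ $v_+v_+^T/\|v_+\|_2^2$), and a flat component supported on an appropriately chosen subset, with mixing weights tuned so that the twin equality constraints are met. The explicit correction $r = m^{3/4}/\sqrt{8(a-1)} + 2a/\sqrt m$ in \prettyref{eq:max} admits a natural decomposition: the first summand is the truncation loss from the nonneg projection (of order $1/\sqrt{a-1}$ after rescaling), while the second is the $O(a/\sqrt m)$ fluctuation of the flat component. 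The intermediate rate $a\sqrt{\log(1+m/4a^2)}$ is then recovered by continuously matching parameters to the scaling $a/\sqrt m$.

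The principal obstacle lies in the small-to-moderate $a$ regime, where the naive rank-one ansatz $Z = \mathbf{1}_S \mathbf{1}_S^T/|S|$ with $|S|=a$ attains only $V_m(a) \gtrsim 2\sqrt{(a-1)\log(m/a)}$ by the Gaussian max over $\binom{m}{a}$ subset sums---short of the target by a factor of $\sqrt a$ once $a$ is moderately large. Closing this $\sqrt a$ gap demands a higher-rank weighted decomposition in which the cross-terms $y_k^T W y_\ell$ for $k \neq \ell$ are systematically positively aligned, morally a Gaussian-noise analogue of dense-planted-subgraph extraction. The required concentration is of Borell--TIS (for sums of Gaussians) and Hanson--Wright (for quadratic fluctuations) flavor, and the union bound over candidate families of $y_k$'s is ultimately what sets the constant $1/3$ and the $\log\log(m/a^2)$ correction appearing in \prettyref{eq:max}; book-keeping the tension between $\Tr(Z) = 1$ and $\langle \allones, Z\rangle = a$ while preserving these gains will be the bulk of the technical work.
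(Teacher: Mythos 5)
Your proposal outlines a genuinely different strategy from the paper's, but it does not amount to a proof: you correctly identify that the naive rank-one ansatz $Z = \ones_S\ones_S^\top/|S|$ with $|S|=a$ falls short of the target by a factor of roughly $\sqrt{a}$, and then you defer the resolution of this gap to an unspecified ``higher-rank weighted decomposition'' with ``positively aligned cross-terms.'' That decomposition is precisely the hard part, and nothing in the sketch explains how to construct it, how to verify $Z\succeq 0$ for it, or how to extract the sharp constant $1/3$. As written, the argument breaks exactly where it needs to work.

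The paper's construction avoids subset extraction entirely by an exponential-tilting trick. Take $g(x) = e^{\tau x - \tau^2/2}$, which is the likelihood ratio $\frac{d\calN(\tau,1)}{d\calN(0,1)}$, so that $\expect{g(W_{ij})} = 1$ and $\expect{W_{ij}\,g(W_{ij})} = \tau$. Define $Z$ with $Z_{ii} = 1/m$ and $Z_{ij} = \frac{a-1}{\alpha m(m-1)}g(W_{ij})$ for $i\neq j$, where $\alpha$ is the empirical mean of $g(W_{ij})$. Nonnegativity, $\Tr(Z)=1$, and $\Iprod{\allones}{Z}=a$ hold by construction, and $\Iprod{Z}{W}\approx (a-1)\tau$ by concentration. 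The positive semidefiniteness of $Z$ is not automatic: it is verified by bounding $\|g(W)-\expect{g(W)}\|$ (a log-normal random matrix concentration result, \prettyref{lmm:concentration_primal}), which determines how large $\tau$ can be taken. Optimizing $\tau$ in each regime gives exactly the three cases of \prettyref{eq:max}; in particular $\tau\approx\frac{\sqrt m}{2(a-1)}$ for $a=\omega(\sqrt m)$, a suitable constant for $a=\Theta(\sqrt m)$, and $\tau\approx\sqrt{\tfrac13\log\tfrac{m}{a^2}}$ for $a=o(\sqrt m)$, which is where the constant $1/3$ comes from (it is forced by the $\eexp^{3\tau^2}$ that appears in the variance of $g(W_{ij})$). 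This single parametric family replaces both the thresholding/dense-subgraph extraction you propose for small $a$ and the truncated-top-eigenvector ansatz for large $a$, and it sidesteps the $\sqrt a$ gap without any higher-rank machinery. One additional caution on your large-$a$ sketch: the top eigenvector of a Wigner matrix is delocalized, so $\|v_+\|_1^2/\|v_+\|_2^2\asymp m$, and mixing with a ``flat component supported on a subset'' would increase rather than decrease $\Iprod{\allones}{Z}$; one would instead have to mix with something like $\identity/m$, and the resulting objective value would need a fresh computation that your sketch does not supply.
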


\begin{remark}
We also have the following simple observations on $V_m(a)$:
\begin{itemize}
\item $V_m(1)=0$.
	\item
	Dropping the second and the last constraints in \prettyref{eq:Vm}, we have $V_m(a) \leq \lambda_{\max}(W) = 2 \sqrt{m}(1+o_P(1))$.
\item Since $\|W\|_{\ell_\infty}=\sqrt{2\log \binom{m}{2} } + o_P(1)$, it follows that $V_m(a) \le (a-1) \|W\|_\infty  = (a-1) \sqrt{2\log \binom{m}{2}} + o_P(a)$.
\end{itemize}
\end{remark}
%\nb{BH: A nonrigorous remark:
%\begin{remark}
%The ratio of $\frac{\sqrt{m}}{2}$ to $a \sqrt{\log\pth{1 + \frac{m}{4a^2}}}$ converges to one
%for $a=\omega(\sqrt{m})$ and the ratio of $ a  \sqrt{ \log \frac{ m}{a^2} }$ to
%$a \sqrt{\log\pth{1 + \frac{m}{4a^2}}}$  converges to one for $a=o(\sqrt{m})$.
%Thus, bounds for the first two cases in \prettyref{lmm:max} match while the bound for the third
%case is smaller by the factor $\sqrt{1/3}$.   This is consistent with the proofs: tight bounds on the
%spectral radius of a random matrix can be used for the proofs of the first two cases, while none is
%available for the third case.
%\end{remark}
%}

We next prove \prettyref{thm:sdp-nece} by combining  \prettyref{thm:PlantedGeneralSharp_nec} and \prettyref{lmm:max}.
\begin{proof}[Proof of \prettyref{thm:sdp-nece}]
By assumption, $\liminf_{n \to \infty} \pprob{\widehat{Z}_{\SDP}=Z^\ast} >0$. It follows from 
\prettyref{thm:PlantedGeneralSharp_nec} that  with a non-vanishing probability,
\begin{align}
  \min_{i \in C^\ast} \sum_{j \in C^*} A_{ij}  - 	 \max_{i \notin C^\ast} \sum_{j \in C^*} A_{ij}
    \ge  \sup_{0\le a \le K } \left\{   V_{n-K}(a)   -  \frac{a}{K} \max_{i \notin C^\ast} \sum_{j \in C^*} A_{ij}   \right\}.
    \label{eq:gapcondition}
\end{align}
In \prettyref{app:minW} we show that 
\begin{equation}
  \min_{i \in C^\ast} \sum_{j \in C^*} A_{ij}   \leq (K-1) \mu -
	   \sqrt{2 (K-1) \log K} + o_P(\sqrt{K}) . 	\label{eq:minW}
\end{equation}
In view of \prettyref{lmm:extreme}, 
 \begin{equation}
	 \max_{i \notin C^\ast} \sum_{j \in C^*} A_{ij}    \geq
	 \sqrt{2 K \log (n-K)} + o_P(\sqrt{K}).
		\label{eq:maxW}
\end{equation}
It follows from \prettyref{eq:gapcondition} that with a non-vanishing probability, 
\begin{align}
 & (K-1) \mu -
	   \sqrt{2 (K-1) \log K}  - \sqrt{2 K \log (n-K)} + o (\sqrt{K})    \nonumber \\  
  &   \ge  \sup_{0\le a \le K } \left\{   V_{n-K}(a)   - a \sqrt{2 \log (n-K)/K }  \right\}.
    \label{eq:gapconditionx}
\end{align}

\paragraph{Case 1:} $K = \omega(\sqrt{n})$. We show that the necessary condition \prettyref{eq:sdp-nece1} holds.
In view of \prettyref{eq:gapconditionx},  to get a necessary condition as tight as possible, one should  choose $a$ so that $V_{n-K}(a) $ is large and $a$ is small comparing to $K$. To this end, set $a = \sqrt{K} (n-K)^{1/4}$. Since $K=o(n)$ and $K = \omega(\sqrt{n})$ by assumption, we have $a = \omega(\sqrt{n-K})$ and $a = o(K)$. Applying \prettyref{lmm:max}, we conclude that 
\begin{align}
V_{n-K}(a)     \geq & ~ \frac{\sqrt{n-K}}{2} +o_p(\sqrt{n-K}) .
	\label{eq:etalb}
\end{align}
%Without loss of generality, we assume \prettyref{eq:etalb} holds in the sequel.
Combining \prettyref{eq:gapcondition}, \prettyref{eq:minW}, \prettyref{eq:maxW}, and \prettyref{eq:etalb}, 
and using $\sqrt{n-K} \geq \sqrt{n} - K/(2\sqrt{n-K}),$
we obtain the desired \prettyref{eq:sdp-nece1}.

\paragraph{Case 2:} $K=O(\sqrt{n})$.
In view of the high-probability lower bounds to $V_{n-K} (a)$ for $a=O(\sqrt{n-K})$ given in \prettyref{eq:max}, 
$V_{n-K}(a)   - a \sqrt{2 \log (n-K)/K}$ is maximized over $[1,K]$ at $a =K$. Hence, we set $a=K$, which satisfies $a=O(\sqrt{n-K})$.
It follows from \prettyref{eq:gapconditionx} that with a non-vanishing probability,
$$
 (K-1) \mu -  \sqrt{2 (K-1) \log K} + o(\sqrt{K}) \ge V_{n-K}(K).
$$
The desired lower bound on $\mu$ follows from the high-probability lower bounds on $V_{n-K}(K)$ given in \prettyref{eq:max} for $a=O(\sqrt{n-K})$. 
\end{proof}

%%%% Bernoulli Necessary %%%%%%%%%%%%%%
\subsubsection{Bernoulli case}

Recall that $m=n-K$ and by assumption, $L=A$. 
In the Bernoulli case, $M$  is an $m \times m$ symmetric random matrix with zero diagonal and
independent entries such that $M_{ij} = M_{ji} \sim \Bern(q)$ for all $i<j$. 
The following lemma provides a high-probability lower bound on $V_m (a)$ defined in \prettyref{eq:Vm}; its proof is deferred to 
\prettyref{app:proofVmBernoulli}.

\begin{lemma}[Lower bound to $V_m(a)$ in Bernoulli case]
Assume that $a = o(m)$, $q$ is bounded away from $1$,  $m^2 q \to \infty$.
Recall that $\kappa$ is defined in \prettyref{eq:kappa}. With probability tending to one, 
\begin{itemize}
\item If $a -1 \ge  \frac{1}{\kappa} \sqrt{m q/(1-q)}$, then
\[
 V_m(a) \geq (a-1) q+ \frac{\sqrt{m q (1-q) }}{\kappa} .
\]
\item If $0\le a -1 \le \frac{1}{\kappa} \sqrt{m q/(1-q)}$, then $V_m(a) = a-1$.
\end{itemize}
	\label{lmm:maxBernoullie}
\end{lemma}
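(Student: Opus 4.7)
My plan is to construct, for each admissible $a$, an explicit feasible matrix for the SDP defining $V_m(a)$, built from a single spectrally shifted base matrix $Z_0$ that will be convex-combined with either a rank-one diagonal matrix (Case 2) or with the flat matrix $J/m$ (Case 1).

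First I would perform the natural spectral shift: set $c\triangleq -\lambda_{\min}(M)$, so that $M+cI\succeq 0$. Since $\expect{M}=q(J-I)$ has smallest eigenvalue $-q$ and $\|M-\expect{M}\|\le \kappa\sqrt{mq(1-q)}$ holds with high probability by \prettyref{lmm:concen_bern} (already invoked in the proof of \prettyref{thm:PlantedSubgraphSharp}), one gets $c\le (1+o(1))\kappa\sqrt{mq(1-q)}$. Then I would define $Z_0\triangleq (M+cI)/(cm)$: it is PSD by the choice of $c$, entrywise nonnegative (the diagonal is $1/m$ and the off-diagonals lie in $\{0,1/(cm)\}$), has trace one, and using $\Tr M = 0$ satisfies $\Iprod{M}{Z_0}=\|M\|_F^2/(cm)=2|E(G)|/(cm)$ and $\Iprod{J}{Z_0}=1+\Iprod{M}{Z_0}\triangleq a_0$, where $G$ denotes the graph supporting $M$. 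A Chernoff bound on $2|E(G)|\sim 2\Binom(\binom{m}{2},q)$ (using $m^2 q\to\infty$) yields $2|E(G)|=m(m-1)q(1-o_P(1))$, and hence $a_0-1\ge (1-o(1))\sqrt{mq/(1-q)}/\kappa$, so $Z_0$ already certifies $V_m(a_0)\ge a_0-1$.

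For Case 2 ($0\le a-1\le \sqrt{mq/(1-q)}/\kappa$), I would take $Z=\lambda Z_0+(1-\lambda)e_1 e_1^\top$ with $\lambda=(a-1)/(a_0-1)\in[0,1]$: a convex combination of feasible matrices with $\Iprod{J}{Z}=a$ and $\Iprod{M}{Z}=\lambda(a_0-1)=a-1$; the matching upper bound $V_m(a)\le a-1$ is immediate from $M_{ij}\in\{0,1\}$ and $\sum_{i\ne j}Z_{ij}=\Iprod{J}{Z}-\Tr Z=a-1$. For Case 1 ($a-1\ge \sqrt{mq/(1-q)}/\kappa$, so whp $a\ge a_0$), I would instead use $Z=\lambda Z_0+(1-\lambda)J/m$ with $\lambda=(m-a)/(m-a_0)\in[0,1]$, feasible with $\Iprod{J}{Z}=a$. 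Substituting the Chernoff estimate $2|E(G)|/m=(m-1)q(1\pm o_P(1))$ and using $a,a_0=o(m)$, a short expansion should give
\[ \Iprod{M}{Z}=\lambda(a_0-1)+(1-\lambda)\tfrac{2|E(G)|}{m}\;\ge\;(a-1)q+(a_0-1)(1-q)(1-o(1))\;\ge\;(a-1)q+\tfrac{\sqrt{mq(1-q)}}{\kappa}(1-o(1)), \]
which is the claimed bound since $(a_0-1)(1-q)\ge (1-o(1))\sqrt{mq(1-q)}/\kappa$.

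The construction is clean and almost formulaic once $Z_0$ is identified; the hard part will be the bookkeeping of the $(1\pm o(1))$ factors coming from the concentration of $\lambda_{\min}(M)$ and of $|E(G)|$, and arguing that they can be absorbed into the constant $\kappa$ (or accommodated by a minor redefinition). A smaller subtlety is that the effective threshold $a_0$ produced by the construction and the nominal threshold $1+\sqrt{mq/(1-q)}/\kappa$ appearing in the lemma can differ by a $(1\pm o(1))$ factor, so for $a$ in the narrow overlap region one would invoke the concavity of $a\mapsto V_m(a)$ (already noted in the paper) together with the identity $V_m(a)=a-1$ on $[1,a_0]$ to bridge the gap.
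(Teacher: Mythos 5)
Your proposal is essentially correct and follows a similar primal-construction strategy as the paper, but the route is cleaner in one respect. The paper's proof also builds a feasible $Z$ of the form $Z_{ii}=1/m$ and $Z_{ij}=\alpha M_{ij}+\beta$ for $i\neq j$, but it solves for $(\alpha,\beta)$ from the two linear constraints and then verifies $Z\succeq 0$ by decomposing $Z$ and bounding $\|M-\expect{M}\|$; this is what forces the somewhat delicate bookkeeping around the empirical density $R$ in \prettyref{app:proofVmBernoulli}. Your $Z_0=(M+cI)/(cm)$ with $c=-\lambda_{\min}(M)$, together with its convex combinations with $e_1e_1^\top$ or $\allones/m$, produces matrices of exactly the same affine form, but positive semidefiniteness is automatic from the eigenvalue shift, and the concentration bound (\prettyref{lmm:concen_bern}) is invoked only once, to lower bound $a_0-1=2|E(G)|/(cm)$. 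That is a genuine simplification of the $Z\succeq 0$ step.

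There are two gaps in the writeup. First, the parenthetical ``$a-1\geq\sqrt{mq/(1-q)}/\kappa$, so whp $a\geq a_0$'' in Case 1 is not justified: the concentration lemma bounds $c$ only from above and hence $a_0$ only from below, and there is no matching high-probability upper bound on $a_0$. If $a<a_0$ then $\lambda=(m-a)/(m-a_0)>1$, the mixture $\lambda Z_0+(1-\lambda)\allones/m$ is no longer a convex combination, and its off-diagonal entries can go negative. The repair is to observe that whenever $a\leq a_0$ your Case 2 already gives $V_m(a)=a-1=(a-1)q+(a-1)(1-q)\geq (a-1)q+\sqrt{mq(1-q)}/\kappa$, which is at least the Case 1 claim; the Case 1 construction is only needed when $a>a_0$, where $\lambda\in[0,1]$ is guaranteed. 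Second, the suggestion to bridge the $(1\pm o(1))$ mismatch between $a_0$ and $1+\sqrt{mq/(1-q)}/\kappa$ via concavity of $a\mapsto V_m(a)$ does not work: concavity together with $V_m(1)=0$ and $V_m(a_0)=a_0-1$ lower bounds $V_m$ only on $[1,a_0]$ (which the construction already gives) and only upper bounds it beyond $a_0$, so it cannot extend the lower bound past $a_0$. The correct handling, which you also mention in passing and which the paper does through its $1-\epsilon$ factor, is to absorb the $1-o(1)$ slack from $a_0-1\geq(1-o(1))\sqrt{mq/(1-q)}/\kappa$ into $\kappa$, whose definition in \prettyref{eq:kappa} is stable under a $1+o(1)$ rescaling.
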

\begin{remark}
We have the following simple observations on $V_m(a)$:
\begin{itemize}
\item $V_m(1)=0$ and  $V_m(a) \le (a-1) \|A\|_\infty  = a-1 $.
\item
	Dropping the second and the last constraints in \prettyref{eq:Vm}, we have with high probability
	$V_m(a) \leq \lambda_{\max}(A)  \le \kappa \sqrt{mq(1-q)} $. 
\end{itemize}	
\end{remark}

We next prove \prettyref{thm:PlantedSubgraphSharp_sdp_nec} by combining  \prettyref{thm:PlantedGeneralSharp_nec} and \prettyref{lmm:maxBernoullie}.

\begin{proof}[Proof of \prettyref{thm:PlantedSubgraphSharp_sdp_nec}]
We first show that if $Z^*$ is unique with some non-vanishing probability, then $K-1 \ge \sqrt{nq/(1-q)}/\kappa$.
We prove it by contradiction. Suppose that $K-1 < \sqrt{(n-K) q/(1-q)}/\kappa$.
Let $\tA$ denote the $(n-K) \times (n-K)$ submatrix of $A$ supported on $(C^\ast)^c \times (C^\ast)^c$.
Take $a=K$ in  \prettyref{lmm:maxBernoullie}; the last statement of the
lemma implies that $V_{n-K} (K)=K-1$ with high probability.   Furthermore, the proof of the lemma
shows that the $(n-K) \times (n-K)$ matrix $\tZ$ defined by $\tZ_{ii}=1/(n-K)$
and $\tZ_{ij}=(K-1) \tA_{ij}/ \Iprod{\tA}{\allones}$ for $i \neq j$ satisfies
$\Iprod{\tZ}{\tA}=K-1$ and, with high probability, $\tZ \succeq 0$.   Let $Z$ be the $n \times n$ matrix
such that $Z_{(C^\ast)^c (C^\ast)^c}=K  \tZ$ and $Z_{ij}=0$ for all $(i,j) \notin (C^\ast)^c  \times (C^\ast)^c$.
Then one can easily verify that $Z$ is feasible for \prettyref{eq:PDSCVX_SL} with high probability
and $\Iprod{Z}{A} = K(K-1)$.   Since $\Iprod{Z^*}{A}\leq K(K-1)$, it follows that  with high
probability $Z^\ast$ is not the unique optimal solution to \prettyref{eq:PDSCVX_SL}, arriving at
a contradiction.   
The necessity of \prettyref{eq:sdp-necc-BernXX} is then proved.
%The first part of the theorem is proved. 

Next, we prove the necessary condition \prettyref{eq:sdp-necc-Bern}.
Since $\liminf_{n \to \infty} \pprob{\widehat{Z}_{\SDP}=Z^\ast} >0$ by assumption,
\prettyref{thm:PlantedGeneralSharp_nec} implies that with a non-vanishing probability, 
\begin{align}
  \min_{i \in C^\ast}  \sum_{j \in C^*} A_{ij} - 	 \max_{i \notin C^\ast} \sum_{j \in C^*} A_{ij}    \ge  \sup_{1 \le a \le K } \left\{   V_{n-K}(a)   -  \frac{a}{K} \max_{i \notin C^\ast} e(i, C^*)   \right\}. \label{eq:gapcondition_b}
\end{align}

We use the following lower bounds for the binomial distribution tails \cite[Theorem 1]{ZubkovSerov13}:
\begin{align}
   \prob{\Binom(m,p) \le m \tau }  &\ge Q \left( \sqrt{2m d(\tau \| p) } \right),   \quad 1/m \le \tau \le p, \label{eq:ChernoffBinomp} \\
   \prob{\Binom(m,q) \ge m \tau }  & \ge Q \left( \sqrt{2m d(\tau \| q) } \right) , \quad q \le \tau \le 1.   \label{eq:ChernoffBinomq}
\end{align}

Let
$$
\delta= \max \left \{  \frac{2 \log \log K}{\log K}, \frac{\log \log (n-K) }{\log (n-K) } \right\},
$$
and define 
\begin{align*}
\tau'_1 &=(1-\delta) \tau_1 + \delta p \\
\tau'_2 & = (1-\delta) \tau_2 + \delta q.
\end{align*}
Let $K_o = \lceil \frac{K}{\log K}\rceil$ and $\sigma^2 = (K_o-1)  p$.
Define  events 
\begin{align*}
\calE_1 &= \left\{ \min_{i \in C^*} \sum_{j \in C^*} A_{ij}  \le  (K-K_o) \tau_1' + (K_o-1) p + 6 \sigma \right \} \\
\calE_2 &= \left\{ \max_{i\notin C^* }  \sum_{j \in C^*} A_{ij}   \geq K \tau'_2 \right\}.
\end{align*}
By the definition of $\tau'_2$ and the convexity of divergence, we have that
$d(\tau'_2 \|q) \le (1-\delta) d (\tau_2 \| q)$. Thus
\begin{align*}
\prob{ \Binom(K,q) \geq K \tau'_2 }&  \geq Q\left( \sqrt{2K d(\tau'_2 \|q )}  \right)  \\
& \geq Q\left( \sqrt{2K (1-\delta) d (\tau_2 \| q) }  \right)  \\
& = Q\left( \sqrt{2 (1-\delta) \log (n-K) }  \right) \\
& = \Omega \left( \frac{(n-K)^{-(1-\delta)} }{  \sqrt{ \log (n-K) }  } \right) \\
& =  \Omega \left( \frac{ \sqrt{ \log (n-K)}  }{ n-K  } \right), 
\end{align*}
where we used the bound $Q(x) \ge \frac{1}{\sqrt{2\pi}} \frac{t}{t^2+1} \eexp^{-t^2/2}$
and the fact that $\delta \ge \frac{\log \log (n-K) }{\log (n-K) }$.  
Hence,
\begin{align*}
\prob{ \max_{i\notin C^* } \sum_{j \in C^*} A_{ij}  \geq K \tau'_2 }
& \overset{(a)}{=} 1- \prod_{i\notin C^* } \prob{   \sum_{j \in C^*} A_{ij}  < K \tau'_2 } \\
&   \overset{(b)}{=} 1- \left( 1 -\prob{ \Binom(K,q) \geq K \tau'_2 }   \right)^{n-K} \\
& \overset{(c)}{\ge}  1- \exp \left(  - (n-K) \prob{ \Binom(K,q) \geq K \tau'_2 } \right) \\
& \ge  1- \exp \left( - \Omega \left( \sqrt{ \log (n-K) } \right) \right) \to 1,
\end{align*}
where $(a)$ holds due to the independence of $\sum_{j \in C^\ast} A_{ij}$ for different $i \notin C^\ast$; 
$(b)$ holds because for $i \notin C^*$, $\sum_{j \in C^\ast} A_{ij} \sim \Binom(K,q)$;
$(c)$ follows from the fact that $1-x \le \eexp^{-x}$ for $x\ge 0$.
Hence, we get that $\prob{\calE_2} \to 1$.

In \prettyref{app:minA} we show that $\prob{\calE_1} \to 1$, \ie,
\begin{equation}
\prob{ 	\min_{i\in C^* } \sum_{j \in C^*} A_{ij}  \leq (K-K_o) \tau_1' + (K_o-1) p + 6 \sigma  } \to 1.
\label{eq:minA_Bern}
\end{equation}
Let $\calE=\calE_1 \cap \calE_2$.
Then by union bound, $\prob{\calE} \to 1$.  
It follows from \prettyref{eq:gapcondition_b} that with a non-vanishing probability, 
\begin{align}
(K-K_o) \tau_1' + (K_o-1) p + 6 \sigma -  K \tau'_2   \ge  \sup_{1 \le a \le K } \left\{   V_{n-K}(a)   - a \tau'_2 \right\}
\label{eq:gapcondition_bx} 
\end{align}

Applying \prettyref{lmm:maxBernoullie}, we have that with probability converging to $1$,
\begin{align}
  V_{n-K}(a)   \ge
  \begin{cases}
   (a-1) q + \frac{1}{\kappa}  \sqrt{(n-K) q(1-q) }   &  a -1 \ge  \frac{1}{\kappa} \sqrt{ (n-K) q /(1-q)}  \\
    a-1 &  0 \le a -1 \le \frac{1}{\kappa}  \sqrt{ (n-K) q /(1-q)}.
   \end{cases}
  \label{eq:etalowerbound_bernoulli}
\end{align}
Recall that we have shown that  $K-1 \geq \frac{1}{\kappa} \sqrt{ (n-K) q /(1-q)} $ in the first part of the proof.
In view of  $\tau'_2 \ge q$  and \prettyref{eq:etalowerbound_bernoulli}, 
$ V_{n-K}(a)   - a \tau'_2 $ is maximized at 
$$
a=    \frac{1}{\kappa} \sqrt{ (n-K) q /(1-q)}  +1 \in [1, K],
$$ 
which gives $V_{n-K}(a) = a-1$. 
Hence, it follows from \prettyref{eq:gapcondition_bx} that
\begin{align*}
(K-K_o) \tau_1' + (K_o-1) p + 6 \sigma -  K \tau'_2   \ge  a-1  - a \tau'_2,
\end{align*}
which further implies that 
\begin{align*}
(K-K_o) ( \tau'_1 -  \tau'_2 )  & \ge   a-1 - \tau'_2 a  + K_o \tau'_2 - (K_o-1) p - 6 \sigma \\
& = (a-1) (1-\tau'_2) + (K_o-1) ( \tau'_2 - p) - 6 \sigma  \\
&\ge  \frac{1}{\kappa} \sqrt{ \frac{ (n-K) q}{ 1-q} }  (1 -\tau'_2) - 6 \sqrt{ \frac{K p}{\log K} }
  - \frac{K(p-q) }{ \log K} .
\end{align*}
Plugging in the definition of $\tau'_1$ and $\tau'_2$, we derive that
\begin{align*}
(K-K_o)  \left( \tau_1 -\tau_2  \right) &  \ge \frac{1}{\kappa} \sqrt{ \frac{(n-K) q}{1-q} }  (1 -\tau_2') -  6 \sqrt{ \frac{K p}{\log K} } - \frac{K(p-q) }{ \log K}  - \delta (p-q) \\
& \ge \frac{1}{\kappa} \sqrt{ \frac{(n-K) q}{1-q} }  (1 -\tau_2) -  6 \sqrt{ \frac{K p}{\log K} }
-\frac{K (p-q) ( 2\log \log K +1) }{\log K} ,
\end{align*}
where the last inequality follows because $\tau'_2 \le \tau_2$ and $\delta \le \frac{2 \log \log K}{ \log K}$. 
Hence, we arrive at  the desired necessary condition \prettyref{eq:sdp-necc-Bern}.
\end{proof}

%%%%%%%%%%%%%%%%
\subsubsection{Multiple-community stochastic block model}   \label{sec:pf-sbm}
\begin{proof}[Proof of \prettyref{thm:SBM_sdp_nec}]
Since the MLE is optimal, in proving the theorem,
we can assume without loss of generality that the necessary condition
for consistency of the MLE, $K (p-q)^2 = \Omega( q \log n )$, holds  (see \prettyref{rmk:ChenXu}).
Since $p=\Theta(q)$, it follows that we can assume without loss of generality that 
$K(p-q) =\Omega(\log n)$ and $Kq = \Omega(\log n).$

Suppose \prettyref{eq:sdp-necc-Bern_SBM} fails, namely, there exists $\epsilon>0$ such that
\begin{align}
\frac{(p-q)\sqrt{np}}{rq}   \leq  \frac{1-\epsilon}{\kappa}.
%K(p-q)^2\leq  \frac{rq^2}{p\kappa^2(1+\epsilon)}.
\label{eq:sdp-necc-Bern_SBM-c}
\end{align}
 We construct a matrix $Y$ which, with high probability, constitutes a feasible solution 
 to the SDP program \prettyref{eq:SBM_multiple} with an objective value exceeding that of $Y^\ast$. 
The construction is a variant of that used in proving \prettyref{lmm:maxBernoullie} in \prettyref{app:proofVmBernoulli}.
Let 
\begin{equation}
Y = s A + t(\allones - \identity) + w(\bfd \ones^\top + \ones \bfd^\top - 2D)  + \identity,	
	\label{eq:Yprimal}
\end{equation}
where $\bfd=A\ones$ is the vector of node degrees, $D = \diag{\bfd}$, $s \geq 0$ and $t,w \in \reals$ are to be specified. 
In other words, $Y_{ij} = s A_{ij} + t + w(d_i+d_j)$ for $i\neq j$ and $Y_{ii}=1$. 
Let $z \triangleq \iprod{A}{\allones} = \iprod{\bfd}{\ones}$. 
Note that for any $Y \succeq 0$, the constraint $\Iprod{Y}{\allones}=0$ is equivalent to $Y \ones = 0$.
Since
\[
Y \ones = s \bfd + t(n-1)\ones+ w (n \bfd + z\ones - 2\bfd) + \ones = (s + w(n-2))\bfd + (t(n-1)+wz+1) \ones,
\]
to satisfy $Y \ones = 0$, we let
\[
s + w(n-2) = 0,\quad t(n-1)+wz+1=0
\]
namely,
\begin{equation}
w = - \frac{s}{n-2}, \quad t = \frac{sz}{(n-1)(n-2)} - \frac{1}{n-1}.	
	\label{eq:beta1}
\end{equation}
Since $w \leq 0$, to satisfy the other constraints  in \prettyref{eq:SBM_multiple}, it suffices to ensure
\begin{align}
t + 2 w d_{\max}& \ge - \frac{1}{r-1} \label{eq:SDP_nece_converse_cond1} \\
%s  \Iprod{A}{\allones} & = - t n(n-1) - n \label{eq:SDP_nece_converse_cond2} \\
Y & \succeq 0,  \label{eq:SDP_nece_converse_cond3}
\end{align}
where $d_{\max} = \max_i d_i$ is the maximal degree.
%From \prettyref{eq:SDP_nece_converse_cond2},
%we get that  
%\begin{equation}
%t=  - \frac{s  \Iprod{A}{\allones} +n }{n(n-1) }.	
%	\label{eq:beta1}
%\end{equation}
%Note that $\expect{\Iprod{A}{\allones} } = n^2 ( p+ (r-1)q ) /r $, and 
%by the Chernoff bound for binomial distributions,
%$$
%\Iprod{A}{\allones} = n^2 (p + (r-1) q ) /r + O_P ( \sqrt{ n^2 q}  ).
%$$

%Notice that 
%$$
%Y = s A + t \allones  + (1-t) \identity = s \expect{A}  + t \allones  + (1-t) \identity
%+ s (A-\expect{A}).
%% \succeq (1-t) \identity + s (A-\expect{A}), 
%$$
Since $Y \ones=0$, \prettyref{eq:SDP_nece_converse_cond3} is equivalent to $PYP \succeq 0$, where $P=\identity - \frac{1}{n} \allones$ is the
matrix for projection onto the subspace orthogonal to $\ones$. Since 
\[
PYP=P(sA+(1-t)\identity - 2 w D)P,
\] 
in view of the facts that $\Expect[A] \succeq -p \identity$, $D \succeq 0$, and $w \leq 0$, it suffices to verify that 
\begin{equation}
s \|A-\Expect[A]\| \leq 1-t- sp.
	\label{eq:SDP_nece_converse_cond3B}
\end{equation}

%Since the eigenvalues of $s \expect{A}  + t \allones + (1-t) \identity$ are $s (p-q) K + 1-t - s p$ and $s (p-q) K + (s q + t) n + 1-t - s p$,
%%where the last equality holds due to $\expect{A} \succeq 0.$
%to fulfill \prettyref{eq:SDP_nece_converse_cond3}, 
%it suffices to ensure that 
%\begin{equation}
%s (p-q) K + n \min\{s q + t,0\} + 1-t - s p \ge s \|A -\expect{A} \|.	
%	\label{eq:SDP_nece_converse_cond3B}
%\end{equation}

Next, we compute the objective value: 
\[
\Iprod{A}{Y} = (s + t) \Iprod{A}{\allones} + 2 w \|\bfd\|_2^2.
\]
By the Chernoff bounds for binomial distributions,  
\begin{align*}
\Iprod{A}{Y^\ast}  & = \frac{n^2} {r} (p-q) + O_P \left(  \sqrt{ n^2 p /r } \right) \\
\Iprod{A}{\allones} & = \frac{n^2} {r}  \left(p + (r-1) q \right ) + O_P ( \sqrt{ n^2 q}  ).
\end{align*} 
Then $\Iprod{A}{Y^\ast} = nK(p-q)(1+o_P(1))$ and $z = \Iprod{A}{\allones} =n^2 q(1+o_P(1))$.
%Furthermore, without loss of generality, we shall assume the information-theoretic condition \prettyref{eq:chenxu}, which implies that $q = \Omega(\frac{\log n}{K})$. 
By concentration,\footnote{We
use the following implication of the Chernoff bound:  If $X$ is the sum of independent Bernoulli random variables with mean
$\mu$, then for $\delta \geq  2\eexp -1,$   $\prob{X \geq (1+\delta)\mu} \leq 2^{-\delta \mu},$ and the assumptions
$Kq = \Omega(\log n)$ and $r\to\infty.$} $\|\bfd\|_2^2 =n^3 q^2(1+o_P(1))$ and $d_{\max} = nq(1+o_P(1))$.
To ensure that $\Iprod{A}{Y} > \Iprod{A}{Y^*},$  we set $\Iprod{A}{Y} = (1+\epsilon) \Iprod{A}{Y^\ast},$  or equivalently:
\begin{equation}
(s+t)z + 2 w \|\bfd\|_2^2 = (1+\epsilon) \Iprod{A}{Y^\ast},
	\label{eq:beta2}
\end{equation}
Solving \prettyref{eq:beta1} and \prettyref{eq:beta2} and by the assumption $p = o(1)$ and the fact 
$\frac{1}{n-1}=o(\frac{p-q}{r}),$  we have:
%\begin{equation}
%s = (1+\epsilon + o_P(1)) \frac{p-q}{rq}, \quad t = - (1+\epsilon + o_P(1))) \frac{p-q} {r}, \quad 
%w = -(1+\epsilon + o_P(1)) \frac{p-q}{nrq}.
%	\label{eq:s}
%\end{equation}
%Hence $t + 2 w d_{\max} = - 3(1+\epsilon) \frac{p-q}{r} \geq - \frac{1}{r-1}$, \ie, \prettyref{eq:SDP_nece_converse_cond1}, holds with high probability.
\begin{equation}
s = (1+\epsilon + o_P(1)) \frac{p-q}{rq}, \quad 
t =  (1+\epsilon + o_P(1))  \frac{p-q} {r} , \quad 
w = -(1+\epsilon + o_P(1) ) \frac{p-q}{nrq}.
	\label{eq:s}
\end{equation}
Hence $t + 2 w d_{\max} = - (1+\epsilon+o_P(1) ) \frac{p-q}{r} \geq - \frac{1}{r-1}$, \ie, \prettyref{eq:SDP_nece_converse_cond1}, holds with high probability.

%Solving \prettyref{eq:beta1} and \prettyref{eq:beta2}, we have
%%\[
%%s = (1+\epsilon) \frac{p-q}{p + (r-1) q } \frac{1}{1 - \delta}, \quad t=  - (1+\epsilon) \frac{p-q}{p + (r-1) q } \frac{\delta}{1 - \delta},
%%\]
%\begin{align*}
%s & = \frac{1+\epsilon}{1-\delta}\left\{   \frac{p-q}{p + (r-1) q } + \frac{1}{n-1} \right\}   \\
%t  & =  -s\delta - \frac{1}{n-1}  \\
%& =  - (1+\epsilon) \left\{\frac{p-q}{p + (r-1) q }\right\}  \frac{\delta}{1 - \delta} - \frac{1+\epsilon \delta}{(1-\delta)(n-1)}
%\end{align*} 
%where $\delta \triangleq \frac{\Iprod{A}{\allones}}{n(n-1)} = \frac{p+(r-1)q}{r} + O_P(\frac{\sqrt{q}}{n})$.

It remains to verify \prettyref{eq:SDP_nece_converse_cond3B}.
Since $np =\Omega(\log n)$, applying \prettyref{lmm:concen_bern} yields
$\|A-\Expect[A]\| \leq \kappa \sqrt{np}$ with high probability. 
In view of the assumption \prettyref{eq:sdp-necc-Bern_SBM-c}, \prettyref{eq:SDP_nece_converse_cond3B} holds with high probability, which completes the proof.
\end{proof}
\section*{Acknowledgement}
This research was supported by the National Science Foundation under
Grant CCF 14-09106, IIS-1447879, NSF OIS 13-39388, and CCF 14-23088, and
Strategic Research
Initiative on Big-Data Analytics of the College of Engineering
at the University of Illinois, and DOD ONR Grant N00014-14-1-0823, and Grant 328025 from the Simons Foundation.
This work was done in part while J. Xu was visiting the Simons Institute for the Theory of Computing.

\bibliographystyle{abbrv}
\bibliography{../graphical_combined}

\begin{thebibliography}{10}

\bibitem{ABBK}
N.~Agarwal, A.~S. Bandeira, K.~Koiliaris, and A.~Kolla.
\newblock Multisection in the stochastic block model using semidefinite
  programming.
\newblock arXiv 1507.02323, July 2015.

\bibitem{Alon98}
N.~Alon, M.~Krivelevich, and B.~Sudakov.
\newblock Finding a large hidden clique in a random graph.
\newblock {\em Random Structures and Algorithms}, 13(3-4):457--466, 1998.

\bibitem{BY88}
Z.~Bai and Y.~Yin.
\newblock Necessary and sufficient conditions for almost sure convergence of
  the largest eigenvalue of a {W}igner matrix.
\newblock {\em The Annals of Probability}, 16(4):1729--1741, 1988.

\bibitem{Bandeira15}
A.~Bandeira.
\newblock Random {L}aplacian matrices and convex relaxations.
\newblock arXiv 1504.03987, April, 2015.

\bibitem{BVH14}
A.~S. Bandeira and R.~van Handel.
\newblock Sharp nonasymptotic bounds on the norm of random matrices with
  independent entries.
\newblock arXiv 1408.6185, 2014.

\bibitem{boucheron2013concentration}
S.~Boucheron, G.~Lugosi, and P.~Massart.
\newblock {\em Concentration inequalities: A nonasymptotic theory of
  independence}.
\newblock Oxford University Press, 2013.

\bibitem{Boyd2004}
S.~Boyd and L.~Vandenberghe.
\newblock {\em Convex Optimization}.
\newblock Cambridge University Press, New York, NY, USA, 2004.

\bibitem{Butucea2013sharp}
C.~Butucea, Y.~Ingster, and I.~Suslina.
\newblock Sharp variable selection of a sparse submatrix in a high-dimensional
  noisy matrix.
\newblock {\em ESAIM: Probability and Statistics}, 19:115--134, June 2015.

\bibitem{CLR15}
T.~T. Cai, T.~Liang, and A.~Rakhlin.
\newblock Computational and statistical boundaries for submatrix localization
  in a large noisy matrix.
\newblock arXiv:1502.01988, Feb. 2015.

\bibitem{ChenXu14}
Y.~Chen and J.~Xu.
\newblock Statistical-computational tradeoffs in planted problems and submatrix
  localization with a growing number of clusters and submatrices.
\newblock In {\em Proceedings of ICML 2014 (Also arXiv:1402.1267)}, Feb 2014.

\bibitem{chung-erdos}
K.-L. Chung and P.~Erd\"{o}s.
\newblock On the application of the {B}orel-{C}antelli lemma.
\newblock {\em Transactions of the American Mathematical Society}, pages
  179--186, 1952.

\bibitem{DN70}
H.~David and H.~Nagaraja.
\newblock {\em Order Statistics}.
\newblock Wiley-Interscience, Hoboken, New Jersey, USA, 3 edition, 2003.

\bibitem{Davidson01}
K.~Davidson and S.~Szarek.
\newblock Local operator theory, random matrices and {B}anach spaces.
\newblock In W.~Johnson and J.~Lindenstrauss, editors, {\em Handbook on the
  Geometry of Banach Spaces}, volume~1, pages 317--366. Elsevier Science, 2001.

\bibitem{Deshpande12}
Y.~Deshpande and A.~Montanari.
\newblock Finding hidden cliques of size $\sqrt{N/e}$ in nearly linear time.
\newblock {\em Foundations of Computational Mathematics}, 15(4):1069--1128,
  August 2015.

\bibitem{DeshpandeMontanari15}
Y.~Deshpande and A.~Montanari.
\newblock Improved sum-of-squares lower bounds for hidden clique and hidden
  submatrix problems.
\newblock In {\em Proceedings of COLT 2015}, pages 523--562, June 2015.

\bibitem{FK00}
U.~Feige and R.~Krauthgamer.
\newblock Finding and certifying a large hidden clique in a semirandom graph.
\newblock {\em Random Structures \& Algorithms}, 16(2):195--208, 2000.

\bibitem{FK03}
U.~Feige and R.~Krauthgamer.
\newblock The probable value of the {L}ov{\'a}sz--{S}chrijver relaxations for
  maximum independent set.
\newblock {\em SIAM Journal on Computing}, 32(2):345--370, 2003.

\bibitem{HajekWuXuSDP15}
B.~Hajek, Y.~Wu, and J.~Xu.
\newblock Achieving exact cluster recovery threshold via semidefinite
  programming: Extensions.
\newblock arXiv 1502.07738, Feb. 2015.

\bibitem{HajekWuXu14}
B.~Hajek, Y.~Wu, and J.~Xu.
\newblock Computational lower bounds for community detection on random graphs.
\newblock In {\em Proceedings of COLT 2015}, June 2015.

\bibitem{HajekWuXu_one_info_lim15}
B.~Hajek, Y.~Wu, and J.~Xu.
\newblock Information limits for recovering a hidden community.
\newblock arXiv 1509.07859, September 2015.

\bibitem{HajekWuXu_one_beyond_spectral15}
B.~Hajek, Y.~Wu, and J.~Xu.
\newblock Recovering a hidden community beyond the spectral limit in
  {$O(|E|\log^*|V|)$} time.
\newblock arXiv 1510.02786, October 2015.

\bibitem{HajekWuXu_MP_submat15}
B.~Hajek, Y.~Wu, and J.~Xu.
\newblock Submatrix localization via message passing.
\newblock arXiv 1510.09219, October 2015.

\bibitem{HajekWuXuSDP14}
B.~Hajek, Y.~Wu, and J.~Xu.
\newblock Achieving exact cluster recovery threshold via semidefinite
  programming.
\newblock {\em IEEE Transactions on Information Theory}, 62(5):2788--2797, May
  2016.
\newblock (arXiv 1412.6156 Nov. 2014).

\bibitem{Holland83}
P.~W. Holland, K.~B. Laskey, and S.~Leinhardt.
\newblock Stochastic blockmodels: First steps.
\newblock {\em Social Networks}, 5(2):109--137, 1983.

\bibitem{HKP15}
S.~B. Hopkins, P.~K. Kothari, and A.~Potechin.
\newblock {SoS} and planted clique: Tight analysis of {MPW} moments at all
  degrees and an optimal lower bound at degree four.
\newblock arXiv 1507.05230, July 2015.

\bibitem{Karp72}
R.~Karp.
\newblock Reducibility among combinatorial problems.
\newblock In R.~Miller and J.~Thacher, editors, {\em Proceedings of a Symposium
  on the Complexity of Computer Computations}, pages 85--103. Plenum Press,
  March 1972.

\bibitem{kolar2011submatrix}
M.~Kolar, S.~Balakrishnan, A.~Rinaldo, and A.~Singh.
\newblock Minimax localization of structural information in large noisy
  matrices.
\newblock In {\em Advances in Neural Information Processing Systems}, 2011.

\bibitem{krauthgamer2015}
R.~Krauthgamer, B.~Nadler, and D.~Vilenchik.
\newblock Do semidefinite relaxations solve sparse {PCA} up to the information
  limit?
\newblock {\em The Annals of Statistics}, 43(3):1300--1322, June 2015.

\bibitem{Latala05}
R.~Lata{\l}a.
\newblock Some estimates of norms of random matrices.
\newblock {\em Proceedings of the American Mathematical Society},
  133(5):1273--1282, 2005.

\bibitem{LeVershynin15}
C.~M. Le and R.~Vershynin.
\newblock Concentration and regularization of random graphs.
\newblock arXiv:1506.00669, June 2015.

\bibitem{ma2013submatrix}
Z.~Ma and Y.~Wu.
\newblock Computational barriers in minimax submatrix detection.
\newblock {\em The Annals of Statistics}, 43(3):1089--1116, 2015.

\bibitem{McSherry01}
F.~McSherry.
\newblock Spectral partitioning of random graphs.
\newblock In {\em 42nd IEEE Symposium on Foundations of Computer Science},
  pages 529 -- 537, Oct. 2001.

\bibitem{Meka15}
R.~Meka, A.~Potechin, and A.~Wigderson.
\newblock Sum-of-squares lower bounds for planted clique.
\newblock In {\em Proceedings of the Forty-Seventh Annual ACM on Symposium on
  Theory of Computing}, STOC '15, pages 87--96, New York, NY, USA, 2015. ACM.

\bibitem{Montanari:15OneComm}
A.~Montanari.
\newblock Finding one community in a sparse random graph.
\newblock {\em Journal of Statistical Physics}, 161(2):273--299, 2015.
\newblock arXiv 1502.05680.

\bibitem{MontanariSen15}
A.~Montanari and S.~Sen.
\newblock Semidefinite programs on sparse random graphs.
\newblock {\em arXiv:1504.05910}, April, 2015.

\bibitem{perry2015semidefinite}
W.~Perry and A.~Wein.
\newblock A semidefinite program for unbalanced multisection in the stochastic
  block model.
\newblock arXiv 1507.05605, July 2015.

\bibitem{RS15}
P.~Raghavendra and T.~Schramm.
\newblock Tight lower bounds for planted clique in the degree-4 {SOS} program.
\newblock arXiv:1507.05136, July 2015.

\bibitem{shabalin2009submatrix}
A.~A. Shabalin, V.~J. Weigman, C.~M. Perou, and A.~B. Nobel.
\newblock Finding large average submatrices in high dimensional data.
\newblock {\em The Annals of Applied Statistics}, 3(3):985--1012, 2009.

\bibitem{tao.rmt}
T.~Tao.
\newblock {\em Topics in random matrix theory}.
\newblock American Mathematical Society, Providence, RI, USA, 2012.

\bibitem{vinayak2013sharp}
R.~K. Vinayak, S.~Oymak, and B.~Hassibi.
\newblock Sharp performance bounds for graph clustering via convex
  optimization.
\newblock In {\em 38th International Conference on Acoustics, Speech, and
  Signal Processing (ICASSP)}, 2014.

\bibitem{vu07}
V.~H. Vu.
\newblock Spectral norm of random matrices.
\newblock {\em Combinatorica}, 27(6):721--736, 2007.

\bibitem{ZubkovSerov13}
A.~M. Zubkov and A.~A. Serov.
\newblock A complete proof of universal inequalities for the distribution
  function of the binomial law.
\newblock {\em Theory of Probability \& Its Applications}, 57(3):539--544,
  2013.

\end{thebibliography}

%%%%% appendix%%%%%%
\appendix

\section{Bounds on spectral norms of random matrices}

For the convenience of the reader, this section collects known bounds on
the spectral norms of random matrices that are used in this paper.

\begin{lemma}[Gordon-Davidson-Szarek]   \label{lmm:GDS}
If $Y$ is an $n\times n$  random matrix such that the random variables
$(Y_{ij}: 1\leq i \leq j \leq n)$ are independent, Gaussian, with mean zero and $\var (Y_{ij}) \leq 1$
then   $\prob{ \norm{Y}\geq 2\sqrt{n} + t } \leq 2\exp(-t^2/4)$ for any $t>0$. 
\end{lemma}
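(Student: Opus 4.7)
}

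The plan is to combine two classical ingredients: Gaussian concentration of measure applied to the operator norm viewed as a Lipschitz function of the independent Gaussian entries, and a bound on the expected operator norm $\Expect\|Y\|\leq 2\sqrt{n}$ obtained by a Slepian-type comparison.

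First, I would encode the independent entries as a single Gaussian vector. Writing $Y_{ij}=\sigma_{ij}Z_{ij}$ for $1\leq i\leq j\leq n$ with $Z_{ij}\iiddistr \calN(0,1)$, $\sigma_{ij}^2=\var(Y_{ij})\leq 1$, and extending by symmetry $Y_{ji}=Y_{ij}$, the map $Z\mapsto \|Y(Z)\|$ is $\sqrt{2}$-Lipschitz with respect to the Euclidean norm on the independent coordinates. Indeed, the elementary bound $|\|Y(Z)\|-\|Y(Z')\||\leq \|Y(Z)-Y(Z')\|\leq \fnorm{Y(Z)-Y(Z')}$ combined with
\[
\fnorm{Y(Z)-Y(Z')}^2 \;=\; \sum_{i}\sigma_{ii}^2(Z_{ii}-Z'_{ii})^2 + 2\sum_{i<j}\sigma_{ij}^2(Z_{ij}-Z'_{ij})^2 \;\leq\; 2\|Z-Z'\|^2
\]
gives the Lipschitz constant $\sqrt{2}$, where the factor $2$ in front of the off-diagonal sum arises from the symmetry constraint. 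The Borell--Tsirelson--Ibragimov--Sudakov Gaussian concentration inequality then yields $\prob{\|Y\|\geq \Expect\|Y\|+s}\leq \exp(-s^2/4)$ for every $s>0$.

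Second, I would bound the mean by $\Expect\|Y\|\leq 2\sqrt{n}$. Using the variational characterization $\|Y\|=\sup_{u,v\in S^{n-1}}u^\top Y v$ and Slepian's inequality, one may replace $Y$ by a comparator with $\sigma_{ij}=1$ for all $i\leq j$, since enlarging variances only increases the expected supremum of the associated centered Gaussian process $\{u^\top Y v\}$. For the full-variance symmetric Gaussian matrix, a direct Slepian comparison against an $n\times n$ matrix $G$ with iid $\calN(0,1)$ entries (together with Gordon's classical bound $\Expect\|G\|\leq 2\sqrt{n}$) delivers $\Expect\|Y\|\leq 2\sqrt{n}$. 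Combining this with the concentration step and the inclusion $\{\|Y\|\geq 2\sqrt{n}+t\}\subseteq \{\|Y\|\geq \Expect\|Y\|+t\}$ yields the stated tail inequality; the extra factor of $2$ in front of $\exp(-t^2/4)$ is merely a safe slack that absorbs any lower-order correction in the mean bound.

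The main obstacle is not the concentration step, which is essentially mechanical, but rather obtaining the \emph{sharp} constant $2$ in $\Expect\|Y\|\leq 2\sqrt{n}$. A naive $\varepsilon$-net plus union-bound argument gives only $\Expect\|Y\|\leq C\sqrt{n}$ with $C>2$, so the cleanest route is to cite Gordon's minimax theorem (or equivalently the Chevet--Gordon inequality) to compare $Y$ with a non-symmetric Gaussian matrix whose operator norm has a well-known sharp bound. The only other subtlety is to verify that the Slepian comparison inequality applies correctly despite the symmetry-induced correlations between the entries $Y_{ij}$ and $Y_{ji}$, which is handled by expressing the Gaussian process $u^\top Y v$ in terms of the independent upper-triangular entries and checking the requisite increment-variance inequality.
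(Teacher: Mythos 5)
Your high-level plan (Borell--TIS concentration for the $\sqrt{2}$-Lipschitz map plus a Gaussian comparison bound on the mean) is exactly the route of the proof of \cite[Theorem 2.11]{Davidson01} that the paper simply cites, noting that it goes through verbatim when $\var(Y_{ij})\le 1$. The concentration half of your argument is fine. The gap is in the mean bound. Your claim that a direct Slepian/Sudakov--Fernique comparison of the symmetric process $(u,v)\mapsto u^\top Y v$ against the i.i.d.\ process $(u,v)\mapsto u^\top G v$ yields $\Expect\norm{Y}\le \Expect\norm{G}\le 2\sqrt n$ does not hold: because of the symmetry $Y_{ij}=Y_{ji}$, the increment variance of the symmetric process contains the off-diagonal contribution $\sigma_{ij}^2\bigl[(u_iv_j-u_i'v_j')+(u_jv_i-u_j'v_i')\bigr]^2$, which is only bounded by $2(u_iv_j-u_i'v_j')^2+2(u_jv_i-u_j'v_i')^2$, i.e.\ \emph{twice} the corresponding term in $\Expect\bigl(u^\top G v-u'^\top G v'\bigr)^2=\fnorm{uv^\top-u'v'^\top}^2$; the required increment domination already fails for pairs of the form $(u,u),(u',u')$. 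Along this route Sudakov--Fernique only gives $\Expect\norm{Y}\le \sqrt 2\,\Expect\norm{G}\approx 2\sqrt{2n}$, which loses precisely the sharp constant you need.

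The standard fix --- and what Davidson--Szarek actually do --- is to bound the one-sided quantity $\lambda_{\max}(Y)=\sup_{u\in S^{n-1}}u^\top Y u$. Its increments satisfy $\Expect(u^\top Yu-v^\top Yv)^2\le 2\fnorm{uu^\top-vv^\top}^2\le 4\norm{u-v}^2$ (using $\var(Y_{ij})\le 1$), so Sudakov--Fernique against the process $u\mapsto 2\iprod{g}{u}$ gives $\Expect\lambda_{\max}(Y)\le 2\,\Expect\norm{g}\le 2\sqrt n$. Concentration of $\lambda_{\max}$ (again $\sqrt 2$-Lipschitz in the independent coordinates) gives $\prob{\lambda_{\max}(Y)\ge 2\sqrt n+t}\le \rexp{-t^2/4}$, and since $-Y$ has the same law, a union bound over $\pm Y$ yields the stated $\prob{\norm{Y}\ge 2\sqrt n+t}\le 2\rexp{-t^2/4}$. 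In particular the factor $2$ is not ``safe slack'' absorbing a lower-order correction in the mean: it is exactly the price of passing from $\lambda_{\max}$ to the two-sided norm, and the inequality $\Expect\norm{Y}\le 2\sqrt n$ that your argument leans on is neither established by your comparison nor what Gordon's inequality provides.
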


%\prettyref{lmm:GDS} is a slight generalization of  \cite[Theorem 2.11]{Davidson01}, which, in turn, is extracted
%from the work of Gordon.    If we identify $Y$ here with $A\sqrt{n}$ in  \cite[Theorem 2.11]{Davidson01}
%the statement in \cite[Theorem 2.11]{Davidson01} assumes $\var (Y_{ij}) = 1$.  The proof is based on
%the Slepian-Gordon lemma, and examining the proof shows that the assumption can be weakened
%to $\var (Y_{ij}) \leq 1$. 
\prettyref{lmm:GDS} is a slight generalization of  \cite[Theorem 2.11]{Davidson01}, which applies to the case of $\var (Y_{ij}) = 1$ and is based on Gordon's inequality on the expected norm $\Expect[\|Y\|]$, proved, in turn, by the Slepian-Gordon comparison lemma. Examining the proof shows that the assumption can be weakened
to $\var (Y_{ij}) \leq 1$.

\begin{lemma}[% Lata{\l}a 
{\cite[Theorem 2]{Latala05}}]  
\label{lmm:Latala}
 There is a universal constant $C$ such that
whenever $A$ is a random matrix (not necessarily square) of independent and zero-mean entries:
$$
\expect{\|  A  \| } \leq C\left(     \max_i \sqrt{   \sum_j \eexpect{a_{ij}^2}}    +
  \max_j \sqrt{   \sum_i \eexpect{a_{ij}^2}}   +
  \sqrt[4]{      \sum_{ij} \eexpect{a_{ij}^4}        }     
       \right)
$$
\end{lemma}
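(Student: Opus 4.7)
My plan is to follow the moment method along the lines of Latała's original argument, combined with standard symmetrization. The goal is to bound $\Expect[\Tr((AA^{\top})^{p})]$ for a well-chosen integer $p$ (ultimately $p\asymp\log(mn)$) and then recover the operator norm via the inequality $\Expect\|A\|\le \Expect[\Tr((AA^{\top})^{p})]^{1/(2p)}$.

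First I would reduce to the Rademacher-weighted case. By the Giné--Zinn symmetrization inequality, $\Expect\|A\|\le 2\,\Expect\|A'\|$ where $A'_{ij}=\varepsilon_{ij}a_{ij}$ and $(\varepsilon_{ij})$ are independent Rademacher signs independent of $(a_{ij})$. Conditioning on $(|a_{ij}|)$ reduces the problem to bounding the operator norm of a matrix with Rademacher-weighted entries of arbitrary prescribed magnitudes, and then taking expectations. This step is cleanest because, for Rademacher signs, $\Expect[\prod_{k}\varepsilon_{i_k j_k}]$ vanishes unless each sign appears an even number of times, which is what will force a favorable combinatorial structure on the relevant walks.

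Next I would expand the trace:
\[
\Expect[\Tr((A'(A')^{\top})^{p})] \;=\; \sum_{\gamma}\Expect\Bigl[\prod_{k=1}^{2p} a'_{\gamma_{k-1}\gamma_{k}}\Bigr],
\]
where the sum is over closed walks $\gamma=(\gamma_{0},\dots,\gamma_{2p})$ of length $2p$ on the bipartite graph with vertex set $[m]\sqcup[n]$ that alternate between the two sides and return to the starting point. By the Rademacher parity observation, only walks in which every edge is traversed an even number of times survive. Each such walk factors through a quotient ``shape''---a multigraph $H$ on some $v_{r}$ rows and $v_{c}$ columns with $e$ distinct edges and prescribed even multiplicities $(2m_{1},\dots,2m_{e})$ satisfying $\sum m_{i}=p$---and the number of labelings is at most $m^{v_{r}}n^{v_{c}}$ times a combinatorial factor depending only on the shape. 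The contribution of each shape is
\[
\Bigl(\prod_{e}\Expect[a_{i_{e}j_{e}}^{2m_{e}}]\Bigr)\text{ summed over labelings,}
\]
and for each labeling one uses $\Expect[a_{ij}^{2m}]\le(\max_{ij}\Expect[a_{ij}^{2}])^{m-1}\cdot\Expect[a_{ij}^{2}]$ when $m\ge 1$ (and analogously compare higher even moments to $\Expect[a_{ij}^{4}]$). The key bookkeeping is that edges of multiplicity exactly $2$ contribute second moments which collapse to row/column sums $\max_{i}\sum_{j}\Expect[a_{ij}^{2}]$ or $\max_{j}\sum_{i}\Expect[a_{ij}^{2}]$, while any edge of multiplicity $\ge 4$ brings out a factor of $\sqrt{\sum_{ij}\Expect[a_{ij}^{4}]}$ (upon taking the $(2p)$-th root), matching the third term in the bound.

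The heart of the argument, and the step I expect to be the main obstacle, is the grouping of walk shapes so that the sum over shapes converges geometrically in $p$. One sets
\[
R=\max_{i}\sqrt{\sum_{j}\Expect[a_{ij}^{2}]},\ C=\max_{j}\sqrt{\sum_{i}\Expect[a_{ij}^{2}]},\ F=\Bigl(\sum_{ij}\Expect[a_{ij}^{4}]\Bigr)^{1/4},
\]
and shows that each shape contributes at most $(R+C+F)^{2p}$ up to a factor polynomial in $p$ and $(mn)^{o(1)}$, so that choosing $p\asymp\log(mn)$ gives $\Expect[\Tr((A'(A')^\top)^{p})]^{1/(2p)}\le C_{0}(R+C+F)$. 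The polynomial counting bound on shapes, analogous to the Bauer--Fike/F\"uredi--Koml\'os walk-counting used for Wigner matrices but adapted to the bipartite non-homogeneous setting, is delicate: one must correctly handle shapes that are not trees (their presence is what contributes the $F$-term) and verify that each shape with non-tree topology pays a compensating factor of $F/\max(R,C)$ per excess edge. Once this combinatorial estimate is in place, passing back through Jensen's inequality and de-symmetrization produces the claimed bound on $\Expect\|A\|$ with an absolute constant $C$.
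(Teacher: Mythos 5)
First, note that the paper does not prove this lemma at all: it is quoted verbatim from Lata{\l}a's paper in an appendix that ``collects known bounds,'' so there is no in-paper argument to compare yours against; you are reconstructing Lata{\l}a's original proof from scratch. Your overall strategy (symmetrize, expand $\Expect[\Tr((AA^\top)^p)]$ over closed walks on the bipartite graph, exploit the Rademacher parity constraint, take $p\asymp\log(mn)$) is the right family of ideas, but as written the proposal has two genuine gaps.

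The first is a concrete error that would make the argument fail. The inequality you invoke, $\Expect[a_{ij}^{2m}]\le(\max_{ij}\Expect[a_{ij}^2])^{m-1}\,\Expect[a_{ij}^2]$, is false --- Jensen's inequality gives $\Expect[a^{2m}]\ge(\Expect[a^2])^m$, i.e.\ the reverse direction --- and the parenthetical fix of ``comparing higher even moments to $\Expect[a_{ij}^4]$'' does not work either: $\Expect[a^6]$ is not controlled by any function of $\Expect[a^2]$ and $\Expect[a^4]$ (take $a=\pm M$ with probability $M^{-6}$ each; the second and fourth moments vanish as $M\to\infty$ while the sixth moment stays bounded away from zero). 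Since the right-hand side of the lemma involves only second and fourth moments, a walk expansion to order $p\asymp\log(mn)$ necessarily produces single-entry moments $\Expect[a_{ij}^{2m}]$ with $m\ge3$ that the hypotheses simply do not control. The missing idea is a truncation: split $a_{ij}=a_{ij}\1{|a_{ij}|\le B}+a_{ij}\1{|a_{ij}|>B}$ with $B\asymp(\sum_{ij}\Expect[a_{ij}^4])^{1/4}$, so that the truncated part satisfies $\Expect[|a_{ij}|^{2m}\1{|a_{ij}|\le B}]\le B^{2m-4}\Expect[a_{ij}^4]$ for $m\ge2$ (making your moment bookkeeping legitimate), while the tail part is handled crudely via $\|\cdot\|\le\fnorm{\cdot}$ together with $\Expect[a_{ij}^2\1{|a_{ij}|>B}]\le B^{-2}\Expect[a_{ij}^4]$. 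The second gap is that the step you yourself flag as ``the main obstacle'' --- showing that each walk shape contributes at most $(R+C+F)^{2p}$ up to factors absorbed by the $(2p)$-th root, with non-tree excess edges paying the $F$-factor --- is the entire analytic content of the theorem; describing it as ``delicate'' and deferring it is not a proof. As it stands the proposal is a plausible roadmap to Lata{\l}a's argument with one incorrect auxiliary inequality and its central combinatorial estimate unproved.
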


\begin{lemma}[Corollary of 
%Bai and Yin 
{\cite[Theorem A]{BY88}}]   \label{lmm:BaiYin}
Let $W=(X_{ij}$, $i, j \geq 1)$ be a symmetric infinite matrix such that entries above the diagonal are mean zero
iid, entries on the diagonal are iid, and the diagonal of $W$ is independent of the off-diagonal.
Let $W_n=(X_{ij} :   i , j \in \{ 1, \ldots , n\} )$  for $n\geq 1$.   Let $\sigma^2=\var (X_{12})$.    If 
$\expect{X_{11}^2} < \infty$ and  $\expect{X_{12}^4} < \infty,$ 
then $\| W_n\| / \sqrt{n}  \to 2 \sigma$ a.s.  as $n\to \infty$.
\end{lemma}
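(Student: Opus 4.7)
The plan is to reduce to Theorem~A of \cite{BY88} by peeling off the diagonal. I would write $W_n = A_n + D_n$, where $D_n = \diag{X_{11},\ldots,X_{nn}}$ and $A_n$ has the same off-diagonal entries as $W_n$ but zero on the diagonal. By the triangle inequality,
\[
\bigl|\, \|W_n\| - \|A_n\| \,\bigr| \;\le\; \|D_n\| \;=\; \max_{1\le i\le n} |X_{ii}|,
\]
so it suffices to prove separately that (i) $\|A_n\|/\sqrt{n} \to 2\sigma$ almost surely, and (ii) $\max_{1\le i\le n}|X_{ii}| = o(\sqrt{n})$ almost surely; combining (i) and (ii) with the display above then yields the claim.

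For (i), the matrix $A_n$ is a zero-diagonal symmetric random matrix whose above-diagonal entries are i.i.d., mean zero, have variance $\sigma^2$, and satisfy the fourth-moment bound $\expect{X_{12}^4}<\infty$. Since the diagonal is identically zero, the (second-moment) hypothesis of Theorem~A of \cite{BY88} on the diagonal is trivially met, and that theorem delivers $\|A_n\|/\sqrt{n} \to 2\sigma$ a.s. The independence of the diagonal from the off-diagonal, which is part of our hypothesis, is what justifies treating the decomposition $W_n=A_n+D_n$ in this way.

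For (ii), I would argue by Borel--Cantelli along the iid sequence $(X_{ii})_{i\ge 1}$. Since $\expect{X_{11}^2}<\infty$, for every $\epsilon>0$,
\[
\sum_{i\ge 1}\pprob{X_{ii}^2 > \epsilon^2 i} \;=\; \sum_{i\ge 1}\pprob{X_{11}^2/\epsilon^2 > i} \;\le\; \epsilon^{-2}\expect{X_{11}^2} \;<\; \infty,
\]
so $X_{ii}^2/i \to 0$ a.s. A standard monotone truncation then promotes this pointwise decay to the uniform statement $\max_{i\le n} X_{ii}^2 / n \to 0$ a.s.: fix $\epsilon>0$, choose (random) $N$ with $X_{ii}^2 \le \epsilon i$ for all $i\ge N$, and note that for $n$ large enough $\max_{i<N} X_{ii}^2 \le \epsilon n$ as well, hence $\max_{i\le n} X_{ii}^2 \le \epsilon n$.

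No serious obstacle is expected: the decomposition linearizes the problem, and (i)--(ii) are each direct invocations of standard tools. The only point requiring attention is to verify that the precise form of Theorem~A of \cite{BY88} indeed accommodates a Wigner matrix with zero diagonal under only a second-moment assumption on the diagonal distribution, which it does.
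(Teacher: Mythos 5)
Your argument is correct, but it is worth noting that the paper does not actually prove this lemma at all: it is collected in the appendix as a known result and attributed directly to Theorem~A of \cite{BY88}, whose hypotheses already accommodate an arbitrary iid diagonal with finite second moment that is independent of the off-diagonal part. So the ``official'' route is a one-line invocation (applied to both $W_n$ and $-W_n$ to convert the largest-eigenvalue statement into a statement about $\|W_n\|$), and your decomposition $W_n=A_n+D_n$ is strictly more work than necessary --- though it has the virtue of isolating exactly which hypothesis each moment condition serves: the fourth moment of $X_{12}$ drives the Bai--Yin limit for the off-diagonal part, while the second moment of $X_{11}$ is only needed to make the diagonal negligible, via your Borel--Cantelli argument showing $\max_{i\le n}|X_{ii}|=o(\sqrt n)$ a.s., which is carried out correctly. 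Two small points to tighten: (a) Theorem~A as usually stated concerns $\lambda_{\max}(A_n)/\sqrt n$, not $\|A_n\|$, so in step (i) you should say explicitly that you apply it to $-A_n$ as well to control the smallest eigenvalue; (b) the independence of the diagonal from the off-diagonal is not actually what ``justifies the decomposition'' --- the triangle inequality $\bigl|\,\|W_n\|-\|A_n\|\,\bigr|\le\|D_n\|$ is deterministic, and the intersection of two almost-sure events is almost sure, so your route in fact uses that independence nowhere (it is only needed if one invokes Theorem~A directly on $W_n$).
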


The following two lemmas are used in the proof of \prettyref{lmm:concen_bern} below.

\begin{lemma}[% Bandeira and van Handel 
{\cite[Corollary 3.6]{BVH14}}]  \label{lmm:BvH}
 Let $X$ be an $n\times n$ symmetric random matrix with $X_{ij}= \xi_{ij} b_{ij}$,  where
 $\{ \xi_{ij} : i \geq j \}$ are independent symmetric random variables with unit variance,
 and $\{ b_{ij}: i\geq j\}$   are given scalers.   Then for any $\alpha \geq  3$,
 \begin{align*}
 \expect{\| X\| } \leq \eexp^{2/\alpha}   \left\{   2\sigma +  
 14 \alpha  \max_{ij}   \|  \xi_{ij}  b_{ij}  \| _ {2\lceil \alpha \log n \rceil} \sqrt{  \log n },
 \right\}
 \end{align*}
 where $\sigma^2 \triangleq \max_i \sum_j b_{ij}^2$.
 \end{lemma}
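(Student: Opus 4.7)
The plan is to proceed by the moment (trace) method, with the exponent coupled to $\log n$ so as to interpolate between the operator norm and the Hilbert--Schmidt norm.

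First, set $q = 2\lceil \alpha \log n \rceil$. The elementary sandwich $\|X\|^q \leq \Tr(X^q) \leq n\|X\|^q$, combined with Jensen's inequality, gives
$$\expect{\|X\|} \;\leq\; \expect{\Tr(X^q)}^{1/q},$$
and the factor $n^{1/q}$ incurred when passing between $\Tr(X^q)^{1/q}$ and $\|X\|$ amounts to $\eexp^{O(1/\alpha)}$. This accounts for the prefactor $\eexp^{2/\alpha}$ in the statement (the factor of $2$ is slack absorbing the ceiling). So it suffices to estimate $\expect{\Tr(X^q)}$.

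Second, expand
$$\expect{\Tr(X^q)} \;=\; \sum_\gamma \expect{\prod_{e \in \gamma} \xi_e b_e},$$
where $\gamma$ ranges over closed walks of length $q$ on $[n]$ (with loops and multi-edges allowed). Independence and symmetry of the $\xi_{ij}$'s force every surviving walk to traverse each edge an even number of times. Partition the surviving walks by their combinatorial ``shape'' (isomorphism class of the multigraph together with its even-multiplicity profile). The dominant contribution comes from \emph{tree shapes} in which each edge is traversed exactly twice: the Wigner/F\"uredi--Koml\'os tree count over these walks contributes $(2\sigma)^q(1+o(1))$, using only the unit variance hypothesis, and this produces the $2\sigma$ leading term.

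Third, bound the contribution from shapes in which some edge is traversed $\geq 4$ times. For a shape with $j$ excess edge-pairs beyond the tree count, H\"older's inequality applied entry-wise pulls out a factor $(\max_{ij}\|\xi_{ij} b_{ij}\|_q)^{2j}$, while the number of such shapes (and walks per shape) can be bounded by $(Cq)^{j}$ for a universal $C$ via an inductive count over the level profile of the walk. Summing over $j$ and the shape count, then taking the $q$-th root, yields a tail contribution of the form $c\,\alpha\sqrt{\log n}\,\max_{ij}\|\xi_{ij} b_{ij}\|_q$ with $c \leq 14$. Combining the two contributions via $(a^q + b^q)^{1/q} \leq a + b$ gives the claimed inequality.

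The main obstacle is the sharp combinatorial bookkeeping in the third step. Obtaining both the correct leading constant $2\sigma$ (rather than a larger multiple) and the clean sub-leading constant $14\alpha$ requires simultaneously controlling the number of shapes with a given excess, the number of walks realizing each shape, and the per-walk moment weight. This is the technical heart of the Bandeira--van Handel approach, which refines the classical F\"uredi--Koml\'os/Soshnikov tree analysis by an entropy-style bound on non-tree walk shapes; the remainder of the argument (the moment-method reduction, Wigner counting, H\"older, and root-extraction) is routine in comparison.
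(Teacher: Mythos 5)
First, a structural observation: the paper does not prove this lemma. It is stated with the attribution \cite[Corollary~3.6]{BVH14} and used as a black box in the proof of \prettyref{lmm:concen_bern}, so there is no internal proof to compare your attempt against; the only meaningful comparison is to the proof in the cited Bandeira--van Handel reference.

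Your sketch does identify the route taken there: the trace-moment method with $q \asymp \alpha\log n$ so that the $n^{1/q}$ arising from $\|X\|^q \le \Tr(X^q)$ costs only a bounded constant, together with a decomposition of even closed walks into tree-like walks (giving the $2\sigma$ Catalan/F\"uredi--Koml\'os leading term) and walks with excess repeated edges (controlled by H\"older against $\|\xi_{ij}b_{ij}\|_q$). That is an accurate outline of the mechanism. However, the third step, which carries the entire content of the result, is asserted rather than argued. The claim that walks with $j$ excess edge-pairs contribute at most $(Cq)^j$ in shape-times-realization count, and that this sums against the H\"older factor to yield a $14\alpha\sqrt{\log n}\,\max_{ij}\|\xi_{ij}b_{ij}\|_q$ correction after extracting a $q$-th root, is precisely what Bandeira and van Handel's encoding argument establishes with considerable care: the trade-off is between the reduced number of vertex embeddings available to walks with many repeated edges and the larger per-walk moment weight, and the bookkeeping must simultaneously track shape isomorphism classes, realizations per shape, and the level profile. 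You concede this yourself in your final paragraph, but conceding the gap does not close it; as written, the proposal does not constitute a proof. A second, smaller, error: the prefactor $\eexp^{2/\alpha}$ is not ``accounted for'' by the $n^{1/q}$ root-extraction alone. With $q = 2\lceil \alpha\log n\rceil \ge 2\alpha\log n$ one has $n^{1/q} \le \eexp^{1/(2\alpha)}$, which is smaller than $\eexp^{2/\alpha}$, so most of the prefactor must absorb slack from elsewhere in the argument rather than from the ceiling as you suggest.
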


\begin{lemma}[% Vu 
{\cite[Theorem 1.4]{vu07}}]   \label{lmm:Vu}
There are universal constants $C$ and $C'$ such that the following holds.
Let $A$ be a symmetric random matrix such that $\{A_{ij}: 1 \leq i \leq j \leq n\}$ are independent, 
zero-mean, variance at most $\sigma^2$,  and bounded in absolute value by $K$.
If $K$ and $\sigma$ depend on $n$ such that  $\sigma \geq C'n^{-1/2}K\log^2 n$, then
\begin{align}  \label{eq:VuBnd}
\| A\|  \leq 2\sigma \sqrt{n} + C(K\sigma)^{1/2}n^{1/4}\log n,
\end{align}
with probability converging to one as $n\to\infty$.
\end{lemma}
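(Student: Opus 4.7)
The plan is to prove the bound by the moment (trace) method, the classical tool for controlling spectral norms of Wigner-type random matrices with independent entries. Since $A$ is symmetric, $\|A\|^{2k}\le \Tr(A^{2k})$ for every positive integer $k$, so by Markov's inequality it suffices to control $\Expect[\Tr(A^{2k})]$ for an appropriately chosen $k\asymp \log n$ and then invoke
\[
\prob{\|A\|\ge t}\le t^{-2k}\,\Expect[\Tr(A^{2k})].
\]
I would expand
\[
\Expect[\Tr(A^{2k})] = \sum_{i_0,\dots,i_{2k-1}\in[n]} \Expect\!\left[\prod_{j=0}^{2k-1} A_{i_j i_{j+1}}\right]
\]
with indices cyclic mod $2k$, interpreting each index sequence as a closed walk of length $2k$ on $[n]$. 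Independence and the zero-mean assumption immediately restrict attention to walks in which every traversed edge $\{i_j,i_{j+1}\}$ appears at least twice.

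The next step is to classify these walks by their edge-multiplicity profile. The dominant class consists of \emph{tree-like} walks: those traversing exactly $k$ distinct edges, each used twice, on $k+1$ distinct vertices. Standard Wigner combinatorics (Dyck paths, Catalan numbers) bound the number of such walks by $C_k\cdot n(n-1)\cdots(n-k)$, and each contributes at most $\sigma^{2k}$ by the variance bound, giving a total of at most $n\cdot C_k\cdot n^k \sigma^{2k} \le n\,(2\sigma\sqrt{n})^{2k}$. For the remaining \emph{non-tree} walks, in which some edge is traversed $\ge 4$ times or the walk visits fewer than $k+1$ distinct vertices, one uses the entrywise $L^\infty$ bound $|A_{ij}|\le K$ to trade a factor $\sigma^2$ for $K^2$ (or $K\sigma$) whenever an edge is over-used, while the reduced vertex count saves a factor of $n$ per coincidence. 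A peeling argument based on a spanning tree of the walk's edge graph yields a bound of the form $n\,(C_1\sqrt{K\sigma}\,n^{1/4}\log n)^{2k}$ on the total contribution of non-tree walks, after setting $k=\lfloor c\log n\rfloor$.

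Combining the two estimates gives $\Expect[\Tr(A^{2k})]\le n\bigl[(2\sigma\sqrt{n})^{2k}+(C_1\sqrt{K\sigma}\,n^{1/4}\log n)^{2k}\bigr]$. Choosing $t = 2\sigma\sqrt{n}+C(K\sigma)^{1/2}n^{1/4}\log n$ with $C\gg C_1$, Markov's inequality yields $\prob{\|A\|>t}\le 2n\cdot 2^{-2k}\to 0$, which is exactly \prettyref{eq:VuBnd}. The hypothesis $\sigma\ge C' n^{-1/2}K\log^2 n$ enters at the end to guarantee that the correction term does not overwhelm the Wigner term in the relevant regime and that the $c\log n$-th moment method has enough room to yield a vanishing probability.

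The main obstacle is the combinatorial bookkeeping for the non-tree walks: one must obtain exactly the powers $n^{1/4}$ and $\log n$ in the correction. This requires a careful \emph{excess-edge} accounting in which each extra use of an edge both costs a factor of $K/\sigma$ and forces a vertex coincidence, the net gain being $\sqrt{K\sigma}\,n^{-1/4}$ per unit of excess — after which a concentration-of-measure estimate on the number of excess edges in a uniformly random walk of length $2k$ (effectively a sub-Gaussian tail) controls the aggregate contribution. Getting the constants and the exponent $n^{1/4}$ sharp is the only delicate point; the rest of the argument is a routine application of the moment method.
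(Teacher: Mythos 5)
First, note that the paper does not prove this lemma at all: it is quoted verbatim from Vu's paper (Theorem 1.4 of \cite{vu07}) in an appendix that ``collects known bounds,'' so there is no internal proof to compare against. Judged on its own merits, your sketch identifies the right family of techniques (the F\"uredi--Koml\'os trace method with a tree/non-tree decomposition of closed walks), but it contains a quantitative gap that is fatal to the stated conclusion. The choice $k=\lfloor c\log n\rfloor$ cannot produce the bound \prettyref{eq:VuBnd} with leading constant exactly $2$ and an \emph{additive} error of order $(K\sigma)^{1/2}n^{1/4}\log n$. Concretely, your final Markov step claims $\prob{\|A\|>t}\le 2n\cdot 2^{-2k}$ for $t=2\sigma\sqrt{n}+C(K\sigma)^{1/2}n^{1/4}\log n$; but this would require $t^{2k}\ge 2^{2k}(2\sigma\sqrt{n})^{2k}$, i.e.\ $t\ge 4\sigma\sqrt{n}$, which fails precisely in the interesting regime where the correction term is of lower order than $\sigma\sqrt{n}$. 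What Markov actually gives from $\Expect[\Tr(A^{2k})]\le n(2\sigma\sqrt{n})^{2k}(1+o(1))$ is $\prob{\|A\|\ge(1+\delta)2\sigma\sqrt{n}}\le n(1+\delta)^{-2k}$, which tends to zero only if $k\delta\gg\log n$. With $\delta=\Delta/(2\sigma\sqrt{n})$ and $\Delta=C(K\sigma)^{1/2}n^{1/4}\log n$ this forces
\begin{equation*}
k\;\gtrsim\;\frac{\sigma\sqrt{n}\,\log n}{\Delta}\;\asymp\;\Big(\frac{\sigma}{K}\Big)^{1/2}n^{1/4},
\end{equation*}
which is polynomially large in $n$ (e.g.\ $k\asymp n^{1/4}$ when $K\asymp\sigma$), not logarithmic. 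With $k\asymp\log n$ your argument can only yield $\|A\|\le(2+\epsilon)\sigma\sqrt{n}$ for a constant $\epsilon$ depending on $c$, which is a strictly weaker statement than the lemma.

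This is not a bookkeeping issue but the actual content of Vu's theorem: the entire difficulty is controlling $\Expect[\Tr(A^{2k})]$ for $k$ as large as $(\sigma/K)^{1/2}n^{1/4}$, where the classical F\"uredi--Koml\'os walk count (and the naive ``trade $\sigma^2$ for $K^2$ on over-used edges'' bound) blows up; Vu's contribution is a refined encoding of non-tree walks that survives at that scale. The hypothesis $\sigma\ge C'n^{-1/2}K\log^2 n$ is exactly what guarantees that this large value of $k$ still exceeds $\log n$, so that the factor $n$ from $\Tr(A^{2k})\ge\|A\|^{2k}$ is affordable --- it is not, as you suggest, merely a condition ensuring the correction term is subdominant. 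To repair the proposal you would need to either (i) redo the combinatorics for $k\asymp(\sigma/K)^{1/2}n^{1/4}$, essentially reproving Vu's theorem, or (ii) settle for the weaker constant-factor bound, which does not match the statement. As written, the proposal does not establish the lemma.
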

For example, for the case the matrix entries are $\Bern(p)$, the second term in
\eqref{eq:VuBnd} becomes asymptotically negligible compared to the first
if $\sqrt{pn} = \omega( (np)^{1/4}\log n)$, or equivalently,  $np =\omega(\log^4 n)$.

\begin{lemma}\label{lmm:concen_bern}
Let $M$ denote a symmetric $n\times n$
random matrix with zero diagonals and independent
entries such that $M_{ij} =M_{ji} \sim \Bern(p_{ij} )$
for all $i<j$ with $p_{ij} \in [0,1]$.
Assume $p_{ij}(1-p_{ij})\leq r$ for all $i<j$
% BH:  I changed what used to be q to r.
% In one application we take r=p(1-p) and in another r=q(1-q).
 and
$n r =\Omega(\log n)$. Then, with high probability,
\begin{align*}
\| M- \expect{M} \| \le \kappa  \sqrt{ n r },
\end{align*}
where % (repeating \prettyref{eq:kappa}):
\begin{align}
\kappa= 
\begin{cases}
O(1)  & nr =\Omega(\log n) \\
4+o(1) & nr = \omega(\log n)  \\
2+o(1)  &  nr =\omega(\log^4 n)
 \end{cases}.
\end{align}
 \end{lemma}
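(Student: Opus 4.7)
The plan is to split the argument into the three regimes appearing in the definition of $\kappa$ and, in each one, invoke the strongest available spectral-norm bound for random matrices from the preceding lemmas. The centered matrix $N := M - \Expect[M]$ is a symmetric $n\times n$ random matrix with zero diagonal and independent upper-triangular entries bounded by $1$ in absolute value and with variance $p_{ij}(1-p_{ij}) \leq r$, so $\sigma^2 := \max_i \sum_{j}\var(N_{ij}) \leq nr$.

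Case~1: $nr = \omega(\log^4 n)$. Here I would apply Vu's theorem (\prettyref{lmm:Vu}) directly to $N$, with $K = 1$ and $\sigma = \sqrt{r}$. The hypothesis $\sigma \ge C'n^{-1/2}K\log^2 n$ becomes $nr \ge C'^{\,2}\log^4 n$, which is exactly our assumption. The main term is $2\sigma\sqrt{n} = 2\sqrt{nr}$, while the correction $C(K\sigma)^{1/2}n^{1/4}\log n = C r^{1/4} n^{1/4}\log n$ is $o(\sqrt{nr})$ precisely when $nr = \omega(\log^4 n)$. This yields $\|N\| \le (2+o(1))\sqrt{nr}$ w.h.p.

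Case~2: $nr = \omega(\log n)$. Here I would apply the Bandeira--van Handel bound (\prettyref{lmm:BvH}). The only subtlety is that the lemma requires the factorization $N_{ij}=\xi_{ij}b_{ij}$ with $\xi_{ij}$ \emph{symmetric} of unit variance, whereas centered Bernoulli is asymmetric. I would dispatch this with a standard symmetrization: introduce an independent copy $M'$ of $M$ and note $\Expect\|M-\Expect M\| \le \Expect\|M-M'\|$; the matrix $M-M'$ has independent symmetric entries with variance $\le 2r$, so \prettyref{lmm:BvH} applied with $\sigma^2 \le 2nr$ and $\alpha \to \infty$ slowly gives $\Expect\|M-M'\| \le 2\sqrt{2nr}(1+o(1)) + O(\alpha\sqrt{\log n})$. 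Since $nr = \omega(\log n)$, the $\sqrt{\log n}$ term is lower order, producing an expectation bound with constant at most $2\sqrt{2}+o(1) \le 4$. To pass from expectation to high probability I would invoke Talagrand's inequality for suprema of linear functionals of bounded independent entries (or the bounded-difference inequality applied to the $1$-Lipschitz map $N \mapsto \|N\|$), whose concentration scale $\sqrt{n}$ is negligible compared to $\sqrt{nr}$ in this regime.

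Case~3: $nr = \Omega(\log n)$. Here I just need $\kappa = O(1)$ and so I would run the same argument as in Case~2 but with $\alpha$ of constant order, obtaining $\Expect\|N\| \le C_1\sqrt{nr} + C_2\sqrt{\log n}$; the assumption $\log n = O(nr)$ absorbs the second term into the first, and Talagrand's inequality again converts expectation to a high-probability bound.

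The main obstacle is Case~2, where the constant $4+o(1)$ is near the edge of what current matrix concentration gives: one must both handle the non-symmetry of centered Bernoulli entries and track the dependence of the Bandeira--van Handel constants on the parameter $\alpha$ so that the leading coefficient does not deteriorate as $\alpha\to\infty$. A cleaner alternative, which I would fall back on if the symmetrization proves awkward, is to cite the asymmetric version of the Bandeira--van Handel inequality (Corollary 3.12 of their paper) directly, at the price of a short verification that the Orlicz norms $\|\xi_{ij}b_{ij}\|_{\psi_2}$ are $O(1)$ for bounded zero-mean entries.
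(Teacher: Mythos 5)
Your overall route agrees with the paper: Vu's theorem for the regime $nr=\omega(\log^4 n)$, symmetrization plus the Bandeira--van Handel bound for the expectation in the other two regimes, and a concentration inequality to convert the expectation bound into a high-probability statement. Your Jensen-based symmetrization through an independent copy $M'$ is a benign variant of the paper's Rademacher symmetrization $\Expect\|M-\Expect M\| \le 2\Expect\|(M-\Expect M)\circ E\|$; both are standard, both put a symmetric matrix into Lemma~\ref{lmm:BvH}, and both land within the claimed $\kappa=4+o(1)$.

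There is, however, a genuine error in the final concentration step. You assert that the deviation of $\|N\|$ from its mean lives on scale $\sqrt{n}$ and that this is negligible compared to $\sqrt{nr}$. Neither part holds. First, $\sqrt{n}$ is \emph{never} $o(\sqrt{nr})$, since $r \le 1/4$ by definition of Bernoulli variance, so if the concentration scale really were $\sqrt{n}$ the argument would collapse in every regime. Second, the bounded-difference (McDiarmid) inequality applied to $N\mapsto\|N\|$ over the $\Theta(n^2)$ independent entries, each of which can shift $\|N\|$ by $O(1)$, gives a concentration scale of order $n$, not $\sqrt{n}$, which is far too coarse. The correct tool is exactly what the paper invokes: Talagrand's concentration inequality for convex $1$-Lipschitz (in Frobenius norm) functions of bounded independent variables, which is dimension-free and yields fluctuations of order $\sqrt{\log n}$ at polynomially small failure probability. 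That scale is $o(\sqrt{nr})$ whenever $nr=\omega(\log n)$ and is $O(\sqrt{nr})$ when $nr=\Omega(\log n)$, so it is absorbed into the claimed constants in all three cases. Replace the bounded-difference step with this Talagrand bound and your argument goes through.
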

\begin{proof}
It follows from the symmetrization argument and \prettyref{lmm:BvH}  (for this application of the lemma, $b_{ij} \leq \sqrt{r},$
$|\xi_{ij}b_{ij}| \leq 1$, and $\sigma^2 \leq nr$)
that for any $\alpha \ge 3$,
    \begin{align*}
   \expect{\| M -\expect{M} \|}  \le 2  \expect{ \| (M - \expect{M} ) \circ E \|}  \le 2 \eexp^{2/\alpha} ( 2  \sqrt{ n r} +  14 \alpha  \sqrt{ \log n } )
    \end{align*}
where $E$ is an $n \times n$  zero-diagonal, symmetric random matrix whose entries are Rademacher and independent from $M$.
Since $nr=\Omega(\log n)$, we have that  $\expect{\| M -\expect{M} \|} =O(\sqrt{nr})$.
If $nr=\omega(\log n)$, then by letting $\alpha=(nr/\log n)^{1/4} $, we have that $\expect{ \| M- \expect{M} \| } \le (4+o(1)) \sqrt{nr}$.
Talagrand's inequality for Lipschitz convex functions (see \cite{tao.rmt} or \cite[Theorem 7.12]{boucheron2013concentration})
implies that for any constant $c'>0$, there exists a constant $c_0>0$ such that with
probability at least $1-n^{-c'}$, $\| M- \expect{M} \| \le \expect{ \|M - \expect{M} \| } + c_0 \sqrt{\log n} $. 
% \nbr{BH: I think Talagrand inequality
%in this case gives $O(1)$.} \nb{JX: I corrected it.}   
Hence,  we have proved the lemma for the case of $nr=\Omega(\log n)$ and the case of $nr=\omega(\log n)$. Finally, if $nr=\omega(\log^4 n)$,
then the lemma is a direct consequence of Vu's result, \prettyref{lmm:Vu}, with $K=1$ and $\sigma^2=r$.
 \end{proof}

\section{A concentration inequality for a random matrix of log normal entries}
\label{app:gw}
Let $g(x) = e^{\tau x - \tau^2/2}$ for some $\tau>0$.
Recall that $W$ is an $m \times m$ symmetric, zero-diagonal random matrix with i.i.d.~standard Gaussian entries
up to symmetry.  
Let $g(W)$ denote an $m \times m$ symmetric, zero-diagonal random matrix whose $(i,j)$-th entry 
is $g(W_{ij})$ for $i \neq j$. 
We need the following matrix concentration inequality for $g(W)$.
\begin{lemma}\label{lmm:concentration_primal}
There exists a universal constant $C>0$ such that
\begin{equation}
	\expect{ \| g(W) - \expect{g(W)} \| } \le C \sqrt{m (\eexp^{ 3 \tau^2} -1) }.
	\label{eq:gW1}
\end{equation}
In addition, if $\tau \to 0$ as $m \to \infty$, then the following refined bound holds:
\begin{equation}
\prob{\| g(W) - \expect{g(W)} \| > 2 \sqrt{ m } \tau +  \Delta} \leq O(\sqrt{\tau}) + 2 \eexp^{-m \tau/4},	
	\label{eq:gW2}
\end{equation}
where $\Delta = 2 \sqrt{m} \tau^{3/2} =o(\sqrt{m} \tau )$.
\end{lemma}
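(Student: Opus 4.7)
The plan is to handle the two bounds separately: \prettyref{eq:gW1} via Latala's inequality (\prettyref{lmm:Latala}), and \prettyref{eq:gW2} via a first-order Taylor expansion of $g$ around $\tau=0$ combined with the Gordon--Davidson--Szarek bound (\prettyref{lmm:GDS}) on $\|W\|$ for the linear part and Markov's inequality for the remainder.

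For \prettyref{eq:gW1}, set $A = g(W) - \expect{g(W)}$. Since $W_{ij}\sim\calN(0,1)$ one has $\expect{g(W_{ij})^k}=\exp(k(k-1)\tau^2/2)$, from which I would compute $\expect{A_{ij}^2}=e^{\tau^2}-1$ and $\expect{A_{ij}^4}=e^{6\tau^2}-4e^{3\tau^2}+6e^{\tau^2}-3$. A short algebraic verification (setting $u=e^{\tau^2}$ and using that $u^3-3u+2\ge 0$ for $u\ge 1$) shows $\expect{A_{ij}^4}\le (e^{3\tau^2}-1)^2$ uniformly in $\tau\ge 0$. Substituting these moment bounds into Latala's inequality and using $e^{3\tau^2}-1\ge e^{\tau^2}-1$ then yields $\expect{\|A\|}\le C\sqrt{m(e^{3\tau^2}-1)}$, which is the required estimate.

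For \prettyref{eq:gW2}, write $g(W_{ij}) = 1 + \tau W_{ij} + R_{ij}$ with $R_{ij} \triangleq g(W_{ij})-1-\tau W_{ij}$. Stein's lemma gives $\expect{W\,g(W)}=\expect{g'(W)}=\tau$, hence $\expect{R_{ij}}=0$ and $g(W)-\expect{g(W)}=\tau W + R$, so the triangle inequality yields $\|g(W)-\expect{g(W)}\|\le \tau\|W\|+\|R\|$. Applying \prettyref{lmm:GDS} with $t=\sqrt{m\tau}$ bounds the first summand by $2\tau\sqrt{m}+\sqrt{m}\tau^{3/2}$ except on an event of probability at most $2e^{-m\tau/4}$. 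For the second, the identity $\expect{R_{ij}^2}=e^{\tau^2}-1-\tau^2=O(\tau^4)$ together with an analogous estimate $\expect{R_{ij}^4}=O(\tau^8)$ via the Taylor expansion $R_{ij}=\tfrac{\tau^2}{2}(W_{ij}^2-1)+O(\tau^3)$ allows Latala's inequality to yield $\expect{\|R\|}=O(\sqrt{m}\,\tau^2)$, and Markov's inequality then gives $\prob{\|R\|>\sqrt{m}\tau^{3/2}}=O(\sqrt{\tau})$. A union bound on the two tail estimates delivers \prettyref{eq:gW2}.

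The main technical obstacle is the pointwise fourth-moment comparison $\expect{A_{ij}^4}\le (e^{3\tau^2}-1)^2$: both sides vanish to order $\tau^4$ at $\tau=0$, so one must track the polynomial identity carefully to confirm that no adverse leading constant appears. The conceptual point underlying the second bound is the cancellation $\expect{R_{ij}^2}=e^{\tau^2}-1-\tau^2=\Theta(\tau^4)$, which upgrades the remainder from the naive order $\tau^2$ (predicted by $|R_{ij}|\lesssim \tau^2 W_{ij}^2$) to order $\tau^4$ in the second moment, making the $\sqrt{m}\tau^{3/2}$ threshold in Markov just loose enough to leave a probability buffer of $O(\sqrt{\tau})$ while still tightening up to the desired $2\sqrt{m}\tau+\Delta$ in the norm.
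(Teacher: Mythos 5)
Your proposal follows the same two-pronged strategy as the paper: Latala's inequality with the explicit second and fourth moments for \prettyref{eq:gW1}, and the linearization $g(W)-\expect{g(W)}=\tau W+R$ combined with Gordon--Davidson--Szarek on $\tau W$ and a Latala/Markov bound on $R$ for \prettyref{eq:gW2}. Your uniform estimate $\expect{A_{ij}^4}\le(e^{3\tau^2}-1)^2$, which reduces to $(u-1)^2(u+2)\ge 0$ with $u=e^{\tau^2}$, is a modest tidying of the paper's argument (which instead splits into $\tau\le 1$ and $\tau>1$); just be careful, as the paper is, to pass to the strictly upper-triangular half before invoking Latala's inequality, since it requires independent entries, at the cost of a factor of two.
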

\begin{proof}
We first prove \prettyref{eq:gW1}.
Let $U$ be the  upper-triangular part of $g(W) - \expect{g(W) }$. Then $\expect{ \| g(W) - \expect{g(W) } \| } \le 2 \Expect[ \| U\|]$.
Note that $U$ consists of independent zero-mean entries, applying Lata{\l}a's theorem (\prettyref{lmm:Latala}),
we have for some universal constant $c'>0$,
 \begin{align}
 \expect{\| U \|}
 \le & ~c' \pth{ \max_i \sqrt{\sum_j \Expect[U_{ij}^2]} +  \max_j \sqrt{\sum_i \Expect[U_{ij}^2]} +  \sqrt[4]{\sum_{i,j} \Expect[U_{ij}^4]}  } \\
 \le & ~c' \sqrt{m} \pth{ 2 \sqrt{\Expect[U_{12}^2]} +  \Expect[U_{12}^4]^{1/4}  } .
 \label{eq:latala}
 \end{align}
 Note that
 \begin{equation*}
\Expect[U_{12}^2] = \Expect[(e^{\tau W_{12} -\tau^2/2 } - 1)^2 ] = e^{\tau^2} - 1	
%	\label{eq:U2}
\end{equation*}
 Similarly,
 \begin{align*}
 \Expect[U_{12}^4] &= \eexp^{6 \tau ^2}-4 e^{3 \tau ^2} +6 e^{\tau ^2} -3.
% \label{eq:U4}
 \end{align*}
Combining the last three displays gives that
$\expect{\| U \|}  \le C_0 \sqrt{m} \eexp^{3 \tau^2/2}$ holds for any $\tau > 0$ and some universal constant $C_0>0$.
To complete the proof of \prettyref{eq:gW1}, it remains to show that $\expect{\| U \|}  \le c \sqrt{m (e^{\tau^2} - 1)}$ for all $\tau \in [0,1]$.
% Note that $ \Expect[U_{12}^2] = e^{\tau^2} - 1 \le 2 \tau^2$, because $\eexp^{x} \le 1+2x $ for $x \in [0,1]$ and $0 \leq \tau \le 1$.
 Indeed,
 \begin{align*}
 \Expect[U_{12}^4] &= \eexp^{6 \tau ^2}-4 e^{3 \tau ^2} +6 e^{\tau ^2} -3 \nonumber \\
 & = g(e^{\tau^2}-1) \leq 200 ( e^{\tau^2}-1 )^2 \leq 800 \tau^4,
% \label{eq:U4p}
 \end{align*}
 where $g(s)\triangleq (s+1)^6 - 4 (s+1)^3 + 6 (s+1) - 3 = s^2(3 + 16 s + 15 s^2 + 6 s^3 + s^4) \leq 200 s^2$ for all $ s\in [0,2]$. Applying \prettyref{eq:latala} again yields the desired result. \\

Next we establish the finer estimate \prettyref{eq:gW2} for $\tau \to 0$.
The main idea is to linearize the function $g$. To this end, let $h(x) = g(x) - 1 - \tau x$. Since $\expect{g(W)}=\allones -\identity$,
it follows that
$$
g(W) - \expect{g(W)} = \tau W + h(W).$$
\prettyref{lmm:GDS} yields that
$\prob{ \norm{W}\geq 2\sqrt{m} + t } \leq 2\exp(-t^2/4)$ for any $t>0$.
Hence,
\begin{align}   \label{eq:proberrorW}
\prob{ \| \tau W\| \geq 2 \sqrt{m}\tau  + \sqrt{m} \tau^{3/2}  } \le  2\eexp^{-m \tau/4}.
\end{align}
To bound $\|h(W)\|$, let $B$ be the upper-triangular part of $h(W)$, namely, $B_{ij} = h(W_{ij})$ if $i<j$ and 0 elsewhere. Then $\|h(W)\| \leq 2 \|B\|$. 
Since $B$ consists of independent zero-mean entries, \prettyref{lmm:Latala} yields
 \begin{align*}
 \expect{\| B \|}
 \le & ~c \pth{ \max_i \sqrt{\sum_j \Expect[B_{ij}^2]} +  \max_j \sqrt{\sum_i \Expect[B_{ij}^2]} +  \sqrt[4]{\sum_{i,j} \Expect[B_{ij}^4]} } \\
 \le & ~c \sqrt{m} \pth{ 2 \sqrt{\Expect[B_{12}^2]} +  \Expect[B_{12}^4]^{1/4}  } 
 \end{align*}
 for some universal constant $c$.
 Note that
 $$\Expect[B_{12}^2] = \Expect[(e^{\tau W_{12} - \tau^2/2} - 1 - \tau W_{12})^2] = e^{\tau^2} - 1 - \tau^2 = O(\tau^4)$$
 as $\tau \to 0$. Similarly,
 $$
 \Expect[B_{12}^4] = e^{6 \tau ^2}-4 e^{3 \tau ^2} \left(3 \tau ^2+1\right)+6 e^{\tau ^2} \left(4 \tau ^4+5 \tau ^2+1\right)-3 -4 \tau ^6-21 \tau ^4-18 \tau ^2 = O(\tau^8).
 $$
 Consequently, $\expect{\| B \|}  = O(\sqrt{m} \tau^2)$. Therefore
 \begin{align*}
 \prob{\|h(W)\| \geq \sqrt{m} \tau^{3/2}} \leq \prob{\|B\| \geq \sqrt{m} \tau^{3/2}/2} = O(\sqrt{\tau}).
 \end{align*}
% In view of the fact that
% \[
% g(W) - \expect{g(W)} = \tau \tilde W - \tau D + h(W),
% \]
Combining the last displayed equation with \prettyref{eq:proberrorW} and applying the union bound complete the proof for the case $\tau \to 0$.
\end{proof}

\section{Useful facts on binary divergences}

\begin{lemma}\label{lmm:divBound}
For any $0 < q \le p <1$,
\begin{align}
\frac{(p-q)^2}{2 p (1-q ) }  & \le d(p\|q ) \le \frac{(p-q)^2 }{ q(1-q)}.   \label{eq:DivergenceBound} \\
\frac{(p-q)^2}{2 p (1-q ) }  & \le d(q\|p ) \le \frac{(p-q)^2 }{ p(1-p)}.  \label{eq:DivergenceBound2}
\end{align}
\end{lemma}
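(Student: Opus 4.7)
The plan is to reduce both inequalities to a single integral identity and then bound the integrand. Let $f(p) \triangleq d(p\|q)$, viewed as a function of $p$ with $q$ fixed. Direct differentiation gives $f(q) = 0$, $f'(p) = \log\frac{p(1-q)}{q(1-p)}$, so $f'(q) = 0$, and $f''(p) = \frac{1}{p(1-p)}$. Taylor's theorem with integral remainder then yields
\begin{equation*}
d(p\|q) \;=\; \int_q^p \frac{p-s}{s(1-s)}\, ds,
\end{equation*}
and, by the same computation applied with the roles of the two arguments swapped,
\begin{equation*}
d(q\|p) \;=\; \int_q^p \frac{s-q}{s(1-s)}\, ds.
\end{equation*}

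For the two lower bounds, I would observe that for $s \in [q,p]$ we have $s \le p$ and $1-s \le 1-q$, hence $s(1-s) \le p(1-q)$, and therefore $\frac{1}{s(1-s)} \ge \frac{1}{p(1-q)}$. Plugging this into each integral and computing $\int_q^p (p-s)\,ds = \int_q^p (s-q)\,ds = (p-q)^2/2$ gives both
\begin{equation*}
d(p\|q) \;\ge\; \frac{(p-q)^2}{2p(1-q)},\qquad d(q\|p) \;\ge\; \frac{(p-q)^2}{2p(1-q)}.
\end{equation*}

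For the two upper bounds, the cleanest route is to use the standard inequality $d(\mu\|\nu) \le \chi^2(\mu\|\nu)$, which follows from $\log x \le x - 1$ applied to $x = \mu(\omega)/\nu(\omega)$ (or from $\log x \le 2(\sqrt{x}-1)(\sqrt{x}+1)/2 \cdot \frac{1}{\sqrt{x}}$-type manipulations). For the two Bernoulli distributions one computes
\begin{equation*}
\chi^2(\Bern(p)\|\Bern(q)) \;=\; \frac{(p-q)^2}{q} + \frac{(p-q)^2}{1-q} \;=\; \frac{(p-q)^2}{q(1-q)},
\end{equation*}
and symmetrically for $\chi^2(\Bern(q)\|\Bern(p)) = (p-q)^2/(p(1-p))$, which matches the claimed right-hand sides exactly.

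There is no real obstacle here; the main thing to be careful about is producing integral representations with the correct sign conventions when $p < q$ (the lemma is stated with $q \le p$, so orientation is fine). If one prefers to avoid invoking the $\chi^2$-bound as a black box, the upper bounds can instead be obtained by bounding $s(1-s)$ from below on $[q,p]$ by the smaller of $q(1-q)$ and $p(1-p)$ together with a short case split, but the $\chi^2$ route is shorter and self-contained.
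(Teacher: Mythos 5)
Your proposal is correct and is essentially the paper's argument: the upper bounds there are obtained from $\log x \le x-1$, which is exactly your $d \le \chi^2$ computation for Bernoulli measures, and the lower bounds from Taylor's expansion using $\partial^2 d(p\|q)/\partial p^2 = \frac{1}{p(1-p)}$, which your integral-remainder formula $d(p\|q)=\int_q^p \frac{p-s}{s(1-s)}\,ds$ (and its analogue for $d(q\|p)$) makes explicit before bounding $s(1-s)\le p(1-q)$ on $[q,p]$. (Incidentally, the parenthetical alternative route via $\log x \le (x-1)/\sqrt{x}$ only holds for $x\ge 1$, so it should be dropped; it plays no role in the argument.)
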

\begin{proof}
The upper bound follows by applying the inequality $\log x \le x-1$ for $x >0$ and the lower bound is proved using $\frac{\partial^2d(p\|q ) }{\partial p^2}=\frac{1}{p(1-p)}$
and Taylor's expansion.
\end{proof}

\begin{lemma}\label{lmm:divergencecontinuity}
Assume that $0 < q \le p <1$ and $u, v \in [q,p]$. Then for any $0<\eta<1$,
\begin{align}
d( (1-\eta) u + \eta v \| v)  &\ge \left( 1- \frac{2 \eta p (1-q) }{q(1-p) } \right) d(u \| v), \label{eq:divrobust1}   \\
d( (1-\eta) u + \eta v \| u)  & \ge \frac{ \eta^2q (1-p) }{2 p(1-q) } \max \{ d(u \| v), d( v\| u) \} \label{eq:divrobust2}.
\end{align}
\end{lemma}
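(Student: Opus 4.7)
My plan is to deduce both inequalities from the second-order bounds on binary KL divergence recorded in \prettyref{lmm:divBound}, combined with the elementary observation that $t(1-t)\in[q(1-p),\,p(1-q)]$ uniformly for $t\in[q,p]$. Writing $w\triangleq(1-\eta)u+\eta v$, all three points $u,v,w$ lie in $[q,p]$, so applying the upper and lower bounds of \prettyref{lmm:divBound} (in the appropriate direction for each argument order) and then substituting the uniform estimate on $t(1-t)$ will give the two building blocks
\[
d(w\|u)\ \geq\ \frac{(w-u)^2}{2p(1-q)},\qquad \max\{d(u\|v),\,d(v\|u)\}\ \leq\ \frac{(u-v)^2}{q(1-p)}.
\]

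The inequality \eqref{eq:divrobust2} follows immediately: since $w-u=\eta(v-u)$, dividing the first estimate by the second produces exactly $d(w\|u)\geq \frac{\eta^2 q(1-p)}{2p(1-q)}\max\{d(u\|v),d(v\|u)\}$.

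The inequality \eqref{eq:divrobust1} is more delicate and I expect it to be the main obstacle. A direct appeal to convexity of $p\mapsto d(p\|v)$ together with $d(v\|v)=0$ gives only the \emph{wrong-direction} bound $d(w\|v)\leq(1-\eta)d(u\|v)$, so I cannot conclude from a Jensen-type argument. Instead, I will use the tangent-line consequence of convexity. Setting $f(\eta)\triangleq d((1-\eta)u+\eta v\|v)$, the convexity of $f$ yields $f(\eta)\geq f(0)+\eta f'(0)$. A direct computation using $\partial_p d(p\|v)=\log\frac{p(1-v)}{v(1-p)}$ together with $\frac{d}{dt}\log\frac{t}{1-t}=\frac{1}{t(1-t)}$ gives
\[
f'(0)=(v-u)\log\frac{u(1-v)}{v(1-u)}=-(v-u)\int_u^v\frac{dt}{t(1-t)},
\]
and the pointwise bound $t(1-t)\geq q(1-p)$ on $[q,p]$ then yields $|f'(0)|\leq (u-v)^2/(q(1-p))$. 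Dividing by the already-established lower bound $d(u\|v)\geq (u-v)^2/(2p(1-q))$ produces $|f'(0)|\leq \frac{2p(1-q)}{q(1-p)}\,d(u\|v)$; substituting into $f(\eta)\geq f(0)+\eta f'(0)$ delivers exactly \eqref{eq:divrobust1}.

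The overarching observation is that \prettyref{lmm:divBound} furnishes matching quadratic upper and lower bounds on $d$ whose ratio is precisely the factor $2p(1-q)/(q(1-p))$ appearing in both claims. Beyond this, both parts reduce to elementary comparisons among $(w-u)^2$, $(w-v)^2$, and $(u-v)^2$, together with the uniform pointwise estimate on $t(1-t)$; the only subtle step is the recognition that the lower bound in \eqref{eq:divrobust1} requires the tangent-line form of convexity rather than a Jensen-type inequality.
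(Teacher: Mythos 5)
Your proof is correct and follows essentially the same route as the paper's. For \eqref{eq:divrobust1} the paper uses the mean value theorem and bounds $|d'(x\|v)|$ over the intermediate interval, while you use the tangent line at $\eta=0$; since the maximum of $|d'(\cdot\|v)|$ over $[\min(u,v),\max(u,v)]$ is attained at $u$ by convexity, the two formulations reduce to the identical numerical estimate (and your integral form $\int_u^v dt/(t(1-t))$ is equivalent to the paper's $\log(1+x)\leq x$ step), and the proof of \eqref{eq:divrobust2} is identical in both.
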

\begin{proof}
By the mean value theorem,
\begin{align*}
d( (1-\eta) u + \eta v  \| v ) = d( u \|v ) - \eta (u-v) d'(x \| v ),
\end{align*}
for some $x \in (\min\{u, v\}, \max\{u, v\} )$. Notice that $d'(x\| v) =  \log \frac{x(1-v)}{(1-x) v}$ and thus
\begin{align*}
| d'(x \| q) |  \le   \log \frac{ \max\{u, v\} (1- \min\{u, v\} )} {\min\{u, v\}  (1-\max\{u, v\} )  } \le \frac{ |u-v| }{ q (1-p )},
\end{align*}
where the last equality holds due to $\log (1+x) \le x$ and $ x \in (q, p).$
It follows that
\begin{align*}
d( (1-\eta) u + \eta v  \| v )  \ge d(u \| v ) -  \frac{ \eta (u-v)^2 }{  q (1-p ) }  \ge \left( 1- \frac{2 \eta  p (1- q ) }{ q (1-p ) } \right) d( u  \| v ),
\end{align*}
where the last inequality holds due to  the lower bounds in \prettyref{eq:DivergenceBound} and \prettyref{eq:DivergenceBound2}. Thus the first claim follows.
For the second claim,
\begin{align*}
d( (1-\eta) u + \eta v \| u)  \ge \frac{ \eta^2 (u-v)^2 }{ 2 p (1- q ) } \ge \frac{ \eta^2  q  (1-p)  } {2 p (1-q) }   \max \{ d(u \| v), d( v\| u) \},
\end{align*}
where the first inequality holds due to the lower bounds in  \prettyref{eq:DivergenceBound} and \prettyref{eq:DivergenceBound2};
the last inequality holds due to  the upper bounds in \prettyref{eq:DivergenceBound} and \prettyref{eq:DivergenceBound2}.
\end{proof}

\begin{lemma}\label{lmm:taupq}
Assume that $\log \frac{p(1-q)}{q(1-p)}$ is bounded from above.
Suppose  for some $\epsilon > 0$ that  $ K d(p\|q) > (1+ \epsilon)  \log \frac{n}{K}$ for all sufficiently large $n$.
Recall that $\tau^\ast$ is defined in \prettyref{eq:deftau}.
 Then $p-\tau^\ast =\Theta(p-q)$ and $\tau^\ast- q=\Theta(p-q)$.
\end{lemma}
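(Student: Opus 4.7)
The plan is to pivot through the ``unbiased'' threshold
\[
\tau_0 \;\triangleq\; \frac{\log\frac{1-q}{1-p}}{\log\frac{p(1-q)}{q(1-p)}},
\]
which I expect to lie strictly inside $(q,p)$ at distance $\Theta(p-q)$ from each endpoint. The actual threshold $\tau^\ast$ in \prettyref{eq:deftau} is just $\tau_0$ shifted by an amount that the hypothesis $Kd(p\|q)>(1+\epsilon)\log(n/K)$ will control. Writing $\ell\triangleq\log\frac{p(1-q)}{q(1-p)}$ and $\mu\triangleq\log\frac{1-q}{1-p}$, we have $\tau_0=\mu/\ell$ and $\tau^\ast=\tau_0+\frac{1}{K\ell}\log\frac{n}{K}$.

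First I would exploit the algebraic identity
\[
d(\tau\|q)-d(\tau\|p) \;=\; \tau\ell-\mu,
\]
which is affine in $\tau$ with slope $\ell$ and vanishes at $\tau=\tau_0$. Evaluating at $\tau=p$ and $\tau=q$ yields
\[
p-\tau_0\;=\;\frac{d(p\|q)}{\ell}, \qquad \tau_0-q\;=\;\frac{d(q\|p)}{\ell},
\]
whose sum gives $\ell=\bigl(d(p\|q)+d(q\|p)\bigr)/(p-q)$. Hence
\[
\frac{\tau_0-q}{p-q}\;=\;\frac{d(q\|p)}{d(p\|q)+d(q\|p)}, \qquad \frac{p-\tau_0}{p-q}\;=\;\frac{d(p\|q)}{d(p\|q)+d(q\|p)},
\]
so the claim $\tau_0-q\asymp p-\tau_0\asymp p-q$ reduces to showing $d(p\|q)\asymp d(q\|p)$.

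For this I would use that $\ell$ is bounded from above. Since $p>q$, both factors in $\eexp^{\ell}=\frac{p}{q}\cdot\frac{1-q}{1-p}$ are $\geq 1$, so an upper bound on their product forces \emph{each} to be $\Theta(1)$. Inserting this into the two-sided bounds of \prettyref{lmm:divBound}, namely $\frac{(p-q)^2}{2p(1-q)}\leq d(p\|q),d(q\|p)$, $d(p\|q)\leq\frac{(p-q)^2}{q(1-q)}$ and $d(q\|p)\leq\frac{(p-q)^2}{p(1-p)}$, gives
\[
\frac{d(p\|q)}{d(q\|p)}\;\in\;\left[\,\frac{1-p}{2(1-q)},\;\frac{2p}{q}\,\right],
\]
which is bounded above and below by positive constants. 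Therefore $d(p\|q)\asymp d(q\|p)$ and so $\tau_0-q\asymp p-\tau_0\asymp p-q$.

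Finally, the quantitative hypothesis controls the shift $\tau^\ast-\tau_0=\frac{\log(n/K)}{K\ell}$. Using $p-\tau_0=d(p\|q)/\ell$,
\[
\tau^\ast-\tau_0 \;<\; \frac{Kd(p\|q)/(1+\epsilon)}{K\ell}\;=\;\frac{p-\tau_0}{1+\epsilon},
\]
so $p-\tau^\ast\geq\frac{\epsilon}{1+\epsilon}(p-\tau_0)=\Omega(p-q)$ while $p-\tau^\ast\leq p-\tau_0=O(p-q)$. On the other side, $\tau^\ast-q\geq\tau_0-q=\Omega(p-q)$ and $\tau^\ast-q\leq(\tau_0-q)+(p-\tau_0)=p-q$, completing the claim. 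The only subtle point is the opening observation that ``bounded from above'' for $\log\frac{p(1-q)}{q(1-p)}$ is exactly the right hypothesis to force both $p/q$ and $(1-q)/(1-p)$ to be of order one and thereby to make the two KL divergences comparable; once this is in place the rest is routine algebra around the pivot $\tau_0$.
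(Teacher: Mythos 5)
Your proof is correct and follows essentially the same approach as the paper: both hinge on the identity $d(p\|q)+d(q\|p)=(p-q)\log\frac{p(1-q)}{q(1-p)}$, on deducing $d(p\|q)\asymp d(q\|p)$ from the boundedness hypothesis together with \prettyref{lmm:divBound}, and on using $Kd(p\|q)>(1+\epsilon)\log(n/K)$ to absorb the $\frac{1}{K}\log\frac{n}{K}$ correction. The only difference is organizational: you route through the ``unbiased'' threshold $\tau_0$ (the zero of $\tau\mapsto d(\tau\|q)-d(\tau\|p)$) and treat $\tau^\ast$ as a controlled shift of $\tau_0$, whereas the paper computes $p-\tau^\ast$ and $\tau^\ast-q$ directly and divides by $p-q$; your version also makes explicit the step the paper leaves tacit, that an upper bound on $\log\frac{p(1-q)}{q(1-p)}$ forces both $p/q$ and $(1-q)/(1-p)$ to be $\Theta(1)$ because each factor is at least one.
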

\begin{proof}
By the definition of $\tau^\ast$,
\begin{align*}
p-\tau^\ast &=\frac{d(p \| q) - \frac{1}{K} \log \frac{n}{K} }{ \log \frac{p(1-q)}{q(1-p)} }, \\
\tau^\ast -q & =  \frac{d(q \| p) + \frac{1}{K} \log \frac{n}{K} }{ \log \frac{p(1-q)}{q(1-p)} }.
\end{align*}
Notice that $d(p\|q) + d(q\|p) = (p-q) \log \frac{p(1-q)}{q(1-p)}$. Hence,
\begin{align*}
\frac{p-\tau^\ast}{p-q}  &=\frac{d(p \| q) - \frac{1}{K} \log \frac{n}{K} }{d(p \| q)   + d(q \| p)  }, \\
\frac{\tau^\ast -q}{p-q} & =  \frac{d(q \| p) + \frac{1}{K} \log \frac{n}{K} }{ d(p \| q)   + d(q \| p)    }.
\end{align*}
By the boundedness assumption of  $\log \frac{p(1-q)}{q(1-p)}$ and \prettyref{lmm:divBound},
$d(p \| q) \asymp d(q \| p)$. Since $ K d(p\|q) > (1+\epsilon) \log \frac{n}{K}$ for all sufficiently large $n$,
it follows that $p-\tau^*$ and $\tau^*-q$ are both $\Theta(p-q).$
\end{proof}

\begin{lemma} \label{lmm:tau_1tau_2}
Assume that $\log \frac{p(1-q)}{q(1-p)}$ is bounded. Suppose that $ K d(p\|q) > (1+ \epsilon)  \log \frac{n}{K}$ for all sufficiently large $n$.
%Recall that $\tau_1$ and $\tau_2$ are define in \prettyref{eq:tau12}.
\begin{itemize}
\item If $\liminf_{n \to \infty}  \frac{K d(\tau^\ast \|q) }{\log n} \ge 1$, then $\tau_1$ and $\tau_2$ in \prettyref{eq:tau12} are well-defined and take values in the interval $[q,p]$.
%, and moreover $\tau_1 \ge q$ and $\tau_2 \le p$. 
\item If $\liminf_{n \to \infty}  \frac{K d(\tau^\ast \|q) }{\log n} >1$, then there exists a fixed constant $\eta>0$ such that 
$\tau_1 \ge (1-\eta) \tau^\ast + \eta p$ and 
$\tau_2 \le (1-\eta) \tau^\ast + \eta q $.
\end{itemize}
\end{lemma}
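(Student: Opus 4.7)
The plan is to leverage the identity
$$
K\,d(\tau^\ast \| q) - K\,d(\tau^\ast \| p) \;=\; \log(n/K),
$$
which follows by direct algebraic manipulation from the definition of $\tau^\ast$ in \prettyref{eq:deftau}. Combined with \prettyref{lmm:taupq}, this places $\tau^\ast$ strictly inside $(q,p)$, so $\tau \mapsto d(\tau\|p)$ is continuous and strictly decreasing on $[0,p]$, while $\tau \mapsto d(\tau\|q)$ is continuous and strictly increasing on $[q,1]$. Locating $\tau_1$ and $\tau_2$ is then a matter of the intermediate value theorem, and controlling their distance from $\tau^\ast$ comes from the robustness estimates in \prettyref{lmm:divergencecontinuity}.

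For Part 1, the hypothesis $\liminf K\,d(\tau^\ast\|q)/\log n \ge 1$ gives $K\,d(\tau^\ast\|q) \ge (1-o(1))\log n$ for large $n$, and the identity converts this into $K\,d(\tau^\ast\|p) \ge \log K - o(\log n)$. Monotonicity then yields $K\,d(p\|q) \ge K\,d(\tau^\ast\|q)$ and $K\,d(q\|p) \ge K\,d(\tau^\ast\|p)$. Since $d(p\|p)=d(q\|q)=0$ and $\log(n-K)=\log n+O(1)$ under the standing hypothesis $n-K\asymp n$, the IVT produces a unique $\tau_1\in[q,p]$ with $K\,d(\tau_1\|p)=\log K$ and a unique $\tau_2\in[q,p]$ with $K\,d(\tau_2\|q)=\log(n-K)$ for all sufficiently large $n$.

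For Part 2, the strict inequality $\liminf > 1$ provides a uniform slack: there exists $\gamma>0$ such that $K\,d(\tau^\ast\|q) \ge (1+\gamma)\log n$, and hence via the identity $K\,d(\tau^\ast\|p) \ge \log K + \gamma\log n$, for all large $n$. I would transfer this slack through \prettyref{lmm:divergencecontinuity}: for $\tau' := (1-\eta)\tau^\ast + \eta q$, the estimate \prettyref{eq:divrobust1} gives $d(\tau'\|q) \ge (1-C\eta)\,d(\tau^\ast\|q)$ with $C$ depending only on the bound on $\log\frac{p(1-q)}{q(1-p)}$. Choosing $\eta$ small enough that $(1-C\eta)(1+\gamma) > 1$ yields $K\,d(\tau'\|q) \ge \log(n-K)$ for large $n$, so $\tau_2 \le \tau'$ by monotonicity. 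The bound $\tau_1 \ge (1-\eta)\tau^\ast + \eta p$ is obtained symmetrically, applying \prettyref{eq:divrobust1} with $v=p$ to $\tau'' := (1-\eta)\tau^\ast + \eta p$; the $\gamma\log n$ cushion in $K\,d(\tau^\ast\|p)$ absorbs the multiplicative loss $(1-C\eta)$ to give $K\,d(\tau''\|p) \ge \log K$, hence $\tau_1 \ge \tau''$.

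The main obstacle is the careful bookkeeping in Part 1, where the equality case $\liminf = 1$ is borderline: one must verify that the $o(\log n)$ errors really do vanish against $\log(n-K)$, which is where the boundedness of $\log\frac{p(1-q)}{q(1-p)}$ and the model normalization $n-K\asymp n$ enter decisively. Part 2, by contrast, is clean because the strict $\liminf > 1$ provides exactly the positive cushion needed to tolerate the first-order loss $(1-C\eta)$ coming from the perturbation of $\tau^\ast$.
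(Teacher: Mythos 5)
Your identity $K\,d(\tau^\ast\|q) - K\,d(\tau^\ast\|p) = \log(n/K)$ is correct and is a clean algebraic reformulation of \prettyref{eq:deftau}, but your Part 1 does not close. Monotonicity alone gives only the weak inequalities $K\,d(p\|q) \ge K\,d(\tau^\ast\|q) \ge (1-o(1))\log n$ and $K\,d(q\|p) \ge K\,d(\tau^\ast\|p) \ge \log K - o(\log n)$, and neither is strong enough for the IVT step you invoke: to place $\tau_2$ in $[q,p]$ you need $K\,d(p\|q) \ge \log(n-K)$, but $\log(n-K) = \log n - O(1)$ under $n-K\asymp n$, while your lower bound is $\log n - o(\log n)$, where the $o(\log n)$ term (which can grow like $\sqrt{\log n}$, say, under $\liminf=1$) is not absorbed by the $O(1)$. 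The analogous failure for $\tau_1$ is worse: $\log K - o(\log n)$ can even be negative when $\log K = o(\log n)$. You flag this as ``the main obstacle,'' but you never supply a mechanism for overcoming it, and neither the boundedness of $\log\frac{p(1-q)}{q(1-p)}$ nor $n-K\asymp n$ converts an $o(\log n)$ additive error into an $O(1)$ one.

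The missing ingredient is a \emph{multiplicative} cushion, which the paper obtains not from your identity but from \prettyref{lmm:taupq} plus convexity: $\tau^\ast$ sits at distance $\Theta(p-q)$ from both $q$ and $p$, so there is a fixed $\delta>0$ with $(1-\delta)q+\delta p \le \tau^\ast \le (1-\delta)p+\delta q$, and convexity of $d(\cdot\|q)$ and $d(\cdot\|p)$ then yields the strict contractions $d(\tau^\ast\|q)\le(1-\delta)d(p\|q)$ and $d(\tau^\ast\|p)\le(1-\delta)d(q\|p)$. The factor $1/(1-\delta)>1$ is precisely what beats the $1-o(1)$ slack in the $\liminf\ge 1$ hypothesis. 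Your Part 2, by contrast, is correct and matches the paper: the strict $\liminf>1$ supplies a fixed $\gamma>0$, which you correctly transport through \prettyref{eq:divrobust1} to dominate the multiplicative loss $1-C\eta$ for small enough $\eta$. Using the identity to shift the $\gamma\log n$ cushion onto $K\,d(\tau^\ast\|p)$ before applying \prettyref{eq:divrobust1} with $v=p$ is a tidy way to make the paper's ``similarly'' explicit for the $\tau_1$ side.
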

\begin{proof}
%In view of \prettyref{eq:infor_exact_Bernoulli_necess}, 
It follows from \prettyref{lmm:taupq} that 
$p-\tau^\ast=\Omega(p-q)$ and $\tau^\ast -q =\Omega(p-q)$. In particular, there
exists a fixed constant $\delta>0$ such that 
$ (1-\delta) q  + \delta p \le \tau^\ast \le (1-\delta ) p + \delta q $. 
By the monotonicity and convexity of divergence, 
$d(\tau^\ast \| q) \le (1-\delta) d(p\|q)$ and $d(\tau^\ast \|p ) \le (1-\delta) d(q\|p)$.
Hence, if $\liminf_{n \to \infty}  \frac{K d(\tau^\ast \|q) }{\log n} \ge 1$, then $K d(p\|q) \ge (1+\delta') \log n$ and $K d(q\|p) \ge (1+\delta') \log K$ for some fixed constant $\delta'>0$.
Thus, in view of the continuity of binary divergence functions, $\tau_1$ and $\tau_2$ are well-defined, 
and moreover $\tau_1 \ge q$ and $\tau_2 \le p$.  

Note that $(1-\eta) \tau^\ast + \eta p \in [q,p]$.
In view of \prettyref{lmm:divergencecontinuity}, 
$$
d\left( (1-\eta) \tau^\ast + \eta q  \| q\right) \ge  \left( 1- \frac{2 \eta p (1-q) }{q(1-p) } \right) d(\tau^\ast  \| q)
$$
If $\liminf_{n \to \infty}  \frac{K d(\tau^\ast \|q) }{\log n} >1$, then there exists a fixed constant $\epsilon'>0$ such that 
for sufficiently large $n$, $K d(\tau^\ast q) \ge (1+\epsilon' ) \log n$. It follows from the last displayed equation that
by choosing $\eta$ sufficiently small, 
$d\left( (1-\eta) \tau^\ast + \eta q  \| q\right) \ge (1+ \delta') \log n$ for some fixed constant $\delta'>0$. Thus by definition,
$\tau_2 \le  (1-\eta) \tau^\ast + \eta q $. Similarly, one can verify that $\tau_1 \ge (1-\eta) \tau^\ast + \eta p$. 
\end{proof}

\section{Proof of \prettyref{cor:lognregime}}\label{app:bern}
We first show that if $\gamma_1>\gamma_2$, then 
\begin{align}
\liminf_{n \to \infty} \frac{K d(\tau^\ast \|q) }{\log n} >1, \label{eq:desiredMLE_nec}
\end{align}
which implies that MLE achieves exact recovery in view of  \prettyref{eq:infor_exact_Bernoulli}.

Recall that $I(x,y)=x-y \log (\eexp x /y)$ for $x, y>0.$
Define $\tau_0 = \frac{a-b}{\log (a/b)}$. Then $I( b, \tau_0)= I(a, \tau_0)$. 
Note that $I(b, \gamma_2)=I(a, \gamma_1)=1/\rho$. Since $I(b,x)$ is strictly increasing 
over $[b, \infty)$ and $I(a,x)$ is strictly decreasing over $(0, a]$, 
it follows that $ \gamma_2<\tau_0<\gamma_1$. Thus $I(b, \tau_0) >1/\rho$. 
In the regime \prettyref{eq:lognregime}, we have 
$ \tau^\ast = \frac{\log^2 n}{n} \left(\tau_0 + o(1)\right)$.
 Taylor's expansion yields that 
$$
d(\tau \|q ) = q- \tau \log \frac{\eexp q}{\tau } + O( (\tau-q)^2) = I(q, \tau)+ O( (\tau-q)^2).
$$
Therefore,
$$
d(\tau^\ast \|q) = \frac{\log^2 n}{n} \left(  I (b, \tau_0) +  o(1) \right),
$$
which implies the desired \prettyref{eq:desiredMLE_nec}. 

Secondly, suppose that  MLE achieves exact recovery. We aim to show that $\gamma_1 \ge \gamma_2$.
Suppose not. Then $\gamma_1< \gamma_2$. By the similar argument as above,
it follows that $\gamma_1<\tau_0<\gamma_2$. 
Thus $I(b, \tau_0)<1/\rho$. As a consequence,
$$
\frac{K d(\tau^\ast \|q) }{\log n} \le 1-\epsilon,
$$
for some positive constant $\epsilon>0$, which contradicts the 
fact that $\liminf_{n \to \infty} \frac{K d(\tau^\ast \|q) }{\log n}  \ge 1$, the necessary 
condition \prettyref{eq:infor_exact_Bernoulli_necess} for MLE to achieve exact recovery. 

Finally, we prove the claims for SDP.  By definition, $\tau_1= \log^2 n (\gamma_1+o(1))/n$
and $\tau_2=\log^2 n (\gamma_2+o(1))/n$. 
Therefore, if $\rho (\gamma_1-\gamma_2) >4 \sqrt{b}$, then the sufficient condition for SDP \prettyref{eq:sdp-suff-Bern} holds;
if the necessary condition for SDP \prettyref{eq:sdp-necc-Bern} holds, then  $\rho (\gamma_1-\gamma_2) \ge \sqrt{b}/4$.

\section{Proof of \prettyref{lmm:max}}
\label{app:proofVmGaussian}
\begin{proof}
To prove the desired lower bound of $V_m(a)$, 
we construct an explicit  feasible solution $Z$ to \prettyref{eq:Vm}.
For a given $\tau \in \reals$, let 
\begin{align*}
g(x) = e^{\tau x - \tau^2/2} , \quad \alpha = \frac{2}{m(m-1)} \sum_{i<j} g(W_{ij}) >0.
\end{align*}
Define an $m \times m$ matrix $Z$ by $Z_{ii}= \frac{1}{m}$ and $Z_{ij} = \frac{ a-1 }{\alpha m (m-1) } g(W_{ij})$ for $i \neq j$. 
By definition, $Z \ge 0, \Tr(Z)=1$, and $\Iprod{Z}{\allones} = a$.

We pause to give some intuition on the construction of $Z$. Note that $g$ is in fact the likelihood ratio between two shifted Gaussians: $g(x)=\frac{\diff \calN(\tau,1)}{\diff \calN(0,1) }(x)$ and thus $\expect{g(W_{ij})} =1$ and  $\expect{W_{ij} g(W_{ij}) } = \tau$. 
Therefore we expect that $\alpha$ is concentrated near $1$, and similarly
\begin{align*}
\Iprod{Z}{W} & \approx \frac{2 (a-1)}{\alpha m(m-1)} \sum_{i<j} \expect{W_{ij} g(W_{ij} )}   = \frac{(a-1) \tau}{\alpha}  \approx (a-1) \tau.
\end{align*}
Thus to get a lower bound to $V_m(a)$ as tight as possible, we would like to maximize 
$\tau$ so that $Z \succeq 0$ with high probability. 

Recall that as defined in \prettyref{app:gw}, $g(W)$ is an $m \times m$ zero-diagonal matrix with the $(i,j)$-th entry
given by $g(W_{ij})$. Then  $\expect{g(W ) } = \allones- \identity$ and
\begin{align*}
Z & = \frac{1}{m} \identity +  \frac{ a-1 }{\alpha m (m-1) }  g(W)  \\
 &= \frac{1}{m} \identity +  \frac{ a-1 }{\alpha m (m-1) }  (\allones -\identity)
+   \frac{ a-1 }{\alpha m (m-1) } \left(  g(W)  - \expect{g(W ) } \right).
\end{align*}
Hence, since $\allones \succeq 0$ and $a\geq 1$,  to show $Z \succeq 0$, it suffices to verify that
\begin{equation}
\| g(W)  - \expect{g(W)} \|  \leq \frac{\alpha(m-1)}{a-1} - 1.
	\label{eq:Zpsd}
\end{equation}
Therefore, we aim to choose $\tau$ as large as possible to satisfy \prettyref{eq:Zpsd}.
Recall that  \prettyref{lmm:concentration_primal}  provides different upper bounds on $ \| g(W)  - \expect{g(W)} \|$
depending on the value of $\tau$. Accordingly, the best choice of $\tau$ depends on
the particular regimes of $a$.

\paragraph{Case 1:} $\omega(\sqrt{m}) \le a \le o(m)$.
	Let
\begin{equation}
\tau = \frac{\sqrt{m}}{2 (a-1)} - \frac{m^{3/4}}{2\sqrt{2} (a-1)^{3/2}}  -  \frac{1}{\sqrt{m}}. 
	\label{eq:mu}
\end{equation}
The last two terms  in \prettyref{eq:mu} are lower order terms comparing to the first term; thus $\tau=(1+o(1)) \frac{\sqrt{m}}{2 (a-1)}$. It follows that for sufficiently large $m$, $\tau > 0$, $\tau =o(1)$, and $\tau = \omega(1/\sqrt{m})$. \\

We next show that $Z$ is feasible for \prettyref{eq:Vm} with high probability; it suffices to verify \prettyref{eq:Zpsd}. 
For $i<j$, $\expect{g(W_{ij} )}=1$ and $\var\left(g(W_{ij})\right) = \eexp^{\tau^2}-1  =O(\tau^2)$.
It follows from Chebyshev's inequality that
\begin{align}
\prob{ \Bigg| \sum_{i<j} \left( g(W_{ij}) -\expect{ g(W_{ij}) } \right) \Bigg|  \ge \frac{1}{2} \sqrt{m} (m-1) \tau }
\leq \frac{2(e^{\tau^2}-1)}{(m-1)\tau^2}   \to 0.
\label{eq:alpha}
\end{align}
Thus, with probability tending to one,
\begin{equation}
| \alpha-1| \leq \tau/ \sqrt{m} . 
	\label{eq:alpha1}
\end{equation}
As a consequence, we have that
\begin{align}
 \frac{\alpha(m-1)}{a-1} - 1 \ge \frac{m-1}{a-1} - \frac{\sqrt{m} \tau }{a-1} -1 \ge \frac{m-1}{a-1}- \frac{m}{2(a-1)^2} -1. \label{eq:psd1}
\end{align}

Since $\tau \to 0$ and $\tau m \to \infty$, applying \prettyref{lmm:concentration_primal} yields that with probability tending to one,
\begin{equation}
\| g(W) - \expect{g(W)} \| \le 2 \sqrt{m} \tau + 	
2 \sqrt{m} \tau^{3/2}.
	\label{eq:gWgW}
\end{equation}
Plugging in the definition of $\tau$ given in  \prettyref{eq:mu}, \prettyref{eq:gWgW} implies that with probability converging to one,
\begin{align}
\| g(W) - \expect{g(W)} \|  \le \frac{m}{a-1}  - 2 . \label{eq:psd2}
\end{align}

Since  by assumption $a=\omega(\sqrt{m})$ and $a=o(m)$, combining \prettyref{eq:psd1} and \prettyref{eq:psd2}
yields that with probability tending to one, \prettyref{eq:Zpsd} and hence $Z \succeq 0$ hold. \\

Finally, we compute the value of the objective function $\Iprod{Z}{W}$.
For $i<j$, $\expect{W_{ij} g(W_{ij})} = \tau$ and $\expect{W^2_{ij}g^2(W_{ij})} = \eexp^{\tau^2} ( 1+ 4 \tau^2 )=O(1)$.
Thus, it follows from Chebyshev's inequality (and $a=\omega(m)$) that:
\begin{align}
\prob{ \Bigg| \sum_{i<j} \left( W_{ij} g(W_{ij}) - \tau  \right) \Bigg| \ge  \frac{m(m-1)}{2}\left[\frac{1}{\sqrt{m}} -
\frac{1}{a-1}\right] } = O \left(\frac{1}{m} \right).
\label{eq:ZW}
\end{align}
In view of \prettyref{eq:mu} and \prettyref{eq:alpha1}, with probability tending to one,
\begin{align*}
\Iprod{Z}{W} & = \frac{2(a-1)}{\alpha m(m-1)}  \sum_{i<j}  W_{ij} g(W_{ij} )  \\
&\geq  \frac{(a-1) \tau}{\alpha} - \frac{a}{\alpha \sqrt{m}}  + \frac{1}{\alpha}   \\
&\geq  \left\{ (a-1) \tau - \frac{a}{\sqrt{m}}  +  1   \right\}\left(1-\frac{\tau}{\sqrt{m}}\right)  \\
&\geq  \left\{  \frac{\sqrt{m}}{2}  -  \frac{ m^{3/4}}{\sqrt{8(a-1)} }      - \frac{2a}{\sqrt{m}}  +  1   \right\}\left(1-\frac{1}{2(a-1)}\right)  \\
& \geq \frac{\sqrt{m}}{2} -  \frac{ m^{3/4}}{\sqrt{8(1-a)} }  -  \frac{2a}{\sqrt{m}} ,
\end{align*}
proving the first part of \prettyref{eq:max}.

\paragraph{Case 2:} $a = o(\sqrt{m})$.   The desired lower bound given in the third part of \prettyref{eq:max} is trivially true for $a=1$
so we suppose $a \geq 2$.  The proof is almost identical to the first case except that we set
\begin{equation}
	\tau =\sqrt{ \frac{1}{3} \pth{\log  \frac{m}{ a^2}  - \log \log \frac{m}{ a^2} }}.
	\label{eq:tau2}
\end{equation}
First, we verify that  \prettyref{eq:Zpsd}  holds with high probability. 
By the choice of $\tau$, $e^{\tau^2} = o(m^{1/3})$. Thus,  \prettyref{eq:alpha} and hence \prettyref{eq:alpha1} continue to hold. 
It follows from \prettyref{eq:alpha1} that $\alpha = 1 + O_P(\frac{\log(m/a^2)}{\sqrt{m}})$.
 Applying \prettyref{lmm:concentration_primal} and Markov's inequality, with probability at least $1 - (\log \frac{m}{a^2})^{-1/4}$,
\begin{align*}
	\| g(W) - \expect{g(W)} \|  \le C \Big(\log \frac{m}{a^2}\Big)^{1/4} \sqrt{m (\eexp^{ 3 \tau^2} -1) }.
\end{align*}
Plugging in the definition of $\tau$ given in \prettyref{eq:tau2}, it further implies that 
with high probability, 
$$
\| g(W) - \expect{g(W)} \|  \le C \frac{m}{a} \Big(\log \frac{m}{a^2}\Big)^{-1/4}.
$$
Therefore \prettyref{eq:Zpsd}  holds with high probability. \\

Then we compute the value of the objective function $\Iprod{Z}{W}$.
Entirely analogously to \prettyref{eq:ZW}, we have
\begin{align*}
\prob{ \Bigg| \sum_{i<j} \left( W_{ij} g(W_{ij}) - \tau  \right) \Bigg| \ge  \frac{\tau m(m-1)}{2 m^{1/3}} }  \leq \frac{2 m^{2/3}  e^{\tau^2}(1+4\tau^2)}{m(m-1)\tau^2}  \to 0.
\end{align*}
Therefore with probability tending to one,
\[
\Iprod{Z}{W} = \frac{2(a-1)}{\alpha m(m-1)}  \sum_{i<j}  W_{ij} g(W_{ij} )  \geq  (a-1) \tau \pth{1 -O\Big(\frac{\log(m/a^2)}{\sqrt{m}}\Big)} \left(1 - m^{-1/3} \right).
\]
By the choice of $\tau$ given in \prettyref{eq:tau2}, we have that 
$$
 \tau \ge \sqrt{ \frac{1}{3} \log  \frac{m}{ a^2} }  \left( 1- O\left(  \frac{ \log \log(m/a^2) }{  \log (m/a^2)}\right) \right). 
$$
Combining the last two displayed equations yield that 
with high probability, 
$$
\Iprod{Z}{W} \geq (a-1) \sqrt{ \frac{1}{3} \log  \frac{m}{ a^2} } - O\left(   \frac{a \log \log(m/a^2) }{ \sqrt{ \log (m/a^2)} }\right), 
$$
proving the desired lower bound to $V_m(a)$ given in the third part of \prettyref{eq:max}. 

\paragraph{Case 3:} $a = \Theta(\sqrt{m})$. Let $\tau$ be a constant to be chosen later.
The proof is similar to the previous two cases; the key difference is that the distributions of entries of $g(W)$
are independent of $m$, and thus we can the invoke \prettyref{lmm:BaiYin},  a corollary of the Bai-Yin theorem,
instead of \prettyref{lmm:concentration_primal}, to obtain
% to control the spectral norm of $g(W)-\Expect[g(W)]$.
%Note that  \prettyref{eq:alpha1} still holds, \ie, $\alpha = 1 + O_P(\frac{1}{\sqrt{m}})$. Furthermore, let $Y = g(W)-\Expect[g(W)] + \Lambda$, where $\Lambda=\diag{g(U_i) - \Expect[g(U_i)]}$ and $(U_i)$ are i.i.d.\ %standard normals. Since $Y$ is a symmetric matrix where $\{Y_{ij}: 1 \leq i \leq j \leq m\}$ are i.i.d.\ with $\Expect[Y_{12}]=0$ and $\Expect[Y_{12}^4] = O(1)$, Bai-Yin's theorem (see, \eg, \cite[Theorem 2.3.23]%{tao.rmt}) asserts that
%$$
%\|Y\| = 2 \sqrt{m \; \var(g(W_{12}))}(1+o_P(1)) = 2 \sqrt{m (e^{\tau^2}-1)}(1+o_P(1)).
%$$
%Furthermore, with probability converging to $1$, 
%$$
%\|\Lambda\| = \max |Y_{ii}| \leq g(\max U_i) + \tau \leq \exp(\tau \sqrt{2 \log m}) + \tau = \exp( \sqrt{\log m} ) = o(\sqrt{m}).
%$$
%Then triangle inequality yields
$$
\|g(W)-\Expect[g(W)]\| = 2 \sqrt{m (e^{\tau^2}-1)}(1+o_P(1)).
$$
In view of \prettyref{eq:Zpsd}, as long as $\tau$ is chosen to be a constant so that 
$$
0< \tau < \liminf_{m\diverge}\sqrt{\log\left(1+ \frac{m}{4a^2} \right)} ,
$$ 
we have $Z \succeq 0$ with high probability.

Finally, we compute the value of the objective function $\Iprod{Z}{W}$.
Entirely analogously to \prettyref{eq:ZW}, we have
\begin{align*}
\prob{ \Bigg| \sum_{i<j} \left( W_{ij} g(W_{ij}) - \tau  \right) \Bigg| \ge  \frac{ m(m-1)}{2 a } }  =O(1/m)  \to 0.
\end{align*}
It follows that with probability converging to $1$, 
$$
\Iprod{Z}{W} \geq \frac{(a-1) \tau -1}{\alpha}  \geq \left( (a-1) \tau -1 \right) \left( 1 - O(m^{-1/2} ) \right) = a \tau +O(1) ,
$$
which yields the desired lower bound to $V_m(a)$. 
\end{proof}

\section{Proof of \prettyref{lmm:maxBernoullie} }
\label{app:proofVmBernoulli}
The proof follows the same fashion as that in the Gaussian case. 
In particular, to prove the desired lower bound to $V_m(a)$, 
we construct an explicit  feasible solution $Z$ to \prettyref{eq:Vm};
however, the particular construction is different. Recall that in the 
Bernoulli case, 
$M$ is assumed to be an $m \times m$ symmetric random matrix
with zero diagonal and independent entries such that $M_{ij} = M_{ij}
\sim \Bern (q)$ for all $i<j$. 

Let
$$
R= \frac{\iprod{M}{\allones}}{  m(m-1) } 
$$ 
and assume that $R \in (0, 1)$ for the time being. 
For a given $\gamma \in (0, 1] $,  define 
\[
\alpha = \frac{\gamma-R}{R(1-R)}\frac{(a-1)}{ m(m-1)}, \quad \beta = \frac{1-\gamma}{1-R}\frac{(a-1)}{m (m-1)} \ge 0. 
\]
Define an $m \times m$ matrix $Z$ by $Z_{ii} = 1/m$ and $Z_{ij} = \alpha M_{ij} + \beta$ for $i\neq j$.
By definition, 
$\Iprod{Z}{\identity}=1$, $\alpha+\beta = \frac{\gamma (a-1)}{Rm(m-1)} \ge 0$, and thus $Z \ge 0$. 
Moreover, 
$$
\Iprod{Z}{\allones}  = \alpha \Iprod{M}{\allones} + \beta \Iprod{\allones-\identity}{ \allones} +
\frac{1}{m} \Iprod{\identity}{\allones}= a 
$$
and
\begin{align}
\iprod{Z}{M}  = (\alpha+\beta) \iprod{M}{\allones}  = (a-1) \gamma.  \label{eq:ZAvalue}
\end{align}
Thus to get a lower bound to $V_m(a)$ as tight as possible, we would like to choose 
$\gamma$ as large as possible to satisfy  $Z \succeq 0$ with high probability. 

Note that
\[
Z = \alpha M + \beta(\allones - \identity) + (1/m) \identity = (\beta+ \alpha q)(\allones - \identity) + (1/m) \identity +  \alpha (M-\Expect[M]).
\]
Thus, to show $Z \succeq 0$,  it suffices to verify that
 \begin{align*}
 \frac{1}{m}- \beta -\alpha q -  \| M-\expect{M} \| \ge 0.
\end{align*}
By  \prettyref{lmm:concen_bern}, with high probability,
 $\| M- \Expect[M] \| \le \kappa\sqrt{ m q(1-q)}$, where $\kappa $ is 
 a universal positive constant defined in \prettyref{eq:kappa}. 
 Hence, to show $Z \succeq 0$, it further suffices to verify that
 \begin{align}
 \frac{1}{m}- \beta -\alpha q -  \kappa \alpha \sqrt{ m q (1-q)} \ge 0. \label{eq:psdcondition_b}
\end{align}
As a result,  we would like to choose 
$\gamma \in (0,1]$ as large as possible to satisfy \prettyref{eq:psdcondition_b}.  
We pause to give some intuitions on the choice of $\gamma$. 
By concentration inequalities, $R \approx q$ with high probability. 
Since $a=o(m)$, $\beta=o(1/m)$. Furthermore, $q \ll \sqrt{mq(1-q)}$. Hence, to 
satisfy \prettyref{eq:psdcondition_b}, roughly it suffices that 
$$
\alpha \le \frac{1}{\kappa \sqrt{mq(1-q)} m},
$$
which further implies  that 
$$
\gamma \le q +  \frac{ \sqrt{ mq(1-q) } }{\kappa (a-1) }.
$$
This suggests that we should take $\gamma$
to be the minimum of $q +  \frac{ \sqrt{ mq(1-q) } }{\kappa (a-1) }$ and $1$.

Before specifying the precise choice of $\gamma$, we first show that $R$ is close to $q$ with high probability. 
Let $c_m = \log (m \sqrt{q} ) $ which converges to infinity
under the assumption that $m^2 q \to \infty$. Thus, by the Chernoff bound for the binomial distribution,
with probability converging to $1$, $|R-q| \le c_m \sqrt{q} /m$. Without loss of generality, we can and do assume that
 $|R-q| \le c_m \sqrt{q} /m$ in the remainder of the proof.
Since $q$ is bounded away from $1$ and $m^2 q \to \infty$,  
$R$ is  also bounded away from $1$ and $R>0$. 
This verifies that $\alpha, \beta$ and hence $Z$ are well-defined.

Let
\begin{align}
\gamma = 
\begin{cases}
  q + \left( 1-  \epsilon \right)  \frac{ \sqrt{ mq(1-q) } }{\kappa (a-1) } & a-1 \ge \frac{1-\epsilon}{\kappa} \sqrt{ \frac{mq}{1-q}} \\
  1 &  0 \le a-1 \le \frac{1-\epsilon}{\kappa} \sqrt{ \frac{mq}{1-q}}, 
  \end{cases}
\end{align}
where $\epsilon =2 / \log\left( m \min \{ \sqrt{q}, 1/a \} \right)$.  Equivalently,
\begin{align}
\gamma = \min \left \{ q + \left( 1-  \epsilon \right)  \frac{ \sqrt{ mq(1-q) } }{\kappa (a-1) } , 1 \right \}.  
\label{eq:defgamma2}
\end{align}
The assumptions, $m^2 q \to \infty$ and $a=o(m)$, imply that $\epsilon=o(1)$ and hence $\gamma \in [q, 1]$.
%Moreover,  in the case of  $a-1 \ge \frac{1-\epsilon}{c} \sqrt{ \frac{mq}{1-q}}$, 
%$$
%1 \geq q+ (1-q) \geq q +  (1-\epsilon) \frac{ \sqrt{ mq(1-q) } }{c (a-1) } = \gamma.
%$$ 
%Therefore $\gamma \le 1$. \\

Next, we compute the value of $\Iprod{Z}{M}$. In view of \prettyref{eq:ZAvalue},
it suffices to evaluate $(a-1) \gamma$.  By the choice of $\gamma$, 
\begin{align}
(a-1)\gamma  = 
\begin{cases}
  (a-1)q + (1-\epsilon)   \frac{ \sqrt{ mq(1-q) } }{\kappa}  & a-1 \ge \frac{1-\epsilon}{\kappa} \sqrt{ \frac{mq}{1-q}} \\
  a-1 & 0 \le a-1 \le \frac{1-\epsilon}{\kappa} \sqrt{ \frac{mq}{1-q} }
 \end{cases}.  \label{eq:a_gamma}
\end{align}
Since  $\epsilon=o(1)$,  absorbing the factor $1-\epsilon$ in the last 
displayed equation into the definition of $\kappa$ given in \prettyref{eq:kappa} 
yields the  desired lower  bound to $V_m(a)$.

To finish the proof, we are left to  verify \prettyref{eq:psdcondition_b}. 
Since $\beta +\alpha R= \frac{a-1}{m(m-1)}$, it follows that
\begin{align}
\frac{1}{m}- \beta -\alpha q = \frac{1}{m}- \beta -\alpha R - \alpha(q-R) = \frac{m-a}{m(m-1)}  -   
O\left(  \frac { (a-1) \gamma c_m}{  m^3 \sqrt{q} }  \right),
\label{eq:psd1_bern}
\end{align}
 where we used the fact that $|R-q| \le c_m \sqrt{q} /m$  and $\alpha \le a \gamma /m^2R $ in the last equality.

Let $\alpha_0  = \frac{\gamma-q}{q(1-q)}\frac{(a-1)}{ m(m-1)}$. Next, we bound $|\alpha-\alpha_0|$ from the above. 
In view of $|R-q| \le c_m \sqrt{q} /m$ and $\gamma \ge q$, 
\begin{align*}
\bigg| \frac{\gamma-R}{R(1-R)} - \frac{\gamma-q}{q(1-q)} \bigg| &\le \bigg|  \frac{\gamma-R}{R(1-R)}  -  \frac{\gamma-q}{R(1-R)}   \bigg| + \bigg|   \frac{\gamma-q}{R(1-R)} -  \frac{\gamma-q}{q(1-q)} \bigg| \\
 & \le \frac{|R-q| } { R (1-R) }  + (\gamma-q) \frac{ | R-q| | R+q -1 | }{R(1-R) q(1-q)} \\
 & = O \left( \frac{c_m }{ m \sqrt{q} }  \right) +  O\left(  \frac{ c_m \gamma} { m q^{3/2} }  \right) 
 =O\left(  \frac{ c_m \gamma } { m q^{3/2} }  \right)
 \end{align*}
Consequently,  
\begin{align}
\alpha-\alpha_0 = O \left(  \frac{ (a-1)  \gamma c_m } { m^3 q^{3/2}}  \right). \label{eq:psd2_bern}
\end{align}
Combining  \prettyref{eq:psd1_bern} and \prettyref{eq:psd2_bern} yields that 
\begin{align}
& \frac{1}{m}- \beta -\alpha q -   \kappa \alpha   \sqrt{ mq (1-q)}  \nonumber \\
& = \frac{1}{m}- \beta -\alpha q -   \kappa \alpha_0   \sqrt{ m q (1-q)} -  (\alpha-\alpha_0)  \kappa \sqrt{mq (1-q)} \nonumber \\
& =  \frac{1}{m(m-1)} \left(  m- a  - \frac{ (a-1) (\gamma-q) } {q (1-q)} \kappa \sqrt{mq (1-q) } -  O \left(  \frac{ (a-1) \gamma c_m } {  \sqrt{m} q } \right)     \right ).  \label{eq:Zposcheck}
\end{align}
Thus, to verify \prettyref{eq:psdcondition_b}, it reduces to show the right hand side of the last displayed equation is negative.  
In view of  \prettyref{eq:defgamma2}, 
$$
\frac{ (a-1) (\gamma-q) } {q (1-q) } \kappa \sqrt{mq (1-q) } \le (1-\epsilon )m . 
$$
and 
\begin{align*}
\frac{ (a-1) c_m  \gamma }{  \sqrt{m} q }
 \le \frac{ (a-1)  c_m} {  \sqrt{m} } +    \frac{  c_m }{  \kappa \sqrt{q} }  
=o\left( \frac{m}{ \log( m\sqrt{q} ) } \right),
\end{align*}
where  the last equality because $c_m = \log (m \sqrt{q} )$ and the assumption that $a=o(m)$.
Combining the last two displayed equations and plugging in the definition of $\epsilon$ yield that 
\begin{align*}
& m- a  - \frac{(a-1) (\gamma-q) } {q (1-q)} \kappa \sqrt{mq (1-q) } -  O \left(  \frac{ (a-1) \gamma c_m } {  \sqrt{m} q } \right)   \\
& \ge  \frac{2m}{ \log \left( m \min \{ \sqrt{q}, 1/a \} \right) } -a -o\left( \frac{m}{ \log( m\sqrt{q} ) } \right) \ge 0. 
\end{align*}

Hence, it follows from \prettyref{eq:Zposcheck} that  \prettyref{eq:psdcondition_b}  holds. 
Consequently, $Z \succeq 0$ holds with high probability. 
This completes the proof of the lemma.

%\section{Minimum row-sum of a symmetric matrix}
\section{Proof of \prettyref{eq:minW}}
	\label{app:minW}
Note that for each $i\in C$, $X_i \triangleq \sum_{j\in C}W_{ij}$ is distributed according to $\calN(0,K-1)$ but not independently. Below we use the Chung-Erd\"os inequality \cite{chung-erdos}:
\begin{equation}
	\prob{\bigcup_{i=1}^K  A_i }  \geq \frac{\pth{\sum_{i=1}^K \prob{A_i}  }^2}{\sum_{i=1}^K \prob{A_i} + \sum_{i\neq j} \prob{A_iA_j}}.
	\label{eq:chung-erdos}
\end{equation}
%Let $s = \sqrt{2 K \log K}$.
For any $i \neq j$, $\prob{X_i \leq -s, X_j \leq -s} = \Expect[Q^2((s+Z)/\sqrt{K-2})] \triangleq \Expect[g^2(Z)]$ where $Z \sim \calN(0,1)$, and $\prob{X_i \leq -s} = \Expect[g(Z)] = Q(s/\sqrt{K-1})$. Therefore
\begin{align*}
	& ~ 	\prob{X_i \leq -s, X_j \leq -s} - \prob{X_i \leq -s}\prob{X_j \leq -s} \nonumber\\
= & ~ 	\var(g(Z)) = \expect{\pth{Q\pth{\frac{s+Z}{\sqrt{K-2}}}  - Q\pth{\frac{s}{\sqrt{K-1}}}  }^2}	\\
\leq & ~ 	Q(s/4) + \varphi\pth{\frac{3s/4}{\sqrt{K-2}}}^2 \expect{\pth{\frac{s+Z}{\sqrt{K-2}}  - \frac{s}{\sqrt{K-1}}}^2}	\\
\leq & ~ 	\exp(-s^2/8) + \exp\pth{-\frac{9 s^2/16}{K-2}} \qth{\frac{1}{K-2} + \pth{\frac{1}{\sqrt{K-1}} - \frac{1}{\sqrt{K-2}}  }^2 s^2}.
\end{align*}
Let $s =  \sqrt{K-1} (\sqrt{2 \log K}  - \log \log K / \sqrt{2\log K})$. Then $\prob{X_i \leq -s} = \Theta(\sqrt{\log K}/K)$ and
\[
\prob{X_i \leq -s, X_j \leq -s} - \prob{X_i \leq -s} \prob{X_j \leq -s}   = O(K^{-17/8}).
\]
 Applying \prettyref{eq:chung-erdos}, we conclude that $\prob{\min_{i\in C}  X_i \leq -s }  \geq 1 - O(1/\sqrt{\log K})$.

\section{Proof of \prettyref{eq:minA_Bern}}
\label{app:minA}
%%% The argument is based on removing T trick %%%
We show $\prob{{\cal E}_1} \to 1$.  In this section, by a slight abuse of notation,
let $e(i, S)=\sum_{j\in S} A_{ij}$. 
The proof is complicated by the fact
the random variables $e(i,C^*)$ for $i\in C^*$ are not independent.
The trick is to fix $C^*$ and a small set $T \subset C^*$  with $|T|=K_o$. 
Then for $i\in T$, $e(i,C^*) = e(i,C^*\backslash T) + e(i,T),$  and we can
make use of the fact that the random variables $(e(i,C^*\backslash T) : i\in T)$
are independent,  $(e(i,C^*\backslash T): i\in T)$ is independent of
$(e(i,T): i\in T),$  and, with high probability, at least
half of the random variables in $e(i,T)$ are not unusually large.
(The same trick is used for proving Theorem 6 in \cite{HajekWuXu_one_info_lim15}.)

Suppose for convenience of notation that
$C^*$ consists of the first $K$ indices, and $T$ consists
of the first $K_o$ indices:  $C^*=[K]$ and $T=[K_o]$.
Let $T' =\{i\in T  : e(i,T) \leq  (K_o-1) p + 6\sigma  \},$
Since\footnote{In case $T'=\emptyset$ we use the usual convention that
the minimum of an empty set of numbers is $+\infty$.}
$$
\min_{i\in C^*} e(i,C^*)  \leq  \min_{i \in T'}  e(i,C^*)   \leq   \min_{i\in T'}  e(i,C^*\backslash T) + (K_o-1) p + 6\sigma,$$
it follows that
$$
\prob{E_1}\geq \prob{ \min_{j \in T'}  e(j ,C^*\backslash T)   \leq (K-K_o) \tau'_1 }.
$$
We show next that  $\prob{|T'|  \geq \frac{K_o}{2} }\to 1$ as $n \to \infty$.
For $i \in T,$   $e(i,T)=X_i+Y_i$ where   $X_i=e(i,\{1, \ldots , i-1\})$ and $Y_i=e(i, \{i+1, \ldots , K_o\})$.
The $X$'s are mutually independent, and the $Y$'s are also mutually independent, and  $X_i$ has the
$\Binom(i-1,p)$ distribution and $Y_i$ has the  $\Binom(K_o-i,p)$ distribution.   Then
$\expect{X_i}=(i-1)p$ and  $\var(X_i) \leq \sigma^2$.    Thus, by the Chebyshev inequality,
$ \prob{X_i \geq  (i-1)p + 3\sigma }\leq \frac{1}{9} $
for all $i\in T$.   Therefore, $|\{i : X_i \leq  (i-1)p + 3\sigma  \}|$ is stochastically at least as
large as a $\Binom\left(K_o, \frac{8}{9}\right)$
random variable,   so that,  $\prob{ |\{i : X_i \leq  (i-1)p + 3\sigma    \}|  \geq \frac{3K_o}{4} }\to 1$ as $K_o\to \infty$
(which happens as $n\to \infty)$.
Similarly,  $\prob{ |\{i : Y_i \leq  (K_o-i)p +3 \sigma  \}|  \geq \frac{3K_o}{4} }\to 1$.
If at least 3/4 of the $X$'s are small and at least 3/4 of the $Y$'s are small, it follows that at least
1/2 of the $e(i,T)$'s for $i \in T$  are small.  Therefore,  $\prob{|T'|  \geq \frac{K_o}{2} }\to 1$ as claimed.

The set $T'$ is independent of  $(e(i,C^*\backslash T): i\in T)$  and those variables each
have the $\Binom(K-K_o,p)$ distribution.  Using the tail lower bound \prettyref{eq:ChernoffBinomp},
we have
\begin{align*}
\prob{E_1} & \geq   1- \expect{ \prod_{j \in T'} \prob{e(j, C^\ast\backslash T ) \geq K\tau^* - K_op - 6\sigma  }  \bigg|  |T'|\geq \frac{K_o}{2} } - \prob{|T'| < \frac{K_o}{2}}  \\
& \ge 1- \exp \left(  -  Q \left( \sqrt{2 (K-K_o)  d(\tau'_1  \| p )}  \right) K_o/2  \right) - o(1).
\end{align*}
By definition of $\tau'_1$ and the convexity of divergence,
$  d(\tau'_1 \| p) \le (1- \delta) d( \tau_1 \|p )  $,
it follows that
\begin{align*}
Q \left( \sqrt{2( K- K_o)  d(\tau'_1   \| p )}  \right) K_o/2  & \ge
Q \left( \sqrt{2( K- K_o)(1-\delta )   d( \tau_1  \| p )}  \right) K_o/2  \\
& \ge Q \left( \sqrt{2 (1-\delta  )  \log K}  \right) K_0/2 \ge  \frac{\sqrt{ \log K} }{2},
\end{align*}
and $\prob{E_1} \to 1$.

\end{document}